\def\preprint{1}
\preprint1
\newtheorem{theorem}{Theorem}
\newtheorem{lemma}{Lemma}     
\newtheorem{corollary}{Corollary}
\newtheorem{proposition}{Proposition}
\theoremstyle{definition}
\newtheorem{assumption}{Assumption}
\newtheorem{remark}{Remark}
\newcommand{\N}{\mathbb{N}}
\newcommand{\R}{\mathbb{R}}
\newcommand{\E}{\mathbb{E}}
\renewcommand{\S}{\mathbb{S}}
\newcommand{\Prob}{\mathbb{P}}
\newcommand{\Q}{\mathbb{Q}}
\newcommand{\C}{\mathbb{C}}
\newcommand{\mb}{\mathbf}
\newcommand{\cS}{\mathcal{S}}
\newcommand{\cU}{\mathcal{U}}
\newcommand{\cX}{\mathcal{X}}
\newcommand{\cZ}{\mathcal{Z}}
\newcommand{\indic}{\mathds{1}}
\newcommand{\sign}{\mathrm{sign}}
\begin{document}

\twocolumn[

\aistatstitle{A Neural Network Algorithm for KL Divergence Estimation with Quantitative Error Bounds}

\aistatsauthor{ Mikil Foss \And Andrew Lamperski}

\aistatsaddress{ University of Minnesota \And  University of Minnesota} ]

\author{%
Mikil Foss \\
  Mathematics\\
  University of Minnesota\\
  Minneapolis, MN 55455 \\
  \texttt{foss0379@umn.edu} 
   \And
  Andrew Lamperski\\
  Electrical and Computer Engineering\\
  University of Minnesota\\
  Minneapolis, MN 55455\\
  \texttt{alampers@umn.edu}
}

\begin{abstract}
Estimating the Kullback-Leibler (KL) divergence between random variables is a fundamental problem in statistical analysis. For continuous random variables, traditional information-theoretic estimators scale poorly with dimension and/or sample size.  To mitigate this challenge, a variety of methods have been proposed to estimate KL divergences and related quantities, such as mutual information, using neural networks. The existing theoretical analyses show that neural network parameters achieving low error exist.  However, since they rely on non-constructive neural network approximation theorems, they do not guarantee that the existing algorithms actually achieve low error. In this paper, we propose a KL divergence estimation algorithm using a shallow neural network with randomized hidden weights and biases (i.e. a random feature method). We show that with high probability, the algorithm achieves a KL divergence estimation error of $O(m^{-1/2}+T^{-1/3})$, where $m$ is the number of neurons and $T$ is both the number of steps of the algorithm and the number of samples. 
\end{abstract}

\section{Introduction}

The Kullback-Liebler (KL) divergence is a common measure of differences between random variables. KL divergence and related information theoretic measured are commonly estimated for applications such as econometrics \cite{golan2008information}, neuroscience \cite{timme2018tutorial}, and ecology \cite{ulanowicz2001information}. While methods to estimate KL divergence using neural networks are well-known, \cite{belghazi2018mutual,nowozin2016f}, the estimation error of the existing algorithms is not quantified. The paper presents an algorithm using random feature neural networks with ReLU activations and gives quantitative error guarantees for its performance.

\paragraph*{Related Work}

KL divergence estimation has a long history which is reviewed in \cite{verdu2019empirical}. For continuous random variables, common approaches are based on quantization and density  estimation. 
Motivated by limitations in the scaling of these methods with respect to dimension and/or sample size, optimization-based methods emerged \cite{nguyen2010estimating,nowozin2016f,belghazi2018mutual,song2020Understanding}. These methods utilize variational characterizations of the KL divergence (and more general divergence measures) to reduce the estimation problem to functional optimization problems. 

The algorithm in this paper is based on the Mutual Information Neural Estimation (MINE) method from \cite{belghazi2018mutual}. The MINE method uses neural networks to estimate KL divergence from data, which gives an estimate of mutual information as a special case.
The work in \cite{belghazi2018mutual,sreekumar2022neural} quantifies how the error in the estimate converges to $0$, provided that the optimization problem can be solved. 
However, since the optimization problem from \cite{belghazi2018mutual} is non-convex, there is no guarantee that the gradient-based algorithm proposed in \cite{belghazi2018mutual} actually solves the problem.  

Other papers related to MINE methods include \cite{hu2024infonet,mirkarimi2022deep,mirkarimi2023benchmarking,mirkarimi2022neural,tsur2023neural,tsur2024rate,tsur2025neural,lin2019data,choi2022combating,chan2019neural,molavipour2020conditional,molavipour2021neural,molavipour2021neural-conditional}.


\paragraph*{Contribution.}
The main contribution is the design and analysis of a MINE-type algorithm for KL divergence estimation using a shallow random feature ReLU network. We show that with high probability, the algorithm achieves a KL divergence estimation error of $O(m^{-1/2}+T^{-1/3})$, where $m$ is the number of neurons and $T$ is both the number of steps of the algorithm and the number of samples. 

As a secondary contribution, in order to prove the error bounds
, we extend approximation results from \cite{lamperski2024approximation,lamperski2024functiongradientapproximationrandom}, which bound the worst-case error for function approximation with random feature ReLU networks. In particular, we show how to eliminate the  need for  affine features. See Subsection~\ref{ss:approximation} for more discussion on related approximation results.


\section{A KL Divergence Estimator with Guaranteed Error Bounds}

\paragraph{\bf Notation:}
$\R$ is the set of real numbers, $\C$ is the set of complex numbers, and $\N$ is the set of non-negative integers. 
For a vector, $v$, its $p$-norm is denoted by $\|v\|_p$ for $p\in [1,\infty]$. If $f:\mathcal{D}\to \C$ its $L^p$-norm is denoted by $\|f\|_{L^p(\mathcal{D})}$ for $p\in [1,\infty]$.
If $\Theta$ is a convex set, $\Pi_\Theta$ is the projection onto $\Theta.$ If $\cS$ is a set, $\partial \cS$ denotes is boundary and $\mathrm{int}(\cS)$ denotes its interior.   If $M$ is a matrix or vector, then $M^\top$ is its transpose.  Random variables are denoted as bold symbols. $\E[\bm{x}]$ denotes the expected value of $\bm{x}$. 

\subsection{Background: Mutual Information and KL Divergences}

\paragraph{\bf Kullback-Liebler Divergence.}
Let $\Prob$ and $\Q$ be probability measures over a space $\Omega$, such that $\Q$ is absolutely continuous with respect to $\Prob$.
If $\bm{x}$ is distributed according to $\Prob$ and $\bm{y}$ is distributed according to $\Q$, then the Kullback-Liebler (KL) divergence is given by 
$$
D_{KL}(\Prob\| \Q) = \E\left[\log\left(\frac{d\Prob}{d\Q}(\bm{x})\right)\right]
$$

The Donsker-Varadhan variational characterization gives an expression for the KL divergence as an optimization over functions:
$$D_{KL} (\mathbb{P} \| \mathbb{Q}) = \sup_{T: \Omega \to \R} \left(\E[T(\bm{x})] - \log(\E[e^{T(\bm{y})}]))\right).$$
For any constant, $\xi$, an optimal solution is given by $T(x)=\log\left(\frac{d\Prob}{d\Q}(x) \right)+\xi$. 

\paragraph{\bf Mutual Information.}

Let $\bm{a}$ and $\bm{b}$ be random variables over spaces $\mathcal{A}$ and $\mathcal{B}$, respectively, such that $(\bm{a},\bm{b})$ has joint distribution $\Prob_{AB}$, $\bm{a}$ has distribution $\Prob_A$ and $\bm{b}$ has distribution $\Prob_B$. When the joint distribution, $\Prob_{AB}$ is absolutely continuous with respect to the product distribution, $\Prob_{A}\otimes\Prob_B$, the mutual information is defined by:
$$
I(\bm{a};\bm{b})=D_{KL}(\Prob_{AB}\| \Prob_{A}\otimes \Prob_{B}).
$$

In particular, if $\mathcal{A}\times \mathcal{B}=\Omega$ is a subset of $\R^n$ and  $\Prob_{AB}$ has a density with respect to the Lebesgue measure, denoted by $p_{AB}$, then setting $x=(a,b)$ gives:
\begin{subequations}
\label{eq:densities}
\begin{align}
p_{A}(a)&=\int_{\mathcal{B}}p_{AB}(a,b)db\\
p_{B}(b)&=\int_{\mathcal{A}}p_{AB}(a,b)da\\
\frac{d\Prob_{AB}}{d\left(\Prob_A\otimes \Prob_B \right)}(x)&=\frac{p_{AB}(a,b)}{p_{A}(a)p_B(b)}.
\end{align}
\end{subequations}

\paragraph{\bf MINE Methods.}
MINE stands for Mutual Information Neural Estimator \cite{belghazi2018mutual}. The idea behind MINE methods is to use a neural network, $\psi(x,\theta)$, with parameters $\theta$, to approximate $T(x)$ in the Donsker-Varadhan characterization. Namely, let $\Prob=\Prob_{AB}$ and $\Q=\Prob_{A}\otimes \Prob_B$, so that $\bm{x}$ corresponds to $(\bm{a},\bm{b})$ drawn according to their joint distribution, while $\bm{y}$ corresponds to $(\hat{\bm{a}},\hat{\bm{b}})$ where $\hat{\bm{a}}$ and $\hat{\bm{b}}$ are independent random variables drawn according to $\Prob_A$ and $\Prob_B$, respectively. Then, as long as there are neural network parameters, $\theta$, and a constant $\xi$ such that $\psi(x,\theta)\approx \log\left(\frac{d\Prob}{d\Q}(x) \right)+\xi$, we will have
\begin{align}
\nonumber
I(\bm{a};\bm{b})&=D_{KL}(\Prob\|\Q)\\
\label{eq:MINE}
&\approx \max_{\theta}\left( \E[\psi(\bm{x},\theta)] - \log(\E[e^{\psi(\bm{y},\theta)}]))\right).
\end{align}

When $\log\left(\frac{d\Prob}{d\Q} \right)$ is sufficiently smooth, classical approximation theorems, such as described in \cite{pinkus1999approximation}, guarantee that good neural network approximations exist. However, the current theory of MINE algorithms does not explain whether the algorithms used in practice actually find good approximations. The challenge arises from two issues: 1) If $\psi(x,\theta)$ is a deep neural network,  then the optimization problem from \eqref{eq:MINE} is non-convex. 2) The logarithm does not commute with differentiation:
\begin{align*}
 \nabla_{\theta}\log(\E[e^{\psi(\bm{y},\theta)}]))&=\frac{1}{\E[e^{\psi(\bm{y},\theta)}]}\E\left[
e^{\psi(\bm{y},\theta)}\nabla_{\theta} \psi(\bm{y},\theta)
\right] \\
&\ne \E\left[\nabla_{\theta} \log e^{\psi(\bm{y},\theta)}\right],
\end{align*}
 so that simple gradient-based algorithms lead to biases.

\subsection{Algorithm}
\label{ss:Algorithm}
In this paper, we will use neural networks with a single hidden layer:
\begin{equation}
\label{eq:randomFeature}
\bm{\psi}(x,\theta)=\bm{\phi}(x)^\top \theta=\sum_{i=1^m}c_i\sigma(\bm{w}_i^\top x+\bm{b}_i),
\end{equation}
where $\sigma(t)=\max\{0,t\}$ is the ReLU activation function, the weights and biases $(\bm{w}_i,\bm{b}_i)$ are drawn randomly in advance, and $\theta=\begin{bmatrix} c_1 & \cdots & c_m\end{bmatrix}^\top$ is the parameter vector. In other words, we are using a random feature method, with feature vector 
$$\bm{\phi}(x)=\begin{bmatrix} \sigma(\bm{w}_1^\top x+ \bm{b}_1) & \cdots & \sigma(\bm{w}_m^\top x+ \bm{b}_m)\end{bmatrix}^\top.
$$

With this restricted type of network, the negative of the objective from \eqref{eq:MINE} can be expressed as:
\begin{equation}
\label{eq:objective}
\bm{f}(\theta)=-\E_{\Prob}[\bm{\phi}(\bm{x})^\top\theta] +\log\left(\E_{\Q}[e^{\bm{\phi}(\bm{y})^\top \theta}] \right),
\end{equation}
where $\E_{\Prob}$ corresponds to taking expectation over  $\bm{x}$ while $\E_{\Q}$ corresponds to taking expectations over $\bm{y}$, keeping the weights and baises fixed.  
Note that $\bm{f}$ is a random function, since it depends on the random choice of weights and biases in the construction of $\bm{\phi}$. 

Let $\bm{Q}_{\theta}$ denote the probability distribution over $\Omega$ with density with respect to $\Q$ given by $\frac{d\bm{Q}_{\theta}}{d\Q}(y)=\frac{1}{\E_{\Q}[e^{\bm{\phi}(\bm{y})^\top \theta}]}e^{\bm{\phi}(y)^\top \theta}.$ Note that $\bm{Q}_{\theta}$ is a random measure, since it depends on the random function, $\bm{\phi}$. Differentiating gives:
\begin{subequations}
\begin{align}
\label{eq:exactGradient}
\nabla_{\theta}\bm{f}(\theta)&
= -\E_{\Prob}[\bm{\phi}(\bm{x})]+\E_{\bm{Q}_{\theta}}[\bm{\phi}(\bm{y})] \\
\label{eq:hessian}
\nabla^2_{\theta}\bm{f}(\theta)&=\\
\nonumber
&\hspace{-16pt}\E_{\bm{Q}_{\theta}}\left[\left(\bm{\phi}(\bm{y})-\E_{\bm{Q}_{\theta}}[\bm{\phi}(\bm{y})] \right) \left(\bm{\phi}(\bm{y})-\E_{\bm{Q}_{\theta}}[\bm{\phi}(\bm{y})] \right)^\top\right].
\end{align}
\end{subequations}

From \eqref{eq:hessian}, we see that $\bm{f}$ is convex. 

Let $\bm{\zeta}_k=(\bm{x}_k,\bm{y}_k)$ be independent samples from $\Prob\otimes \Q$. Let $\Theta$ be a compact box, to be defined later. Our algorithm is the approximate gradient descent method given by:
\begin{subequations}
\label{eq:algorithm}
\begin{align}
\bm{\theta}_{k+1}&=\Pi_{\Theta}\left(\bm{\theta}_k+\alpha r\left(\bm{\phi}(\bm{x}_k)-\frac{1}{\bm{z}_k} e^{\bm{\phi}(\bm{y}_k)^\top \bm{\theta}_k} \bm{\phi}(\bm{y}_k)\right) \right)\\
\bm{z}_{k+1}&=\bm{z}_k+\alpha\left(e^{\bm{\phi}(\bm{y}_k)^\top \bm{\theta}_k} -\bm{z}_k\right).
\end{align}
\end{subequations}
Here $\alpha>0$ is the step size for $\bm{z}_k$, $r>0$, and $\Pi_{\Theta}$ is the projection onto $\Theta$. 
The variable $\bm{z}_k$ is used to approximate the value $\E_{\Q}[e^{\bm{\phi}(\bm{y})^\top \theta}]$  in the denominator of the gradient calculation. 

Each iteration requires a single sample $\bm{\zeta}_k\in\R^{2n}$. Each entry of $\bm{\phi}(\bm{x}_k)$ and $\bm{\phi}(\bm{y}_k)$ requires $O(n)$ operations. There are $m$ entries each in $\bm{\phi}(\bm{x}_k)$ and $\bm{\phi}(\bm{y}_k)$, so that their computations require $O(mn)$ operations. The inner products require $O(m)$ operations, as  does the projection onto a box constraint. Thus, each iteration of the algorithm requires $O(mn)$ operations, where $n$ is the dimension of the random variables, $\bm{x}_k$ and $\bm{y}_k$, and $m$ is the number of neurons.

As discussed above, $\bm{f}$ is convex. Thus,
the choice of the random feature approach eliminates the problem of non-convexity that arises when using deep networks, or even just two-layer networks with trained hidden layer. The remaining challenges to analyze the algorithm become:
\begin{itemize}
\item Guarantee that with high probability, there is a $\theta$ in an appropriate set $\Theta$ such that $\bm{\phi}(x)^\top \theta \approx \log\left(\frac{d\Prob}{d\Q}(x) \right)+\xi$ for all $x\in\Omega$,
\item Bound the effect caused by using biased  gradient estimates.
\end{itemize}

\subsection{A Random Feature Approximation Result}
\label{ss:approximation}
Here we present a result on approximating smooth functions with random features. 

If $g:\R^n \to \C$, it is related to its Fourier transform $\hat g:\R^n\to \C$ by
\begin{subequations}
\label{eq:ftRelations}
\begin{align}
  \label{eq:ft}
  \hat g(\omega)&=\int_{\R^n}e^{-j2\pi \omega^\top x} g(x)dx\\
  \label{eq:ift}
  g(x)&=\int_{\R^n}e^{j2\pi \omega^\top x}\hat g(\omega)d\omega.
\end{align}
\end{subequations}
When $g\in L^1(\R^n)$ and $\hat g \in L^1(\R^n)$, these relations hold for almost all $\omega$ and $x$  in $ \R^n$.

Our approximation result extends work in \cite{lamperski2024approximation,lamperski2024functiongradientapproximationrandom}, which requires a bound on the following norm
\begin{equation}
\label{eq:Fnorm}
\|g\|_{F^{k}}=\mathrm{ess\:sup}_{\omega\in\R^n}|\hat g(\omega)|\left(1+(2\pi\|\omega\|_2)^k\right),
\end{equation}
where the essential supremum is taken with respect to the Lebesgue measure. This norm measures the smoothness of $g$, in at a bound on $\|g\|_{F^k}$ gives  a bound on $g$ and all of the derivatives of $g$ up to order $k-2$. 

The norm $\|\cdot\|_{F^k}$ was never defined explicitly in \cite{lamperski2024approximation,lamperski2024functiongradientapproximationrandom}, but an assumption equivalent to $\|g\|_{F^{k}}<\infty$ was used. Here, we also deviate  from the presentation in \cite{lamperski2024approximation,lamperski2024functiongradientapproximationrandom} by including a factor of $2\pi$ in the definition. This factor, in combination with the particular form of the Fourier  transform from \eqref{eq:ftRelations} leads to simpler expressions for the constants.

Note that $\|\cdot \|_{F^k}$ is a norm for  all $k\ge 1$. It is closely related to the Barron norm / spectral norm used in \cite{klusowski2016risk,barron1993universal}. Lemma~\ref{lem:sobolevBound} in Appendix~\ref{app:approximation} shows how $\|g\|_{F^k}$ can be bounded in terms of Sobolev norms, which give a more standard measure of smoothness.

Let $\S^{n-1}=\{x\in\R^n|\|x\|_2=1\}$ denote the $n-1$-dimensional unit sphere.
Let $A_{n-1} := \frac{2\pi^{n/2}}{\Gamma(n/2)},$ which is the surface area of the $(n-1)$-dimensional unit sphere. Let $B_R$ denote the Euclidean ball of radius $R$ around the origin. 

The proposition below gives worst-case approximation errors for approximating smooth functions with random features. Related work from \cite{lamperski2024approximation,lamperski2024functiongradientapproximationrandom} required affine terms in the neural network output (i.e. skip connections). For this paper, removal of the affine terms enables definition of a constraint set, $\Theta$, with diameter of $O(m^{-1/2})$, where $m$ is the number of neurons. This shrinking diameter simplifies the algorithmic analysis. (See Remark~\ref{rem:noAffineImprovement} for further discussion.) The proposition is proved  in Appendix~\ref{apss:approximationPf}.

\begin{proposition}
\label{prop:generalApproximation}
  For $m\ge 1$ and $R>0$,  let $\bm{w}_1$, \ldots, $\bm{w}_m$ and $\bm{b}_1$, \ldots, $\bm{b}_m$ be independent random variables  such that $\bm{w}_i$ are uniformly distributed on $\S^{n-1}$ and $\bm{b}_i$ are uniformly distributed on $[-R,R].$ 
    If $g:\R^n\to \R$ satisfies $\|g\|_{F^{n+3}}<\infty$, then
    there are coefficients $\bm{c}_1,\ldots,\bm{c}_m$ with
    \begin{align*}
      |\bm{c}_i| & \le \frac{ \left(2R + 4+3\sqrt{n}  + 4R^{-1}\right)\frac{2A_{n-1}}{(2\pi)^n}
      \|g\|_{F^{n+3}}}{m}
    \end{align*}
    such that for all $\delta \in (0,1)$, with probability at least $1-\delta$, the neural network approximation
    \begin{equation*}
    \bm{g}_N(x)=\sum_{i=1}^m\bm{c}_i \sigma(\bm{w}_i^\top x+\bm{b}_i)
    \end{equation*}
    satisfies
    \begin{gather*}
    \|\bm{g}_N-g\|_{L^{\infty}(B_R)}\le \frac{1}{\sqrt{m}}\left(\sqrt{n}+\sqrt{\log(\delta^{-1})} \right)\cdot \\
\left(
16R^2+32R+21\sqrt{n}R+36
\right)\frac{2A_{n-1}}{(2\pi)^n}\|g\|_{F^{n+3}}.
    \end{gather*}
\end{proposition}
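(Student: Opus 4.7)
I would first derive an integral representation
\[ g(x) = \int_{\S^{n-1}} \int_{-R}^{R} \kappa(w,b) \, \sigma(w^\top x + b) \, db \, dw \]
(with $dw$ the surface measure on the sphere) in which the kernel $\kappa$ is uniformly bounded, then view $\bm{g}_N$ as a Monte Carlo estimator of this integral, and finally combine pointwise Hoeffding bounds with a covering argument over $B_R$ to get a uniform error bound. Starting from Fourier inversion $g(x) = \int_{\R^n} \hat g(\omega) e^{j 2\pi \omega^\top x} d\omega$ and switching to polar coordinates $\omega = s w/(2\pi)$ introduces the Jacobian $s^{n-1}/(2\pi)^n$ and reduces the task to representing $\cos(s w^\top x)$ and $\sin(s w^\top x)$ on $[-R,R]$ as integrals of $\sigma(w^\top x + b)$ against $b \in [-R, R]$. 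Using $\sigma''(t) = \delta(t)$ distributionally, the choice $h(b) = -s^2 \cos(sb)$ (and the sine analogue) reproduces $\cos(st)$ on $[-R,R]$ up to an affine remainder $\alpha(s) t + \beta(s)$. Tracking the powers of $s$ through the two integrations by parts and the boundary terms shows the resulting kernel carries $s^{n+3}$ in the worst case, which is exactly why the hypothesis $\|g\|_{F^{n+3}} < \infty$ suffices for absolute integrability in $s$.

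\textbf{Elimination of the affine remainder.} This step, which is the main deviation from the prior work of \cite{lamperski2024approximation,lamperski2024functiongradientapproximationrandom}, is the hardest part of the proof. I plan to exploit the $(w,b) \mapsto (-w,-b)$ symmetry of the uniform measure on $\S^{n-1} \times [-R,R]$: the identities $\sigma(w^\top x + b) + \sigma(-w^\top x - b) = |w^\top x + b|$ and $\sigma(w^\top x + b) - \sigma(-w^\top x - b) = w^\top x + b$ allow both a constant and a generic affine function on $B_R$ to be realized as integrals of ReLUs against kernels on $\S^{n-1} \times [-R,R]$ that are, respectively, even or odd under the antipodal map. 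Absorbing the affine remainder from the previous step into such integrals yields a pure ReLU integral representation with $\|\kappa\|_{L^\infty}$ bounded by a constant times $(2R + 4 + 3\sqrt n + 4 R^{-1}) \frac{A_{n-1}}{(2\pi)^n}\|g\|_{F^{n+3}}$; the polynomial factor reflects the contribution of the original ReLU kernel ($\sim R$), the constant part ($\sim 1$), and the $w$-dependent affine part ($\sim \sqrt n$ from spherical averaging and $\sim R^{-1}$ from inverting $w^\top x$). Care is needed to verify that the absorption does not cost an additional derivative, so that $\|g\|_{F^{n+3}}$ rather than $\|g\|_{F^{n+4}}$ is enough.

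\textbf{Monte Carlo and covering.} With this representation in hand, setting $\bm{c}_i = \frac{2R A_{n-1}}{m} \kappa(\bm{w}_i, \bm{b}_i)$ converts the integral into an expectation under the uniform probability measure, so $\E[\bm{g}_N(x)] = g(x)$ and the $L^\infty$ bound on $\kappa$ gives the stated coefficient bound. For each fixed $x \in B_R$ the i.i.d.\ summands are bounded in magnitude by (coefficient bound) times $2R$, so Hoeffding yields $|\bm{g}_N(x) - g(x)| \lesssim \|g\|_{F^{n+3}} \sqrt{\log(1/\delta)/m}$ with probability at least $1-\delta$. To upgrade to an $L^\infty(B_R)$ bound I take an $\epsilon$-net of $B_R$ of cardinality at most $(C R/\epsilon)^n$, union-bound over it (which turns $\sqrt{\log(1/\delta)}$ into $\sqrt{n \log(CR/\epsilon) + \log(1/\delta)}$, producing the $\sqrt n$ term in the final bound), and use Lipschitz continuity of both $\bm{g}_N$ and $g$ (with constants controlled uniformly via $\sum_i |\bm{c}_i|$ and via a Fourier-based bound on $\|\nabla g\|_\infty$ respectively) to interpolate from the net to all of $B_R$. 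Choosing $\epsilon \asymp 1/\sqrt m$ balances discretization against concentration and yields the stated $\frac{1}{\sqrt m}(\sqrt n + \sqrt{\log \delta^{-1}})$ rate; matching the precise polynomial $16 R^2 + 32 R + 21 \sqrt n R + 36$ in the final bound is then a bookkeeping exercise.
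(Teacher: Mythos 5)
Your first two steps track the paper's argument closely: the paper indeed derives a pure-ReLU integral representation (Lemma~\ref{lem:integralNoAffine}) by starting from the affine-remainder representation of \cite{lamperski2024approximation} and then absorbing the constant and linear parts using exactly the antipodal-symmetry identities you describe ($\sigma(t)-\sigma(-t)=t$ combined with the change of variables $(\hat w,\hat b)=(-w,-b)$; see Corollaries~\ref{cor:constRep} and~\ref{cor:linRep}), and it chooses $\bm{c}_i = \tfrac{2RA_{n-1}}{m}\ell(\bm{w}_i,\bm{b}_i)$ to make $\bm g_N$ an unbiased Monte Carlo estimator. The coefficient bound and the use of $\|g\|_{F^{n+3}}$ also match.

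The gap is in your final ``Monte Carlo and covering'' step. A fixed $\epsilon$-net plus union bound plus Lipschitz interpolation cannot produce the claimed rate. Since the Lipschitz constants of both $g$ and $\bm g_N$ are $O(1)$ (not shrinking in $m$), interpolation from the net forces $\epsilon \lesssim m^{-1/2}$, whence the net has cardinality $\sim (R\sqrt m)^n$ and the union bound yields
\[
\|\bm g_N - g\|_{L^\infty(B_R)} \lesssim \frac{1}{\sqrt m}\sqrt{n\log(R\sqrt m)+\log\delta^{-1}},
\]
which carries a spurious $\sqrt{\log m}$ factor and also does not split additively into $\sqrt{n}+\sqrt{\log\delta^{-1}}$. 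The paper avoids both problems by a two-stage argument: (i) a functional (bounded-differences) Hoeffding inequality applied to $\bm z = \sup_{x\in\cS}|\bm\gamma(x)|$, which isolates the $\sqrt{\log\delta^{-1}/m}$ term as concentration of $\bm z$ around its mean, and (ii) a chaining bound (the variant of Dudley's entropy integral in Lemma~\ref{lem:dudley}) to bound $\E[\bm z]$ by $O(\sqrt{n/m})$ with no logarithm in $m$, since $\int_0^\epsilon \sqrt{\log N(t)}\,dt$ is finite even though any single-scale union bound is not. The paper explicitly flags this chaining step as one of the refinements over the prior random-feature bounds, so your proposal as written would prove a strictly weaker statement. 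To repair it you would need to replace the single-scale net with a multi-scale chaining argument and to decouple the deviation-from-mean term from the $\sqrt n$ term via a McDiarmid/functional-Hoeffding inequality on the supremum.
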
   

\begin{remark}
Approximation error bounds for random feature neural networks have been derived for a variety of metrics
\cite{gonon2023approximation,gonon2023random,neufeld2023universal,hsu2021approximation,xu2024priori}. For our purposes, it is useful to have bounds on $L^{\infty}$ errors with high probability, as given in \cite{lamperski2024approximation,lamperski2024functiongradientapproximationrandom,salanevich2023efficient}, with the current tightest bounds from \cite{lamperski2024approximation,lamperski2024functiongradientapproximationrandom}.
\end{remark}

\subsection{Main Result: Error Bounds}

\begin{assumption}
\label{as:support}
$\Prob$ and $\Q$ are supported on $\Omega\subset B_R$, where $B_R$ is ball of radius $R$ around the origin.
\end{assumption}

\begin{assumption}
\label{as:smooth}
There is a constant $\xi$ and an extension $g:\R^n\to \R$ of the function $\left(\log\left(\frac{d\Prob}{d\Q} \right)+\xi\right):\Omega\to \R$, and a number $\rho >0$ such that  
$\left\|g\right\|_{F^{n+3}}\le \rho$.
\end{assumption}

By an extension, we mean $g(x)$ is defined for all $x\in\R^n$ and that $g(x)=\log\left(\frac{d\Prob}{d\Q}(x)\right)+\xi$ for all $x\in\Omega$. The extension is needed because the norm, $\|\cdot\|_{F^{n+3}}$ is defined via the Fourier transform, which requires the function to be defined over all of $\R^n$. For reasonably simple domains, $\Omega$, e.g. convex sets, Lipschitz domains, smooth domains, classical results on Sobolev spaces guarantee that suitable extensions exist. See~\cite{adams2003sobolev,stein1970singular}. 

Motivated by Assumption~\ref{as:smooth} and Proposition~\ref{prop:generalApproximation}, we define the constant factors:
\begin{align}
\label{eq:estErrorBound}
\kappa:&=
\left(
16R^2+32R+21\sqrt{n}R+36
\right)\frac{2A_{n-1}}{(2\pi)^n}
\rho \\
C_{\Theta} &:=\left(2R + 4+3\sqrt{n}  + 4R^{-1}\right)\frac{2A_{n-1}}{(2\pi)^n}
      \rho.
\end{align}
Here, $\kappa$ bounds the estimation error over $B_R$ of any function $h$ with $\|h\|_{F^{n+3}}\le \rho$, while $C_{\Theta}/m$ bounds the size or required coefficients.


Define the constraint set for $m\ge 1$ by
\begin{equation}
\label{eq:constraints}
\Theta = \left\{\begin{bmatrix} c_1 & \cdots & c_m \end{bmatrix}^\top \middle| 
 |c_i| \le 
 \frac{C_{\Theta} }{m}
\right\}.
\end{equation}
Note that $\Theta\subset \R^{m}$ is a compact, convex set.

The following is the main result of the paper. It is proved in Appendix~\ref{app:proof}.

\begin{theorem}
\label{thm:main}
Say that Assumptions~\ref{as:support} and \ref{as:smooth} hold. Let $\overline{\bm{\theta}_T}=\frac{1}{T}\sum_{k=0}^{T-1}\bm{\theta}_k.$ For all $\delta \in (0,1)$, with probability at least $1-\delta$, the average of the iterates satisfies:
\begin{multline*}
|\E[\bm{f}(\overline{\bm{\theta}}_T)|\bm{w},\bm{b}]+D_{KL}(\Prob||\Q)|
\le \\\frac{2\kappa}{\sqrt{m}}\left(
\sqrt{n}+\sqrt{\log(\delta^{-1
})}
\right)\\+
\frac{b_1}{\alpha T}+\frac{b_2}{\alpha r T m} + b_3 \alpha r m + b_4\sqrt{\alpha},
\end{multline*}
where
\begin{align*}
b_1 &= 2RC_{\Theta}e^{8RC_{\Theta}}\\
b_2 &= \frac{C_{\Theta}^2}{2}\\
b_3 &= \left(8R^3 C_{\Theta}(e^{8R C_{\Theta}}+e^{12 R C_{\Theta}} )+2R^2(1+e^{4RC_{\Theta}})^2
\right)\\
b_4 &=2RC_{\Theta}e^{10R C_{\Theta}}.
\end{align*}

In particular, if $T\ge 2$ is fixed, the upper bound can be optimized analytically with respect to $\alpha$ and $r$ by setting:
\begin{align*}
\alpha&=2^{2/3}T^{-2/3}\\
r&=
\frac{T^{1/6}}{m } 2^{-2/3} \sqrt{\frac{b_2}{b_3}},
\end{align*}
giving the upper bound:
\begin{multline*}
|\E[\bm{f}(\overline{\bm{\theta}}_T)|\bm{w},\bm{b}]+D_{KL}(\Prob||\Q)|\le \\ 2\kappa\left(
\sqrt{n}+\sqrt{\log(\delta^{-1})}
\right)m^{-1/2}+\beta_1 T^{-1/3}+\beta_2 T^{-1/2},
\end{multline*}
where
\begin{align*}
\beta_1 &=\left(2^{-2/3}+2^{1/3}\right)b_1^{1/3}b_4^{2/3}
\\
\beta_2 &= 2\sqrt{b_2 b_3}.
\end{align*}
\end{theorem}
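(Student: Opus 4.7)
The plan is to split the total error into an approximation part and an optimization part, bound each, and combine. For the approximation step, I apply Proposition~\ref{prop:generalApproximation} to the extension $g$ given by Assumption~\ref{as:smooth}. With probability at least $1-\delta$ over the random features $(\bm{w},\bm{b})$, the proposition yields coefficients $\bm{c}_1,\ldots,\bm{c}_m$ with $|\bm{c}_i|\le C_\Theta/m$, so that $\bm{\theta}^\star:=(\bm{c}_1,\ldots,\bm{c}_m)^\top$ lies in $\Theta$ and $\|\bm{\phi}(\cdot)^\top\bm{\theta}^\star-g\|_{L^\infty(B_R)}\le\epsilon_m$, with $\epsilon_m:=(\kappa/\sqrt{m})(\sqrt{n}+\sqrt{\log\delta^{-1}})$. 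Writing $\bm{\phi}(x)^\top\bm{\theta}^\star=g(x)+e(x)$ with $|e|\le\epsilon_m$ on $\Omega$ and combining the Donsker--Varadhan identity $\E_\Prob[g]-\log\E_\Q[e^g]=D_{KL}(\Prob\|\Q)$ (the constant $\xi$ cancels because $\E_\Q[d\Prob/d\Q]=1$) with the monotone envelope $\E_\Q[e^{g+e}]\in[e^{-\epsilon_m},e^{\epsilon_m}]\E_\Q[e^g]$ gives $|\bm{f}(\bm{\theta}^\star)+D_{KL}(\Prob\|\Q)|\le 2\epsilon_m$, which produces the leading $2\kappa m^{-1/2}(\sqrt{n}+\sqrt{\log\delta^{-1}})$ term.

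For the optimization part, the algorithm is a biased projected stochastic gradient method on the convex function $\bm{f}$, where the bias comes from using $\bm{z}_k$ in place of $\bm{u}_k:=\E_\Q[e^{\bm{\phi}(\bm{y})^\top\bm{\theta}_k}]$. First I control the tracking error. Every $\theta\in\Theta$ satisfies $|\bm{\phi}(x)^\top\theta|\le 2RC_\Theta$ on $B_R$, so $\bm{z}_k$ and $\bm{u}_k$ both lie in $[e^{-2RC_\Theta},e^{2RC_\Theta}]$ and are bounded away from zero. The update $\bm{z}_{k+1}=(1-\alpha)\bm{z}_k+\alpha e^{\bm{\phi}(\bm{y}_k)^\top\bm{\theta}_k}$ is a contractive one-step stochastic approximation whose target $\bm{u}_k$ drifts by at most $O(\alpha rm)$ per step (because $\|\bm{\theta}_{k+1}-\bm{\theta}_k\|_2=O(\alpha r\sqrt{m})$ and $\|\nabla_\theta\bm{u}\|_2\le e^{2RC_\Theta}\cdot 2R\sqrt{m}$). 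A Lyapunov argument on $\E[(\bm{z}_k-\bm{u}_k)^2\mid\bm{w},\bm{b}]$, combined with a bounded martingale noise of per-step variance $O(\alpha^2)$, yields $\E[(\bm{z}_k-\bm{u}_k)^2\mid\bm{w},\bm{b}]=O(\alpha)$ after absorbing the initial transient, so $\E[|1/\bm{z}_k-1/\bm{u}_k|\mid\bm{w},\bm{b}]=O(\sqrt{\alpha})$. Decomposing the stochastic direction as $\bm{g}_k=\nabla\bm{f}(\bm{\theta}_k)+\bm{n}_k+\bm{\beta}_k$ with $\bm{n}_k$ a conditional martingale difference and $\bm{\beta}_k=(1/\bm{z}_k-1/\bm{u}_k)\E_\Q[e^{\bm{\phi}^\top\bm{\theta}_k}\bm{\phi}(\bm{y})]$ the bias, the usual convex projected SGD inequality on $\Theta$ (Euclidean diameter $\le 2C_\Theta/\sqrt{m}$) averages to
\begin{equation*}
\E[\bm{f}(\overline{\bm{\theta}}_T)\mid\bm{w},\bm{b}]-\bm{f}(\bm{\theta}^\star)\le\frac{\|\bm{\theta}_0-\bm{\theta}^\star\|_2^2}{2\alpha r T}+\frac{\alpha r}{2}\max_k\E\|\bm{g}_k\|_2^2+2C_\Theta m^{-1/2}\max_k\E\|\bm{\beta}_k\|_2.
\end{equation*}
The diameter piece is $O(C_\Theta^2/(\alpha r T m))$, matching $b_2/(\alpha r T m)$; the variance piece is $O(\alpha r m)$ since $\|\bm{\phi}\|_2^2\le 4R^2m$ and the exponential factor is bounded, matching $b_3\alpha r m$; the bias piece contributes $O(C_\Theta m^{-1/2}\cdot\sqrt{m}\cdot\sqrt{\alpha})=O(C_\Theta\sqrt{\alpha})$, matching $b_4\sqrt{\alpha}$; and the residual $b_1/(\alpha T)$ term absorbs the initial $(\bm{z}_0-\bm{u}_0)^2$ transient when summing the Lyapunov inequality.

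Adding the two bounds gives the first displayed inequality, with the specific constants $b_1,\ldots,b_4$ arising from explicit bounds on $\|\bm{\phi}\|_\infty\le 2R$, $|\bm{\phi}^\top\theta|\le 2RC_\Theta$, and the factors $e^{\pm 2RC_\Theta}$. For the second inequality, the four optimization terms jointly form a convex function of $\alpha,r>0$: equating $b_2/(\alpha r T m)$ with $b_3\alpha rm$ fixes the stated $r$ up to $\sqrt{b_2/b_3}$ and produces $\beta_2 T^{-1/2}$, after which equating $b_1/(\alpha T)$ with $b_4\sqrt{\alpha}$ fixes the stated $\alpha$ and produces $\beta_1 T^{-1/3}$. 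The main obstacle I anticipate is the coupled Lyapunov/SGD analysis: $\bm{z}_k$ and $\bm{\theta}_k$ live on different time-scales but are strongly coupled because the target $\bm{u}_k$ moves whenever $\bm{\theta}_k$ does, and the SGD bias depends directly on the $\bm{z}$-tracking error. It is precisely the $m^{-1/2}$ shrinking diameter of $\Theta$ supplied by Proposition~\ref{prop:generalApproximation} that keeps $\bm{u}_k$ slow enough for the coupling to close on itself with the explicit, dimension-independent constants $b_1,\ldots,b_4$.
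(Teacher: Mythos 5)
Your proposal is correct and follows essentially the same route as the paper: Proposition~\ref{prop:generalApproximation} plus Donsker--Varadhan for the $2\kappa m^{-1/2}(\sqrt{n}+\sqrt{\log\delta^{-1}})$ term, a projected-SGD regret bound on the convex $\bm{f}$ with the bias charged to the $\bm{z}_k$-tracking error, and then the two-stage calculus optimization over $\alpha r$ and $\alpha$. The only real deviations are cosmetic: you run an $L^2$ Lyapunov recursion on $(\bm{z}_k-\bm{u}_k)^2$ and convert to $L^1$ via Jensen, whereas the paper unrolls the $\bm{z}$-recursion directly to an $L^1$ bound (Lemma~\ref{lem:zError}); and you use the approximating coefficients themselves as the SGD comparison point rather than passing through $\min_{\theta\in\Theta}\bm{f}(\theta)$ as the paper does — both are valid since the regret inequality holds for any fixed $\theta\in\Theta$.
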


\begin{remark}
\label{rem:smoothness}
The constant factors, $\kappa$, $\beta_1$, and $\beta_2$ all depend on a term of the form 
\begin{equation}
\label{eq:smoothnessDimension}
\frac{A_{n-1}}{(2\pi)^n}\rho=\frac{2}{2^n\pi^{n/2}\Gamma(n/2)}\rho.
\end{equation}
In particular, $\beta_2$ and $\beta_2$ grow exponentially with this term.  Recall that $\rho$ quantifies the smoothness of $\log\left(\frac{d\Prob}{d\Q}\right)$, and is typically unknown in practice. Note further that the quantity in \eqref{eq:smoothnessDimension} decreases faster than exponential in the dimension, $n$. As a result, there is a non-trivial interplay between smoothness and dimension. See Fig.~\ref{fig:constants}. Future work will focus on deriving bounds on smoothness norm, $\|\cdot\|_{F^{n+3}}$, which was used to define the factor $\rho$, for general classes of functions.
\begin{figure}
\centering
\begin{minipage}[b]{.4\textwidth}
\includegraphics[width=\textwidth]{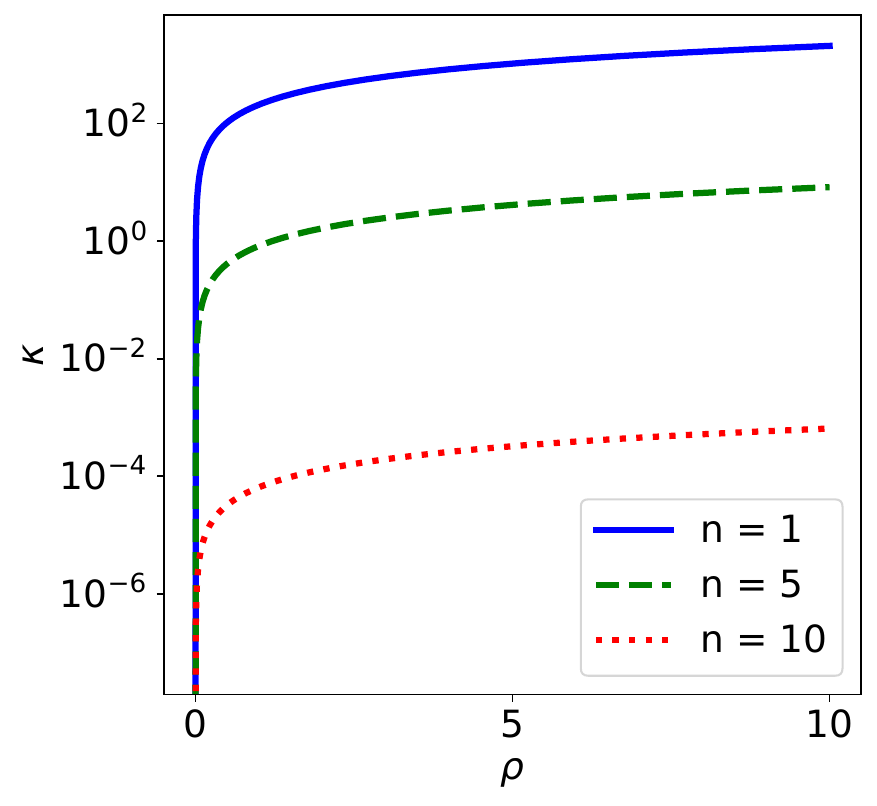}
\end{minipage}
\begin{minipage}[b]{.4\textwidth}
\includegraphics[width=\textwidth]{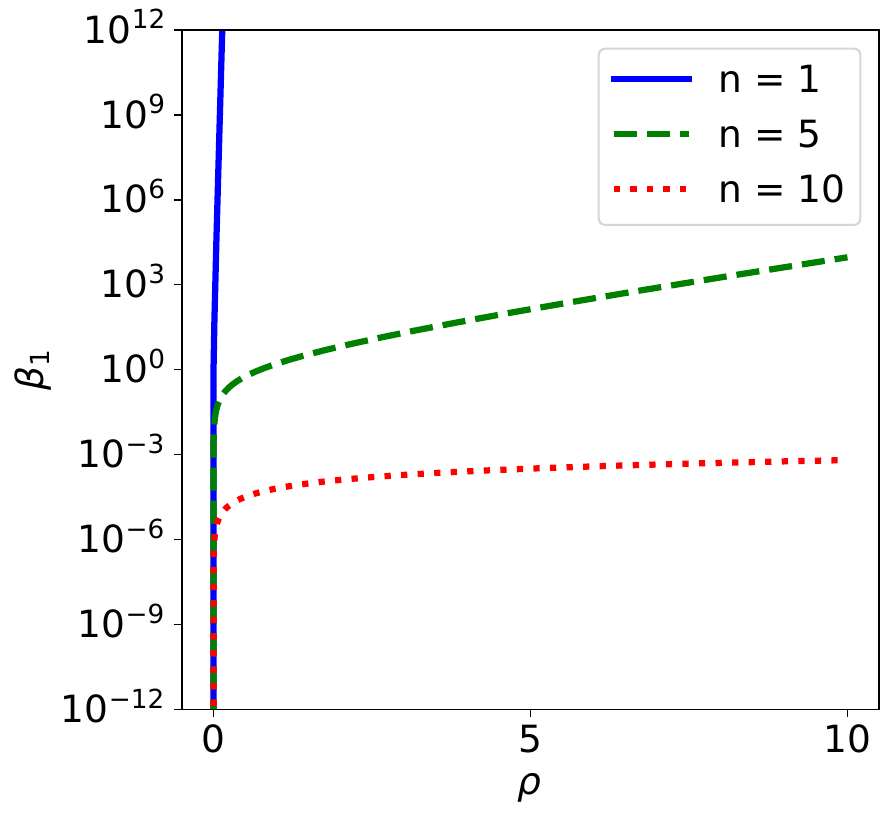}
\end{minipage}
\begin{minipage}[b]{.4\textwidth}
\includegraphics[width=\textwidth]{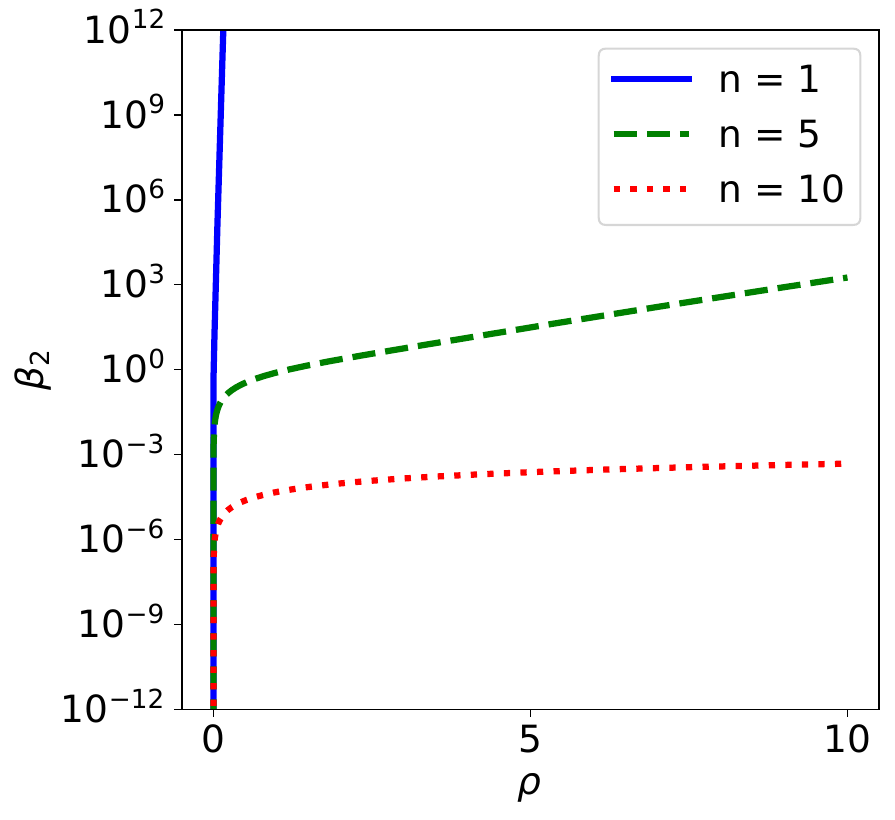}
\end{minipage}
\caption{\label{fig:constants} {\bf Smoothness and Dimension Dependence for Constant Factors.} The plots show how the constant factors, $\kappa$, $\beta_1$, and $\beta_2$ vary for different levels of the smoothness bound, $\rho$, and dimension, $n$. Note that the $y$-axes are plotted in logarithmic scales.}
\end{figure}
\end{remark}

\begin{remark}
\label{rem:noAffineImprovement}
The approximation theorem from \cite{lamperski2024approximation} uses a network of the form:
$$
\bm{g}_N(x)=a+v^\top x+\sum_{i=1}^m\bm{c}_i\sigma(\bm{w}_i^\top x+\bm{b}_i),
$$
where the bounds on $a$ and $\|v\|_2$ are  given independent of the network size, $m$. In particular, to utilize this expansion, the vector $v$ must be estimated. Appending $v$ to $\theta$ and using the bounds from the associated result in \cite{lamperski2024approximation} would result in $\Theta$ with diameter of $\Omega(1)$,  rather than $O(m^{-1/2})$ of the current paper. The decreasing diameter substantially simplifies the derivation of the final bounds for the algorithm error. 
\end{remark}

\section{Numerical Experiments}

 The link to the code for this section can be found here\footnote{https://anonymous.4open.science/r/MINEComparison-4615}.  These experiments were run on a 2020 M1 Mac with 8GB of RAM. In addition to our theoretical guarantees, we empirically evaluated the estimation algorithm on 2 examples: one with 2D distributions, and one with 5D distributions. We considered the KL divergence between a truncated multivariate Gaussian distribution and a uniform distribution, both restricted to $[-2,2]^2$ and $[-2,2]^5$. Specifically, for the 2D example let $\mathbb{P}$ be the distribution with density proportional to $\exp(-\frac{1}{2}\|x\|_2^2)$ on $[-2,2]^2$, and $\mathbb{Q}$ be the uniform distribution on the same domain. For the 5D example let $\mathbb{P}$ be the distribution with density proportional to $\exp(-\frac{1}{2}\|x\|_2^2)$ on $[-2,2]^5$, and $\mathbb{Q}$ be the uniform distribution on the same domain. We evaluate the true KL divergence in both cases using numerical integration.

 We generated random weights $\bm{w}_i$ uniformly on the unit sphere $\mathbb{S}^{1}$ and biases $\bm{b}_i$ uniformly in $[-2,2]$. Following our theoretical analysis, we set the learning rate $\alpha = T^{-2/3}$ and the parameter $r = 1/m$. The initial parameters $\bm{\theta}_0$ were sampled uniformly from $\left[\frac{-2\times 10^{6}}{\sqrt{m}}, \frac{2\times 10^6}{\sqrt{m}}\right]^m$ to ensure $\|\bm{\theta}_0\|_2 = O(1/\sqrt{m})$, and we initialized $\bm{z}_0 = 1$. The update steps follow Equation \eqref{eq:algorithm}, with projection of $\theta$ to ensure the parameters remain within the constraint set defined by \eqref{eq:constraints}. 
 
 We do two separate experiments measuring the error with respect to the number of neurons $m$ and iterations $T$. For each parameter configuration, we ran 10 independent trials. The results of these experiments are shown in \ref{fig:vary_m} and \ref{fig:vary_T} respectively. For each trial, obtain the KL divergence from the model by doing 5,000 samples from $\Prob$ and $\Q$. This yields a strong approximation of $D_{\text{KL}}^{\text{approx}} =\E[\psi(\bm{x},\theta)] - \log(\E[e^{\psi(\bm{y},\theta)}]).$

Our numerical experiments validate the practical effectiveness of the proposed algorithm. The accuracy is generally worse than the SKlearn's method. However, our method runs considerably faster than the SKlearn method for lower configurations, and were equal in runtime only at $T=5\cdot 10^6$ in 2D and $T=10^7$ in 5D. 

\begin{figure}
\centering
       \includegraphics[width=.95\columnwidth]{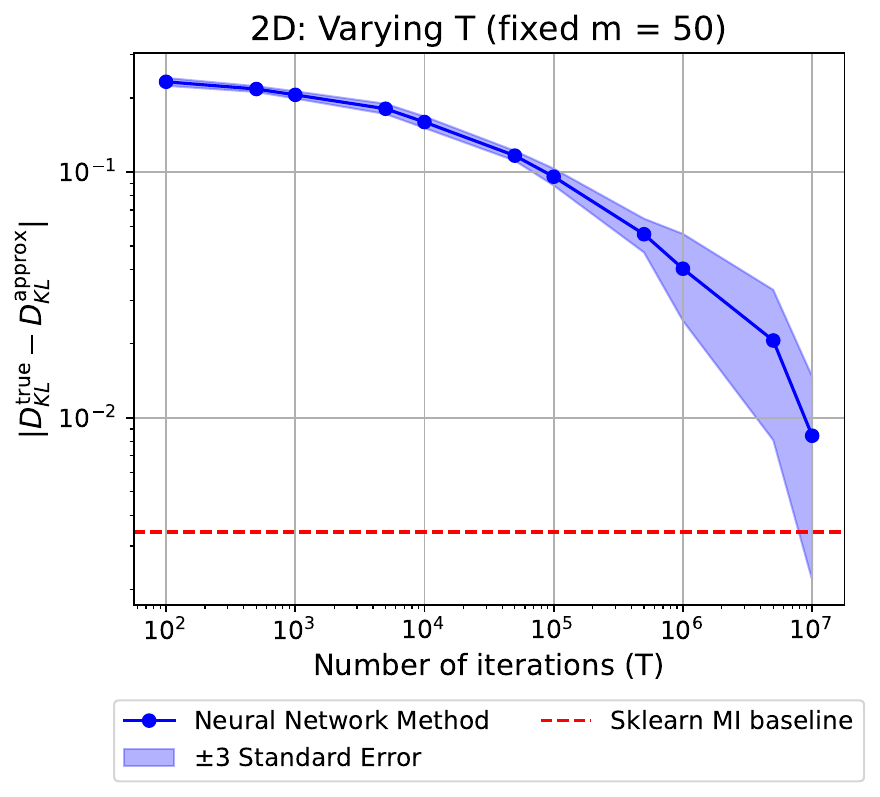}
       \caption{Scaling with iterations $T$ in the 2D case (fixed $m=50$). Error bars show $\pm$ 3 standard errors across 10 trials.}
       \label{fig:vary_T}
\end{figure}

\begin{figure}
\centering
\includegraphics[width=.95\columnwidth]{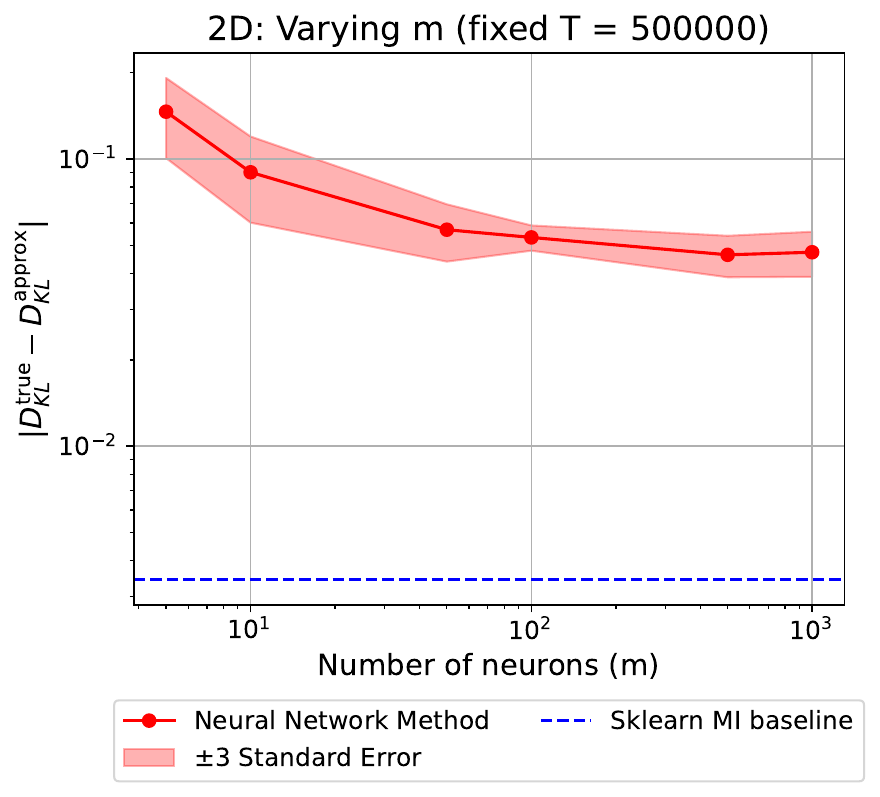}
       \caption{Scaling with network size $m$ in the 2D case (fixed $T=500,000$). Error bars show $\pm$ 3 standard errors across 10 trials.}
       \label{fig:vary_m}
\end{figure}

\begin{figure}
       \includegraphics[width=.95\columnwidth]{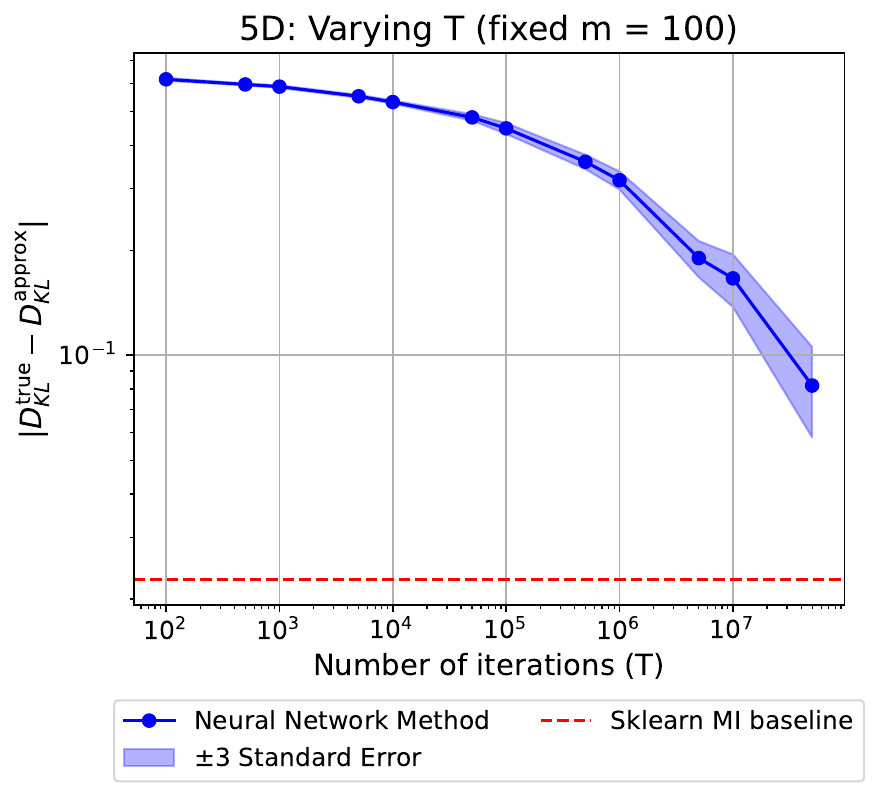}
       \caption{Scaling with iterations $T$ in the 5D case (fixed $m=100$). Error bars show $\pm$ 3 standard errors across 10 trials.}
       \label{fig:vary_T_5d}
\end{figure}

\begin{figure}
   \centering

       \includegraphics[width=.95\columnwidth]{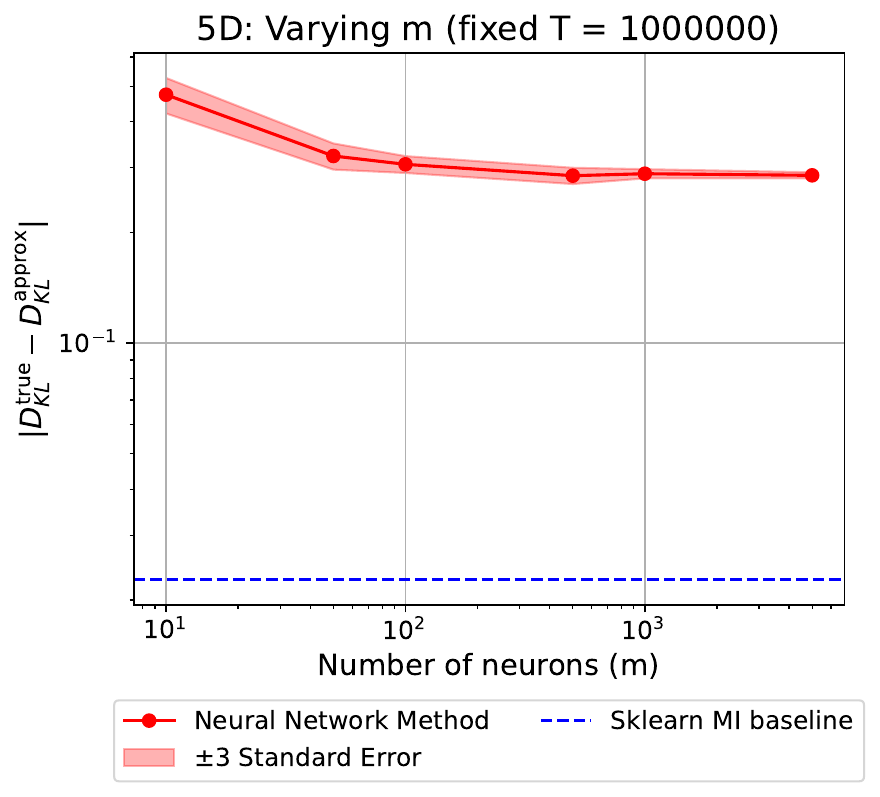}
       \caption{Scaling with network size $m$ in the 5D case (fixed $T=1,000,000$). Error bars show $\pm$ 3 standard errors across 10 trials.}
       \label{fig:vary_m_5d}
\end{figure}

\section{Conclusion}
We presented a new algorithm for estimating the KL divergence of continuous random variables  via random feature neural networks. The analyses of similar existing methods rely on non-constructive approximation theorems, and do not get bounds on the estimation error produced by the algorithms. In contrast, we give explicit quantitative error bounds on the estimation error produced by the algorithm. Future work will include extensions to data with dependencies over time, and to the use of deep neural networks for estimation.

\bibliography{main}

\if\preprint0
\section*{Checklist}

The checklist follows the references. For each question, choose your answer from the three possible options: Yes, No, Not Applicable.  You are encouraged to include a justification to your answer, either by referencing the appropriate section of your paper or providing a brief inline description (1-2 sentences). 
Please do not modify the questions.  Note that the Checklist section does not count towards the page limit. Not including the checklist in the first submission won't result in desk rejection, although in such case we will ask you to upload it during the author response period and include it in camera ready (if accepted).

\textbf{In your paper, please delete this instructions block and only keep the Checklist section heading above along with the questions/answers below.}

\begin{enumerate}

  \item For all models and algorithms presented, check if you include:
  \begin{enumerate}
    \item A clear description of the mathematical setting, assumptions, algorithm, and/or model. [Yes]
    \item An analysis of the properties and complexity (time, space, sample size) of any algorithm. [Yes]
    \item (Optional) Anonymized source code, with specification of all dependencies, including external libraries. [Yes]
  \end{enumerate}

  \item For any theoretical claim, check if you include:
  \begin{enumerate}
    \item Statements of the full set of assumptions of all theoretical results. [Yes]
    \item Complete proofs of all theoretical results. [Yes]
    \item Clear explanations of any assumptions. [Yes]     
  \end{enumerate}

  \item For all figures and tables that present empirical results, check if you include:
  \begin{enumerate}
    \item The code, data, and instructions needed to reproduce the main experimental results (either in the supplemental material or as a URL). [Yes]
    \item All the training details (e.g., data splits, hyperparameters, how they were chosen). [Yes]
    \item A clear definition of the specific measure or statistics and error bars (e.g., with respect to the random seed after running experiments multiple times). [Yes]
    \item A description of the computing infrastructure used. (e.g., type of GPUs, internal cluster, or cloud provider). [Yes]
  \end{enumerate}

  \item If you are using existing assets (e.g., code, data, models) or curating/releasing new assets, check if you include:
  \begin{enumerate}
    \item Citations of the creator If your work uses existing assets. [Not Applicable]
    \item The license information of the assets, if applicable. [Not Applicable]
    \item New assets either in the supplemental material or as a URL, if applicable. [Not Applicable]
    \item Information about consent from data providers/curators. [Not Applicable]
    \item Discussion of sensible content if applicable, e.g., personally identifiable information or offensive content. [Not Applicable]
  \end{enumerate}

  \item If you used crowdsourcing or conducted research with human subjects, check if you include:
  \begin{enumerate}
    \item The full text of instructions given to participants and screenshots. [Not Applicable]
    \item Descriptions of potential participant risks, with links to Institutional Review Board (IRB) approvals if applicable. [Not Applicable]
    \item The estimated hourly wage paid to participants and the total amount spent on participant compensation. [Not Applicable]
  \end{enumerate}

\end{enumerate}
\fi

\clearpage

\appendix
\thispagestyle{empty}

\onecolumn

\section{Elementary Background Results}
This appendix collects some elementary results and facts that are used to prove the approximation result, Proposition~\ref{prop:generalApproximation}.

\subsection{Integration on Spheres}

For $n\ge 2$, the spherical coordinate representation from \cite{blumenson1960derivation} is given by
\begin{equation}
\label{eq:spherical}
w=h(\phi)=\begin{bmatrix}
\cos(\phi_1) \\
\cos(\phi_2)\sin(\phi_1) \\
\vdots \\
\cos(\phi_{n-2})\prod_{i=1}^{n-3}\sin(\phi_i) \\
\sin(\phi_{n-1})\prod_{i=1}^{n-2}\sin(\phi_i)\\
\cos(\phi_{n-1})\prod_{i=1}^{n-2}\sin(\phi_i)
\end{bmatrix},
\end{equation}
where we use the convention that $\prod_{i=1}^k\sin(\phi_k)=1$ if $k\le 0.$ The angle parameters are given by $\phi \in \Phi:=[0,\pi]^{n-2}\times [0,2\pi)$. In particular, when $n=2$, the representation reduces to 
$$
h(\phi)=\begin{bmatrix}
\sin(\phi_1)\\
\cos(\phi_1)
\end{bmatrix}.
$$

Let $Dh(\phi)$ denote the Jacobian matrix of $h$. Let $\mu_{n-1}$ denote the $(n-1)$-dimensional Hausdorff measure over $\R^n$.

\begin{lemma}
\label{lem:coordinateIntegral}
If $f\in L^1(\S^{n-1})$, then its integral can be expressed in the following equivalent ways:
\begin{align*}
\int_{\S^{n-1}}f(w)\mu_{n-1}(dw)&=\int_{\Phi}
f(h(\phi))\sqrt{\det(Dh(\phi)^\top Dh(\phi))}d\phi \\
&=\int_{\Phi}f(h(\phi))\left(\prod_{i=1}^{n-2}\sin^{n-1
-i}(\phi_i)\right)d\phi.
\end{align*}
\end{lemma}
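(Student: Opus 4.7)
The plan is to handle the two equalities separately. The first is the standard surface-integral change of variables (the area formula) applied to the smooth parameterization $h:\Phi\to \S^{n-1}$; the second is a purely computational identification of the Gram determinant $\det(Dh^\top Dh)$.

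For the first equality, I would verify that $h$ restricted to $\mathrm{int}(\Phi)$ is a smooth injective immersion into $\S^{n-1}$ and that $\S^{n-1}\setminus h(\mathrm{int}(\Phi))$ is contained in a finite union of lower-dimensional great-subsphere slices, hence has $\mu_{n-1}$-measure zero. The classical area formula then yields
\begin{equation*}
\int_{\S^{n-1}} f(w)\,\mu_{n-1}(dw)=\int_{\Phi} f(h(\phi))\sqrt{\det(Dh(\phi)^\top Dh(\phi))}\,d\phi,
\end{equation*}
which is the first equality. The boundary behavior and measurability issues are standard for spherical coordinates \cite{blumenson1960derivation}.

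For the second equality, I would proceed by induction on $n$ using the recursive structure
\begin{equation*}
h^{(n)}(\phi_1,\ldots,\phi_{n-1}) = \cos(\phi_1)\,e_1+\sin(\phi_1)\begin{bmatrix}0\\ h^{(n-1)}(\phi_2,\ldots,\phi_{n-1})\end{bmatrix},
\end{equation*}
where $h^{(n-1)}$ is the analogous parameterization of $\S^{n-2}$. Differentiating gives $\|\partial_{\phi_1}h^{(n)}\|^2=\sin^2(\phi_1)+\cos^2(\phi_1)\|h^{(n-1)}\|^2=1$ and $\|\partial_{\phi_k}h^{(n)}\|^2=\sin^2(\phi_1)\|\partial_{\phi_k}h^{(n-1)}\|^2$ for $k\ge 2$. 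The cross terms vanish: the identity $\|h^{(n-1)}\|^2=1$ differentiates to $h^{(n-1)}\cdot \partial_{\phi_k}h^{(n-1)}=0$, which together with the induction hypothesis that $Dh^{(n-1)\top} Dh^{(n-1)}$ is diagonal shows that $Dh^{(n)\top} Dh^{(n)}$ is diagonal as well. Multiplying the diagonal entries yields
\begin{equation*}
\det(Dh^{(n)\top}Dh^{(n)})=\sin^{2(n-2)}(\phi_1)\det(Dh^{(n-1)\top}Dh^{(n-1)}),
\end{equation*}
and after a reindexing $\psi_j=\phi_{j+1}$ in the inductive hypothesis this telescopes to $\prod_{i=1}^{n-2}\sin^{2(n-1-i)}(\phi_i)$. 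Taking square roots (using $\sin\phi_i\ge 0$ on $[0,\pi]$) finishes the lemma, with the base case $n=2$ handled directly by $\|dh^{(2)}/d\phi_1\|=1$.

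The main obstacle is keeping the indices straight in the inductive shift from $h^{(n-1)}$ (whose angles here are $\phi_2,\ldots,\phi_{n-1}$) to the hypothesis as stated in terms of some $\psi_1,\ldots,\psi_{n-2}$, and then checking that the exponents $2(n-1-i)$ line up correctly under this shift so that the single factor $\sin^{2(n-2)}(\phi_1)$ absorbs into the product at index $i=1$. Everything else is direct computation, and no further ingredients beyond the recursion above and the tangency relation $h^{(n-1)}\cdot \partial_{\phi_k} h^{(n-1)}=0$ are required.
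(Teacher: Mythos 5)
Your proof is correct, and you take a genuinely different route for the second equality. For the first equality you invoke the area formula together with the observation that $h$ restricted to $\mathrm{int}(\Phi)$ is an injective parameterization whose complement in $\S^{n-1}$ is $\mu_{n-1}$-null; this is essentially the paper's step, which appeals to Folland's Theorem 11.25 for evaluating integrals against Hausdorff measures via parameterizations. For the second equality, the paper does not compute $Dh^\top Dh$ directly: it takes Blumenson's determinant formula
\begin{equation*}
\det\begin{bmatrix} h(\phi) & Dh(\phi)\end{bmatrix}=\prod_{i=1}^{n-2}\sin^{n-1-i}(\phi_i)
\end{equation*}
as given, and then uses the identities $h^\top h=1$ and $h^\top Dh=0$ to observe that $\begin{bmatrix}h & Dh\end{bmatrix}^\top\begin{bmatrix}h & Dh\end{bmatrix}$ is block-diagonal with blocks $1$ and $Dh^\top Dh$, so the Gram determinant drops out by taking determinants and then square roots. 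Your argument instead exploits the recursion $h^{(n)}=\cos\phi_1\,e_1+\sin\phi_1\,(0,h^{(n-1)})$ to show by induction that the Gram matrix is diagonal with entries $1,\sin^2\phi_1\|\partial_{\phi_k}h^{(n-1)}\|^2$, and then telescopes the product, with the index bookkeeping you flag worked out correctly (the $\sin^{2(n-2)}(\phi_1)$ factor is exactly the $i=1$ term of the target product). Your route is self-contained and does not require quoting the Blumenson determinant identity, at the cost of carrying a bit more inductive bookkeeping; the paper's route is shorter but leans on an external formula plus a slick block-diagonalization trick.
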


\begin{proof}
The first equality follows from applying Theorem 11.25 from \cite{folland1999real}, which shows how to evaluate integrals with respect to Hausdorff measures via parameterizations. 

Proving the second equality amounts to showing that 
\begin{equation}
\label{eq:determinantForm}
\sqrt{\det(Dh(\phi)^\top Dh(\phi))}=\prod_{i=1}^{n-2}\sin^{n-1-i}(\phi_i).
\end{equation}
As discussed in \cite{blumenson1960derivation}, 
$$
\det\begin{bmatrix}
h(\phi) & Dh(\phi)
\end{bmatrix}
=\prod_{i=1}^{n-2}\sin^{n-1-i}(\phi_i).
$$
Then, using that $h(\phi)^\top h(\phi)=1$ and $h(\phi)^\top Dh(\phi)=0$ gives:
\begin{equation*}
\begin{bmatrix}
h(\phi) & Dh(\phi)
\end{bmatrix}^\top \begin{bmatrix}
h(\phi) & Dh(\phi)
\end{bmatrix}=\begin{bmatrix}
1 & 0 \\
0 & Dh(\phi)^\top Dh(\phi)
\end{bmatrix}
\end{equation*}
Thus, \eqref{eq:determinantForm} follows by taking the determinant of  this matrix and then applying the square root.
\end{proof}

The following is an elementary observation about rotational invariance of integrals over $\S^{n-1}$. 

\begin{lemma}
\label{lem:rotationalInvariance}
If $f:\S^{n-1}\to \C$ is in $L^1(\S^{n-1})$ and $U$ is an $n\times n$ orthogonal matrix, then
$$
\int_{\S^{n-1}}f(w)\mu_{n-1}(dw)=\int_{\S^{n-1}}f(Uz)\mu_{n-1}(dz).
$$
\end{lemma}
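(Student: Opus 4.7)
The plan is to invoke the isometry invariance of the Hausdorff measure. Since $U$ is orthogonal, $\|Uv\|_2=\|v\|_2$ for all $v\in\R^n$, so $U$ is a Euclidean isometry of $\R^n$, and in particular restricts to a measurable bijection $\S^{n-1}\to\S^{n-1}$. A standard result in geometric measure theory (for instance in Folland's \textit{Real Analysis} or Federer's \textit{Geometric Measure Theory}) states that the $k$-dimensional Hausdorff measure on $\R^n$ is invariant under isometries: $\mu_k(T(A))=\mu_k(A)$ for every Borel set $A\subset\R^n$ and every isometry $T$. Applied here, the pushforward $U_*\bigl(\mu_{n-1}|_{\S^{n-1}}\bigr)$ equals $\mu_{n-1}|_{\S^{n-1}}$.

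Given this measure-preservation property, the standard change-of-variables formula for pushforward measures yields
$$
\int_{\S^{n-1}} f(w)\,\mu_{n-1}(dw) = \int_{\S^{n-1}} f(Uz)\,(U^{-1})_*\mu_{n-1}(dz) = \int_{\S^{n-1}} f(Uz)\,\mu_{n-1}(dz),
$$
where the last equality uses that $U^{-1}$ is itself an isometry and therefore also preserves $\mu_{n-1}$. Integrability of $f\circ U$ is inherited by applying the same identity to $|f|$.

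The argument is essentially a citation of a well-known fact, so no real obstacle arises; the only mild nuisance is pinning down the precise reference for isometry invariance of $\mu_{n-1}$. If one wished to avoid appealing to the general theorem, an alternative route would be to parameterize $\S^{n-1}$ via Lemma~\ref{lem:coordinateIntegral} and change variables using the map $\phi\mapsto h^{-1}(U h(\phi))$, verifying that the resulting Jacobian factor agrees with $\prod_{i=1}^{n-2}\sin^{n-1-i}(\phi_i)$. This direct computation is considerably more tedious and offers no conceptual advantage, so the isometry-invariance approach is preferred.
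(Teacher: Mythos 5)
Your proof is correct, but it takes a genuinely different (more abstract) route than the paper's. You appeal directly to isometry invariance of the $(n-1)$-dimensional Hausdorff measure and then to the change-of-variables formula for pushforward measures; this is a clean citation-style argument. The paper instead stays within the parametric framework it has just set up: it treats $w = Uh(\psi)$ as an alternative parameterization of $\S^{n-1}$, notes that orthogonality of $U$ gives $(UDh(\psi))^\top (UDh(\psi)) = Dh(\psi)^\top Dh(\psi)$, and invokes Folland's Theorem~11.25 to conclude that the two parameterizations yield the same integral. Your ``alternative route'' at the end is actually more cumbersome than what the paper does: you imagine changing variables through $\phi \mapsto h^{-1}(Uh(\phi))$ and recomputing the Jacobian of the composition, whereas the paper never inverts $h$ --- it simply compares the area elements of the two parameterizations $h$ and $Uh$, which is immediate from orthogonality. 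Both your main argument and the paper's are valid; yours trades a slightly heavier black box (isometry invariance of Hausdorff measure) for a shorter proof, while the paper's is more self-contained given the surrounding development of spherical-coordinate integration.
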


\begin{proof}
Let $z = U^\top w$, so that $w=Uz$. Let $z=h(\psi)$,  using the spherical coordinate parameterization for $z$, leading to an alternative parameterization, $w=Uh(\psi)$. The Jacobian matrix of this parameterization for $w$ is $UDh(\psi)$. Orthogonality of $U$ implies that $(UDh(\psi))^\top (UDh(\psi))=(Dh(\psi))^\top (Dh(\psi))$. Using Theorem~11.25 of \cite{folland1999real} then
gives:
\begin{align*}
\int_{\S^{n-1}} f(w)\mu_{n-1}(dw)&=\int_{\Phi}f(Uh(\psi))\sqrt{\det(Dh(\psi)^\top Dh(\psi))}d\psi \\
&=\int_{\S^{n-1}}f(Uz)\mu_{n-1}(dz). 
\end{align*}
\end{proof}

The following result is a special case of the discussion of integration from \cite{blumenson1960derivation}. 
\begin{lemma}
\label{eq:1dIntegral}
If $n\ge 3$,  $g\in L^1(\R)$ and $v\in \R^n$, then
$$
\int_{\S^{n-1}}g(v^\top w)\mu_{n-1}(dw)=A_{n-2}\int_0^\pi g(\|v\|_2 \cos(\phi_1))\sin(\phi_1)^{n-2}d\phi_1.
$$
\end{lemma}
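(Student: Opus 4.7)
The plan is a two-step reduction. First, use Lemma~\ref{lem:rotationalInvariance} to rotate $v$ onto the first coordinate axis so that the integrand depends only on the first coordinate of the sphere variable. Then apply the spherical-coordinate parameterization from Lemma~\ref{lem:coordinateIntegral} and integrate out every angle except $\phi_1$, recognizing what remains as the surface area of $\S^{n-2}$.

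For the first step, assume $v\ne 0$ and pick an orthogonal matrix $U$ with $U^\top v=\|v\|_2\, e_1$. Setting $z=U^\top w$ gives $v^\top w = v^\top U z = \|v\|_2\, z_1$, so Lemma~\ref{lem:rotationalInvariance} yields
$$\int_{\S^{n-1}} g(v^\top w)\,\mu_{n-1}(dw) = \int_{\S^{n-1}} g(\|v\|_2\, z_1)\,\mu_{n-1}(dz).$$
The degenerate case $v=0$ reduces to the identity $A_{n-1} = A_{n-2}\int_0^\pi \sin^{n-2}(\phi)\,d\phi$, which is itself a byproduct of the same computation applied to a constant function.

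For the second step, write $z=h(\phi)$ as in Lemma~\ref{lem:coordinateIntegral}, so $z_1=\cos(\phi_1)$ and
$$\int_{\S^{n-1}} g(\|v\|_2\, z_1)\,\mu_{n-1}(dz) = \int_{\Phi} g(\|v\|_2\cos(\phi_1))\,\sin^{n-2}(\phi_1)\prod_{i=2}^{n-2}\sin^{n-1-i}(\phi_i)\,d\phi.$$
Since $g\in L^1(\R)$ and the angular weights are bounded on the compact domain $\Phi$, Fubini separates the $\phi_1$ integration from the inner integration over $(\phi_2,\ldots,\phi_{n-1})\in[0,\pi]^{n-3}\times[0,2\pi)$. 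Reindexing $\psi_j=\phi_{j+1}$ for $j=1,\ldots,n-2$ turns the remaining angular product into $\prod_{j=1}^{n-3}\sin^{(n-2)-j}(\psi_j)$, which is precisely the Jacobian factor appearing in Lemma~\ref{lem:coordinateIntegral} for $\S^{n-2}$. Applied to the constant function $1$, that lemma evaluates the inner integral as $A_{n-2}$, and combining this with the outer $\phi_1$ integral gives the claim.

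The only real subtlety worth flagging is the bookkeeping in this reindexing step: one must verify that the leftover product $\prod_{i=2}^{n-2}\sin^{n-1-i}(\phi_i)$ lines up, exponent for exponent, with the Jacobian density $\prod_{j=1}^{n-3}\sin^{n-2-j}(\psi_j)$ of the standard parameterization of $\S^{n-2}$, and that the index ranges cover the appropriate product of intervals so the inner integral really is an integral over $\S^{n-2}$. Everything else — the rotational invariance, the Fubini step, and local integrability of $g$ on $[-\|v\|_2,\|v\|_2]$ — is immediate from the cited lemmas and the hypothesis $g\in L^1(\R)$.
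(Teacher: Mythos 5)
Your proof is correct. The paper gives no proof of this lemma at all --- it is stated with only a citation to \cite{blumenson1960derivation} (``a special case of the discussion of integration from...''), so there is nothing to compare step-for-step. What you have done is supply the self-contained derivation the paper omits, built from the paper's own infrastructure: Lemma~\ref{lem:rotationalInvariance} to rotate $v$ onto $e_1$ so the integrand depends only on $z_1=\cos(\phi_1)$, then Lemma~\ref{lem:coordinateIntegral} together with Fubini to separate out the $\phi_1$ integration, with the reindexing $\psi_j=\phi_{j+1}$ confirming that the leftover angular weight $\prod_{i=2}^{n-2}\sin^{n-1-i}(\phi_i)$ is exactly the Jacobian density $\prod_{j=1}^{n-3}\sin^{(n-2)-j}(\psi_j)$ of $\S^{n-2}$, whose total integral is $A_{n-2}$. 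The exponent bookkeeping checks out, the $v=0$ case is handled correctly as the constant-function instance of the same computation, and the use of the hypothesis $n\ge 3$ is implicit in the inner integral being over a nonempty angular domain $[0,\pi]^{n-3}\times[0,2\pi)$. The only thing the citation-based route buys the authors is brevity; your route buys a verification that the lemma really is consistent with the specific parameterization and Jacobian formula stated in Lemma~\ref{lem:coordinateIntegral}, which is a worthwhile sanity check since the exponent pattern $n-1-i$ is easy to get wrong.
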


\subsection{A Variation on the Dudley Entropy Integral Bound}

If $\cX$ is a set with a metric $d$, and $\epsilon >0$,let $N(\epsilon,\cX,d)$ denote the associated \emph{covering number}. In other words, $N(\epsilon,\cX,d)$ denotes the smallest number of $d$-balls of radius $\epsilon$ required to cover $\cX$.

The following is a variation on the Dudley entropy integral bound in which bounds the effect of truncating the upper tail. A more common variation, as in \cite{wainwright2019high}, truncates the lower tail. The almost sure Lipschitz assumption is used to avoid technicalities about suprema over infinite sets,  and can likely be relaxed.

\begin{lemma}
\label{lem:dudley}
Let $\bm{f}$ be a stochastic process over an index set $\cX$ and let $d$ be a metric over $\cX$ such that:
\begin{itemize}
\item $\bm{f}(x)$ is $L$-Lipschitz with respect to $d$ almost surely
\item $\bm{f}(x)$ is zero-mean and $\tau$-sub-Gaussian for all $x\in\cX$
\item $(\bm{f}(x)-\bm{f}(y))$ is  $d(x,y)$-sub-Gaussian for all $x,y\in\cX$
\end{itemize}
For all $\epsilon >0$
$$
\E\left[\sup_{x\in\cX} \bm{f}(x)\right]\le \tau\sqrt{2\log(N(\epsilon/2,\cX,d))}+
4\int_{0}^{\epsilon} \sqrt{2\log(N(t,\cX,d)}dt.
$$
\end{lemma}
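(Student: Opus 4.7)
The plan is to combine two standard ingredients: a sub-Gaussian maximal inequality applied to a coarse $(\epsilon/2)$-net of $\cX$, and a geometric chaining argument that handles the residual at scales below $\epsilon/2$. This is the usual Dudley entropy integral argument, but the truncation happens at the \emph{top} of the chaining rather than at the bottom. That is why the first term involves the pointwise parameter $\tau$ (via the sub-Gaussianity hypothesis on $\bm{f}(x)$ itself) rather than the diameter of $\cX$ (which would be the natural scale if one chained all the way up).

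First I would fix a minimal $(\epsilon/2)$-net $\cN_0$ of $\cX$, so that $|\cN_0|=N(\epsilon/2,\cX,d)$, and let $\pi_0:\cX\to\cN_0$ send each point to a nearest element. Then $\sup_{x\in\cX}\bm{f}(x)\le \sup_{y\in\cN_0}\bm{f}(y)+\sup_{x\in\cX}(\bm{f}(x)-\bm{f}(\pi_0(x)))$. The standard sub-Gaussian maximum inequality applied to the zero-mean $\tau$-sub-Gaussian family $\{\bm{f}(y):y\in\cN_0\}$ bounds the expectation of the first supremum by $\tau\sqrt{2\log N(\epsilon/2,\cX,d)}$, giving the first term in the stated inequality.

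Next I would chain the residual along successively finer nets $\cN_k$ of radii $\epsilon_k=\epsilon\cdot 2^{-k-1}$ (so $\cN_0$ is consistent with the above) with nearest-point projections $\pi_k:\cX\to\cN_k$. The almost-sure $L$-Lipschitz hypothesis, together with $d(x,\pi_k(x))\le\epsilon_k\to 0$, justifies the uniformly convergent telescoping identity $\bm{f}(x)-\bm{f}(\pi_0(x))=\sum_{k=0}^\infty(\bm{f}(\pi_{k+1}(x))-\bm{f}(\pi_k(x)))$ and lets one replace suprema over $\cX$ by suprema over the countable set $\bigcup_k\cN_k$, sidestepping measurability issues. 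Each level-$k$ increment has $d(\pi_{k+1}(x),\pi_k(x))\le\epsilon_k+\epsilon_{k+1}=3\epsilon_{k+1}$, so by the increment sub-Gaussian hypothesis it is $3\epsilon_{k+1}$-sub-Gaussian. The number of distinct increment pairs is at most $|\cN_k||\cN_{k+1}|\le N(\epsilon_{k+1},\cX,d)^2$, hence the maximal inequality yields $\E\left[\sup_{x}(\bm{f}(\pi_{k+1}(x))-\bm{f}(\pi_k(x)))\right]\le 3\epsilon_{k+1}\sqrt{4\log N(\epsilon_{k+1},\cX,d)}$.

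Finally I would convert the resulting geometric sum into the Dudley integral using the standard device $\epsilon_{k+1}=2(\epsilon_{k+1}-\epsilon_{k+2})$ together with monotonicity of $N(t,\cX,d)$ in $t$, which gives $\sum_{k=0}^\infty\epsilon_{k+1}\sqrt{2\log N(\epsilon_{k+1},\cX,d)}\le 2\int_0^{\epsilon/2}\sqrt{2\log N(t,\cX,d)}\,dt\le 2\int_0^{\epsilon}\sqrt{2\log N(t,\cX,d)}\,dt$. Combining the coarse-net and chaining estimates yields the desired bound. The main obstacles I anticipate are (i) ensuring the suprema are well-defined and the telescoping is valid over the uncountable index set $\cX$, for which the almost-sure Lipschitz hypothesis is precisely the clean workaround indicated in the lemma statement, and (ii) tracking the constants carefully through the combination $\sqrt{2\log(N^2)}=\sqrt{2}\cdot\sqrt{2\log N}$ at each chaining level and the summation-to-integral conversion so that the final prefactor on the integral comes out to $4$.
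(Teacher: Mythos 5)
Your overall strategy---truncate at the top using the pointwise $\tau$-sub-Gaussianity, chain at the bottom---is the same as the paper's, and the telescoping/measurability concerns are handled correctly. However, your specific chaining construction does not deliver the constant $4$, and this is not a bookkeeping issue you can push through.

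The problem is that you use \emph{independent} nearest-point projections $\pi_k:\cX\to\cN_k$, so the level-$k$ increment $\bm{f}(\pi_{k+1}(x))-\bm{f}(\pi_k(x))$ ranges over a subset of $\cN_{k+1}\times\cN_k$, not just $\cN_{k+1}$. You correctly account for this with the $N^2$ cardinality bound, but the resulting $\sqrt{2\log(N^2)}=\sqrt{2}\cdot\sqrt{2\log N}$ factor, combined with the increment radius $\epsilon_k+\epsilon_{k+1}=3\epsilon_{k+1}$, yields $3\epsilon_{k+1}\sqrt{4\log N(\epsilon_{k+1})}=3\sqrt{2}\,\epsilon_{k+1}\sqrt{2\log N(\epsilon_{k+1})}$ per level. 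Running the summation-to-integral conversion ($\epsilon_{k+1}=2(\epsilon_{k+1}-\epsilon_{k+2})$) then produces a prefactor of $6\sqrt{2}\approx 8.49$, not $4$; there is no way to recover the stated constant from this construction, since both the factor $3$ (forced by the triangle inequality between two unrelated projections) and the extra $\sqrt{2}$ (forced by the product of covering sets) are genuine.

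The paper avoids both losses by building a \emph{recursive chaining tree} rather than using independent projections: it sets $y_M(x)=\pi_M(x)$ at the finest scale, and then $y_i(x)=\pi_i(y_{i+1}(x))$ for coarser $i$. With this structure the level-$i$ increment $\bm{f}(y_i(x))-\bm{f}(y_{i-1}(x))=\bm{f}(u_i)-\bm{f}(\pi_{i-1}(u_i))$ is a deterministic function of $u_i\in\cU_i$ alone, so the maximum is over only $N(D2^{-i})$ terms (no squaring), and the increment radius is $d(u_i,\pi_{i-1}(u_i))\le D2^{-(i-1)}=2\cdot D2^{-i}$ (a factor of $2$, not $3$). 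These give exactly the prefactor $4$ after the dyadic sum-to-integral step. To repair your proof, replace the independent projections by this tree (i.e., project the previous finer-scale representative, not $x$ itself); the rest of your argument then goes through.
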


\begin{proof}
If $\cX$ is not bounded with respect to metric $d$, then the right side of the inequality is infinite, and so the bound holds automatically.

Assume that $\cX$ is bounded with respect to $d$, and let $D$ be the corresponding diameter. 

For integers, $i\ge 0$, let $\cU_i$ be a $(D2^{-i})$-covering of $\cX$ of minimal size, so that $|\cU_i|=N(D2^{-i},\cX,d).$ Let $\pi_i:\cX\to \cU_i$ be a mapping of the form:
$$
\pi_i(x)=\mathrm{arg\:min}_{y\in\cU_i} d(x,y).
$$
Note that for all $x\in \cX$, $d(x,\pi_i(x))\le D2^{-i}.$

Let $0\le i_0 < M$ be integers. For all $x\in\cX$, set $y_M(x)=\pi_M(x)$ and for $i=M-1,\ldots,i_0$, set $y_i(x)=\pi_{i}(y_{i+1}(x)).$ Then
\begin{align*}
\bm{f}(x)&=\left(\bm{f}(x)-\bm{f}(y_M(x))\right)+\bm{f}(y_M(x))\\
&=\left(\bm{f}(x)-\bm{f}(y_M(x))\right)+\bm{f}(y_{i_0}(x)) + \sum_{i=i_0}^{M-1}\left(
\bm{f}(y_{i+1}(x))-\bm{f}(y_{i}(x))
\right).
\end{align*}

Then using the almost sure Lipschitz property, the following bound holds almost surely:
\begin{align*}
\sup_{x\in\cX}\bm{f}(x)&\le LD 2^{-M}+\max_{u_{i_0}\in\cU_{i_0}}\bm{f}(u_{i_0})
+\sum_{i=i_0+1}^M\max_{u_i\in\cU_i}\left(\bm{f}(u_i)-\bm{f}(\pi_{i-1}(u_i))\right).
\end{align*}

Using a standard bound on the maxima of a finite set of sub-Gaussian random variables, e.g. Exercise 2.12 of \cite{wainwright2019high}, gives the bound in expectation: 
\begin{align*}
\E\left[
\sup_{x\in\cX}\bm{f}(x)
\right]&\le LD2^{-M}+\tau \sqrt{2\log(N(D2^{-i_0},\cX,d))}
+\sum_{i=i_0+1}^M D2^{-i+1}\sqrt{2\log(N(D2^{-i},\cX,d))}
\end{align*}
Using that $N(t,\cX,d)$ is non-increasing gives:
$$
D2^{-i-1} \sqrt{2\log(N(D2^{-i},\cX,d))}\le \int_{D2^{-i-1}}^{D2^{-i}}
\sqrt{2\log(N(tt,\cX,d))}dt
$$
for all $i$. 

Plugging in this integral bound gives
$$
\E\left[
\sup_{x\in\cX}\bm{f}(x)
\right]\le LD2^{-M}+\tau \sqrt{2\log(N(D2^{-i_0},\cX,d))}
+4\int_{D2^{-M-1}}^{D2^{-i_0}}
\sqrt{2\log(N(t,\cX,d))}dt.
$$

This bound holds for all integers $0\le i_0<M$. Letting $M\to \infty$ gives
$$
\E\left[
\sup_{x\in\cX}\bm{f}(x)
\right]\le \tau \sqrt{2\log(N(D2^{-i_0},\cX,d))}
+4\int_0^{D2^{-i_0}}
\sqrt{2\log(N(t,\cX,d))}dt.
$$

For any $\epsilon >0$, let $i_0$ be such that $D2^{-i_0}\le \epsilon \le D2^{-i_0+1}$. Then $\epsilon/2\le D2^{-i_0}$, and the result follows because $N(t,\cX,d)$ is non-increasing in $t$, while the integral term is non-decreasing in the upper limit. 
\end{proof}

\section{Smooth Functions and Approximation}
\label{app:approximation}

This appendix gives background and results on approximating smooth functions via random features. Relations between our smoothness measure, $\|\cdot\|_{F^k}$ and Sobolev norms are given in Subsection~\ref{appss:smoothness}. The approximation result, Proposition~\ref{prop:generalApproximation} is proved in Subsections~\ref{appss:integral} and \ref{apss:approximationPf}.

\subsection{Fourier Transforms and Smooth Functions}
\label{appss:smoothness}

Relating the $F^k$-norms to $L^1$ and $W^{k,1}$ norms requires some notation about the  unit sphere. 
Let $\S^{n-1}=\{x\in\R^n|\|x\|_2=1\}$ denote the $n-1$-dimensional unit sphere.
We denote the area of the area of $\S^{n-1}$ by:
\begin{equation*}
  A_{n-1} = \frac{2\pi^{n/2}}{\Gamma(n/2)},
\end{equation*}
where $\Gamma$  is the gamma function. 

Recall that $\mu_{n-1}$ denotes the $(n-1)$-dimensional Haussdorff measure over $\R^n$, so that $A_{n-1}=\int_{\S^{n-1}}\mu_{n-1}(d\alpha)$. In particular, $\S^{0}=\{-1,1\}$ and $\mu_{0}$ is the counting measure, with $\mu_{0}(\{-1\})=\mu_{0}(\{1\})=1$.

\begin{lemma}
\label{lem:sobolevBound}
For all $k\ge 1$, if $g\in W^{k,1}(\R^n)$, then
$\|g\|_{F^k}\le \max\left\{1,n^{\frac{k}{2}-1} \right\}\|g\|_{W^{k,1}(\R^n)}$.
\end{lemma}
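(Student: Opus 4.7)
The plan is to bound the two pieces of $\|g\|_{F^k}$ separately, namely the flat piece $\|\hat g\|_{L^\infty}$ and the polynomially weighted piece $\sup_\omega (2\pi\|\omega\|_2)^k|\hat g(\omega)|$, using the standard Fourier correspondence between spatial differentiation and multiplication by $j2\pi\omega$. The flat piece is immediate from \eqref{eq:ft}: since $\hat g(\omega)$ is an integral of the unit-modulus phase $e^{-j2\pi\omega^\top x}$ against $g$, one has $|\hat g(\omega)|\le \|g\|_{L^1(\R^n)}$ pointwise in $\omega$.

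For the weighted piece, I would invoke the identity $\widehat{\partial_i^k g}(\omega)=(j2\pi\omega_i)^k\hat g(\omega)$, valid for $g\in W^{k,1}(\R^n)$ by density of Schwartz functions (this is the only place where the Sobolev hypothesis enters). This yields the axis-aligned estimate $|(2\pi\omega_i)^k\hat g(\omega)|\le \|\partial_i^k g\|_{L^1(\R^n)}$ for each coordinate direction $i=1,\ldots,n$. To pass from these axis-aligned bounds to a bound on the rotationally symmetric weight $(2\pi\|\omega\|_2)^k=(2\pi)^k(\sum_i\omega_i^2)^{k/2}$, the key step is an $\ell^2$-versus-$\ell^k$ comparison on $\R^n$: for $k\ge 2$ the power-mean inequality $\|a\|_2\le n^{1/2-1/k}\|a\|_k$ gives $(\sum_i\omega_i^2)^{k/2}\le n^{k/2-1}\sum_i|\omega_i|^k$, and for $k=1$ the triangle inequality supplies the tighter constant $1$. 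Both cases are absorbed into $\max\{1,n^{k/2-1}\}$, which is exactly the factor appearing in the statement.

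Combining gives $(2\pi\|\omega\|_2)^k|\hat g(\omega)|\le \max\{1,n^{k/2-1}\}\sum_{i=1}^n\|\partial_i^k g\|_{L^1(\R^n)}$, and adding the flat-piece bound, using $\max\{1,n^{k/2-1}\}\ge 1$, yields
\begin{equation*}
|\hat g(\omega)|\bigl(1+(2\pi\|\omega\|_2)^k\bigr)\le \max\{1,n^{k/2-1}\}\Bigl(\|g\|_{L^1(\R^n)}+\sum_{i=1}^n\|\partial_i^k g\|_{L^1(\R^n)}\Bigr).
\end{equation*}
Since $\|g\|_{L^1(\R^n)}$ and the pure axis derivatives $\|\partial_i^k g\|_{L^1(\R^n)}$ are all among the summands of $\|g\|_{W^{k,1}(\R^n)}=\sum_{|\alpha|\le k}\|\partial^\alpha g\|_{L^1(\R^n)}$, taking the essential supremum in $\omega$ closes the bound. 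No step in this plan looks genuinely delicate; the only item requiring care is the Fourier--derivative identity for weak derivatives, which is standard, and the main thing to get right quantitatively is the $\ell^2$-versus-$\ell^k$ constant $n^{k/2-1}$ that produces the dimension factor in the conclusion.
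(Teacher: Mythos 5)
Your proposal is correct and follows essentially the same route as the paper: both bound $|\hat g(\omega)|$ by $\|g\|_{L^1}$, bound $(2\pi|\omega_i|)^k|\hat g(\omega)|$ by $\|\partial_i^k g\|_{L^1}$ via the Fourier--derivative identity, and then convert the axis-aligned bounds to the $\ell^2$-weight using the $\|\cdot\|_2$ vs.\ $\|\cdot\|_k$ comparison with constant $n^{1/2-1/k}$, splitting into the $k=1$ and $k\ge 2$ cases. The only difference is purely presentational (you add upper bounds on the two pieces of the $F^k$ weight, whereas the paper writes a single lower bound on $\|g\|_{W^{k,1}}$), so there is nothing substantive to compare.
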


\begin{proof}
For $\alpha = (\alpha_1,\ldots,\alpha_n)\in\N^n$, let $D^{\alpha}g=\frac{\partial^{\alpha_1}\cdots \partial^{\alpha_n}g}{\partial x_1^{\alpha_1} \cdots \partial x_n^{\alpha_n}}$.

The derivative formula for Fourier transforms gives 
$$
\widehat{D^{\alpha} g}(\omega) = (j2\pi)^{|\alpha|} \omega^{\alpha} \hat g(\omega),
$$
where $\omega^{\alpha}=\omega_1^{\alpha_1}\cdots \omega_n^{\alpha_n}.$ See, e.g., \cite{kammler2007first}. (This formula remains valid almost everywhere when $D^{\alpha}g$ are  weak derivatives.)

It follows from the Fourier transform formula, \eqref{eq:ft}, that
$$
\mathrm{ess\:sup}_{\omega\in\R^n}|\hat g(\omega)|(2\pi)^{|\alpha|} |\omega^{\alpha} |\le \|D^{\alpha}g\|_{L^1(\R^n)}
$$

A standard relationship between $p$-norms gives:
\begin{equation}
\label{eq:pNormRelations}
\|\omega\|_2 \le \begin{cases}n^{\frac{1}{2}-\frac{1}{k}}\|\omega\|_k & k\ge 2 \\
\|\omega\|_1 & k=1
\end{cases}
\end{equation}
See, e.g., \cite{horn2012matrix}.

\begin{align*}
\|g\|_{W^{k,1}(\R^n)}&\ge \left(1 + (2\pi)^k\sum_{i=1}^n |\omega_i|^k\right)|\hat g(\omega)| \\
&=\left(1 + (2\pi)^k\|\omega\|_k^k\right)|\hat g(\omega)|
\end{align*}

For $k=1$, \eqref{eq:pNormRelations} gives, almost everywhere
\begin{align*}
\|g\|_{W^{1,1}}
&\ge\left(1 + (2\pi) \|\omega\|_2\right)|\hat g(\omega)|
\end{align*}
So, at $k=1$, we have
$$
\|g\|_{F^1}\le \|g\|_{W^{1,1}}
$$

For $k\ge 2$, \eqref{eq:pNormRelations} implies that $\|\omega\|_k^k\ge n^{1-\frac{k}{2}} \|\omega\|_2^k$. Note that $1-\frac{k}{2}\le 0$, so that $n^{1-\frac{k}{2}}\le 1.$
Thus, we have, almost everywhere
\begin{align*}
\|g\|_{W^{k,1}}
&\ge\left(1 + (2\pi)^k n^{1-\frac{k}{2}} \|\omega\|_2^k\right)|\hat g(\omega)| \\
&\ge  n^{1-\frac{k}{2}} \left(1+(2\pi\|\omega\|_2)^k\right)|\hat g(\omega)|,
\end{align*}
so that in this case
$$
\|g\|_{F^k}\le n^{\frac{k}{2}-1}\|g\|_{W^{k,1}}.
$$
Combining the bounds gives the general upper bound on $\|g\|_{F^k}$
\end{proof}

\subsection{An Integral Representation for Smooth Functions}
\label{appss:integral}
Lemma~\ref{lem:integral}, below, is a modification of a result from \cite{lamperski2024approximation}, and forms the basis of the corresponding approximation result. It shows that that any sufficiently smooth function can be represented can be represented via an integral of the ReLU activation function over $\S^{n-1}\times [-R,R]$ and an affine term. Approximation schemes based on this result require an affine term. For algorithm of this paper, the affine term complicates the analysis. This subsection gives an alternative integral representation with no affine term. 

The form of Lemma \ref{lem:integral} is slightly different from the statement from \cite{lamperski2024approximation}. The biggest difference is that we utilize a slightly different measure of smoothness, from \eqref{eq:Fnorm}, which ends up simplifying the constants.

\begin{lemma}
  \label{lem:integral}
  {\it
    Let $g:\R^n\to \R$ satisfy $\|g\|_{F^{n+3}}<\infty$. For any $R>0$,
    there is a function $\xi:\S^{n-1}\times [-R,R]\to \R$, a vector $v\in\R^n$, and a scalar $r\in\R$ such that for almost all $\|x\|_2\le R$
  \begin{equation*}
    g(x)=\int_{-R}^R\int_{\S^{n-1}} \xi(w,b)\sigma(w^\top x+b)\mu_{n-1}(dw)db
        +v^\top x +r 
    \end{equation*} 
Furthermore, $\xi$, $v$, and $r$ satisfy:
\begin{align*}
   \|\xi\|_{L^{\infty}(\S^{n-1}\times [-R,R])}&\le \frac{2}{(2\pi)^n}\|g\|_{F^{n+3}} \\
  \|v\|_2 & \le \frac{2A_{n-1}}{(2\pi)^n} \|g\|_{F^{n+3}} \\
  |r|&\le (R+1)\frac{2A_{n-1}}{(2\pi)^n}\|g\|_{F^{n+3}}.
\end{align*}
  }
\end{lemma}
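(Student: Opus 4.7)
The approach is a classical Barron/Klusowski-style reduction: apply Fourier inversion in polar coordinates so that everything decouples into one-dimensional integrals along rays, represent each such one-dimensional integral via a ReLU Taylor-with-remainder identity, and then bound the resulting coefficients using the smoothness control provided by $\|g\|_{F^{n+3}}$.

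First, start from the inverse Fourier transform of $g$, change variables $\omega=tw$ with $t\ge 0$ and $w\in\S^{n-1}$ (so $d\omega=t^{n-1}\,dt\,\mu_{n-1}(dw)$), and take real parts since $g$ is real:
\begin{equation*}
g(x)=\int_{\S^{n-1}}\phi_w(w^\top x)\,\mu_{n-1}(dw),\qquad \phi_w(s):=\operatorname{Re}\int_0^\infty e^{j2\pi ts}\hat g(tw)\,t^{n-1}\,dt.
\end{equation*}
The defining bound $|\hat g(\omega)|\le\|g\|_{F^{n+3}}/(1+(2\pi\|\omega\|_2)^{n+3})$ forces $t^{n+1}|\hat g(tw)|\in L^1([0,\infty))$, which legitimizes Fubini and twofold differentiation under the integral, giving $\phi_w\in C^2(\R)$ with $\phi_w''(s)=-(2\pi)^2\operatorname{Re}\int_0^\infty e^{j2\pi ts}\hat g(tw)\,t^{n+1}\,dt$ and analogous formulas for $\phi_w$ and $\phi_w'$.

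Next, invoke the one-dimensional ReLU identity: for any $\phi\in C^2(\R)$ and $s\in[-R,R]$, two integrations by parts give
\begin{equation*}
\phi(s)=\phi(-R)+\phi'(-R)(s+R)+\int_{-R}^R\phi''(b)\,\sigma(s-b)\,db.
\end{equation*}
Apply this to each $\phi_w$ with $s=w^\top x\in[-R,R]$, substitute back into the polar representation, and make the change of variable $b\mapsto-b$ in the inner integral so that $\sigma(w^\top x-b)$ is replaced by $\sigma(w^\top x+b)$. Matching constant, linear, and integral pieces then identifies
\begin{equation*}
\xi(w,b)=\phi_w''(-b),\quad v=\int_{\S^{n-1}}\phi_w'(-R)\,w\,\mu_{n-1}(dw),\quad r=\int_{\S^{n-1}}\bigl[\phi_w(-R)+R\phi_w'(-R)\bigr]\mu_{n-1}(dw).
\end{equation*}

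Finally, bound the coefficients. Plugging the $F^{n+3}$-bound into the formula for $\phi_w^{(k)}$, substituting $u=2\pi t$, and splitting at $u=1$ yield, for each $k\in\{0,1,2\}$,
\begin{equation*}
\int_0^\infty\frac{u^{n-1+k}}{1+u^{n+3}}\,du\le \tfrac{1}{n+k}+\tfrac{1}{3-k}\le 2,\qquad\text{hence}\qquad \sup_s|\phi_w^{(k)}(s)|\le\tfrac{2}{(2\pi)^n}\|g\|_{F^{n+3}}.
\end{equation*}
The bound on $\|\xi\|_{L^\infty}$ is the case $k=2$. Pulling the $k=1$ bound out of the integral defining $v$ and using $\int_{\S^{n-1}}\mu_{n-1}(dw)=A_{n-1}$ gives $\|v\|_2\le \tfrac{2A_{n-1}}{(2\pi)^n}\|g\|_{F^{n+3}}$, and combining the $k=0$ and $k=1$ bounds in $r$ gives $|r|\le A_{n-1}\bigl(|\phi_w(-R)|+R|\phi_w'(-R)|\bigr)\le (R+1)\tfrac{2A_{n-1}}{(2\pi)^n}\|g\|_{F^{n+3}}$. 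The only real obstacle is the technical justification of Fubini and second-order differentiation under the integral sign, which is precisely what the $F^{n+3}$-hypothesis delivers via a dominated-convergence argument; everything else is Taylor's theorem with integral remainder and bookkeeping on the elementary integrals above.
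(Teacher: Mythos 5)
Your proof is correct and takes essentially the same approach as the paper's. The paper defers to the calculation in the cited prior work, introduces a probability density $p(\omega)\propto|\hat g(\omega)|\,\|2\pi\omega\|_2^2$ and a rescaled plane-wave $\psi(t,\omega)$, applies the one-dimensional Taylor-with-ReLU-remainder identity in $t=\omega^\top x/\|\omega\|_2$, and only afterwards performs the radial integration to produce $\xi(\alpha,b)$; you perform the polar change of variables $\omega=tw$ first and then apply the same Taylor/ReLU identity to each $\phi_w$. After the dust settles, the formulas for $\xi$, $v$, $r$ agree (the $Z$ and $\|2\pi\omega\|_2^2$ normalizations in the paper cancel), and both arguments reduce the coefficient bounds to $\int_0^\infty u^{n-1+k}/(1+u^{n+3})\,du\le 2$ for $k\in\{0,1,2\}$, exactly as you computed. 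The one place to be a little careful, which the paper also acknowledges by stating the representation only for almost all $x\in B_R$, is that $\hat g$ is only defined up to a null set, so $\phi_w$ and hence $\xi(w,\cdot)$ are defined for $\mu_{n-1}$-a.e.\ $w$ and the final identity holds a.e.\ on $B_R$; a Fubini argument (which you gesture at) handles this, together with measurability of $(w,b)\mapsto\phi_w''(-b)$.
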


\begin{proof}
The main difference between this result and the corresponding result from \cite{lamperski2024approximation} is the inclusion of the $2\pi$ factor in the definition of $\|\cdot\|_{F^{n+3}}$. So, the argument from \cite{lamperski2024approximation} will be sketched briefly, mostly to show how the constant factors change. 

From here, we get that for all $i=0,1,2$:
\begin{align}
\nonumber
\int_{\R^n}|\hat g(\omega)|\cdot \|2\pi\omega\|_2^i d\omega &\le
\|g\|_{F^{n+3}}\int_{\R^n}\frac{\|2\pi\omega\|^i}{1+\|2\pi \omega\|^{n+3}}d\omega \\
\nonumber
&\overset{u=2\pi \omega}{=}\frac{\|g\|_{F^{n+3}}}{(2\pi)^n}\int_{\R^n}\frac{\|u\|_2^i}{1+\|u\|_2^{n+3}}du\\
\nonumber
&=\frac{\|g\|_{F^{n+3}}A_{n-1}}{(2\pi)^n}\int_0^{\infty}\frac{r^{i+n-1}}{1+r^{n+3}}dr\\
\label{eq:L1Bounds}
&\le \frac{2 A_{n-1}}{(2\pi)^n}\|g\|_{F^{n+3}}. 
\end{align}
The second equality uses integration in spherical coordinates. 

In particular, $\|\hat g\|_{L^1(\R^n)}<\infty$, so that the inverse Fourier transform relation, \eqref{eq:ift}, must hold for almost all $x\in\R^n$.

Let $\hat g(\omega)=|\hat g(\omega)|e^{j2\pi\theta(\omega)}$ be the magnitude and phase representation of $\hat g(\omega)$.

Set:
\begin{align*}
Z&=\int_{\R^n}|\hat g(\omega)|\cdot \|2\pi \omega\|_2^2 d\omega \\
p(\omega)&=\frac{|\hat g(\omega)|\cdot \|2\pi \omega\|_2^2}{Z} \\
\psi(t,\omega)&=\frac{Z}{\|2\pi\omega\|_2^2}\cos\left(2\pi(\|\omega\|_2 t+\theta(\omega))\right),
\end{align*}
where $\psi$ is defined for $(t,\omega)\in\R\times (\R^n\setminus\{0\})$. Here $p$ defines a probability density over $\R^n$.

Then, the calculation in \cite{lamperski2024approximation} shows that
for almost all $x\in B_R$, 
\begin{multline}
f(x)=\left(\int_{\R^n}\frac{\partial\psi(-R,\omega)}{\partial t}\frac{\omega}{\|\omega\|_2}p(\omega) d\omega \right)^\top x +
\int_{\R^n}\left(\frac{\partial \psi(-R,\omega)}{\partial t}R+\psi(-R,\omega) \right)p(\omega)d\omega +
\\
\int_{\R^n}\int_{-R}^R\frac{\partial^2\psi(t,\omega)}{\partial t^2}\sigma\left(
\left(\frac{\omega}{\|\omega\|_2}\right)^\top x-t
\right)dt p(\omega) d\omega.
\end{multline}
The first two terms define $v$ and $r$, respectively.

Now we can bound $\|v\|_2$ and $|r|$:
\begin{align*}
\|v\|_2&\le\int_{\R^n}\left| \frac{\partial\psi(-R,\omega)}{\partial t}\right| p(\omega)d\omega \\
&\le \int_{\R^n}|\hat g(\omega)|\cdot \|2\pi \omega\|_2 d\omega \\
&\overset{\eqref{eq:L1Bounds}}{\le} \frac{2 A_{n-1}}{(2\pi)^{n}}\|g\|_{F^{n+3}},
\end{align*}
and
\begin{align*}
|r|&\le \int_{\R^n}
\left(\left|\frac{\partial \psi(-R,\omega)}{\partial t}\right|R+|\psi(-R,\omega)| \right)p(\omega)d\omega\\
&\le \int_{\R^n}
\left(|\hat g(\omega)|\cdot \|2\pi\omega\|_2 R + |\hat g(\omega)|\right)d\omega \\
&\overset{\eqref{eq:L1Bounds}}{\le} \left(R+1\right)\frac{2 A_{n-1}}{(2\pi)^n}\|g\|_{F^{n+3}}. 
\end{align*}

For $\alpha \in \S^{n-1}$, set 
$$
\xi(\alpha,-t)=\int_{0}^\infty \frac{\partial^2 \psi(t,r\alpha) }{\partial t^2}r^{n-1}p(r\alpha) dr.
$$
Then, using spherical coordinates, and Fubini's theorem:
\begin{align*}
\MoveEqLeft
\int_{\R^n}\int_{-R}^R\frac{\partial^2\psi(t,\omega)}{\partial t^2}\sigma\left(
\left(\frac{\omega}{\|\omega\|_2}\right)^\top x-t
\right)dt p(\omega) d\omega \\
&=
\int_{\S^{n-1}}\int_{-R}^R\xi(\alpha,-t)\sigma(\alpha^\top x-t)dt \mu_{n-1}(d\alpha)\\
&\overset{b=-t}{=}\int_{\S^{n-1}}\int_{-R}^R\xi(\alpha,b)\sigma(\alpha^\top x+b)db \mu_{n-1}(d\alpha)\\
\end{align*}
In particular, this shows that the stated integral representation holds.

Now, we must bound $\|\xi\|_{L^{\infty}(\S^{n-1}\times [-R,R])}$:
\begin{align*}
|\xi(\alpha,-t)|&\le 
\int_0^{\infty}\left|\frac{\partial^2 \psi(t,r\alpha) }{\partial t^2}\right|r^{n-1}p(r\alpha) dr \\
&\le \int_0^{\infty}r^{n-1}|\hat g(r\alpha)|\cdot \|2\pi r\alpha\|_2^2 dr\\
&\le \|g\|_{F^{n+3}}\int_0^{\infty}\frac{(2\pi)^2 r^{n+1}}{1+(2\pi r)^{n+3}}dr\\
&\overset{u=2\pi r}{=}\frac{\|g\|_{F^{n+3}}}{(2\pi )^n}\int_0^{\infty}\frac{u^{n+1}}{1+u^{n+3}}du \\
&\le \frac{2}{(2\pi)^n}\|g\|_{F^{n+3}}.
\end{align*}
\end{proof}

Let $\mathrm{sign}$ denote the sign function:
$$
\mathrm{sign}(t)=\begin{cases}
1  & t >0 \\
0 & t = 0 \\
-1 & t < 0
\end{cases}
$$

\begin{lemma}
For $R>0$ and $r\in\R$, the function $s:[-R,R]\to \R$ defined by $s(b)=\frac{r}{R^2}\sign(b)$ is an optimal solution to the following functional optimization problem:
\begin{align*}
&\min_{f} && \|f\|_{L^{\infty}([-R,R])}\\
&\textrm{subject to} && \int_{-R}^{R}f(b)bdb =r. 
\end{align*}
\end{lemma}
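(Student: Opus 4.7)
The plan is a short duality / H\"older-inequality argument. First I would verify feasibility of the candidate: since $\mathrm{sign}(b)\cdot b=|b|$,
\begin{equation*}
\int_{-R}^{R} s(b) b\, db = \frac{r}{R^2}\int_{-R}^{R} |b|\, db = \frac{r}{R^2}\cdot R^2 = r,
\end{equation*}
so $s$ meets the constraint, and $\|s\|_{L^\infty([-R,R])}=|r|/R^2$.

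Next I would establish that this value is a lower bound for the optimum. For any feasible $f$, let $M=\|f\|_{L^\infty([-R,R])}$. Applying the bound $|f(b)|\le M$ almost everywhere together with the triangle inequality for integrals gives
\begin{equation*}
|r|=\left|\int_{-R}^{R} f(b) b\, db\right|\le \int_{-R}^{R}|f(b)|\,|b|\,db \le M\int_{-R}^{R}|b|\,db = M R^2,
\end{equation*}
so $\|f\|_{L^\infty([-R,R])}\ge |r|/R^2$. Combined with the feasibility calculation, this shows that $s$ attains the minimum.

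There isn't really a hard step here; the whole argument is essentially the sharp case of the $L^1$--$L^\infty$ H\"older duality applied to the linear functional $f\mapsto \int f(b)b\,db$ with ``weight'' $b$. The only minor care point is that one must check that equality in H\"older is actually achieved by a bounded function, which is exactly why $s$ is piecewise constant at $\pm|r|/R^2$ with sign matching that of $b$ (times $\mathrm{sign}(r)$); this aligns $f(b)b$ to be nonnegative of magnitude $M|b|$ almost everywhere, saturating the chain of inequalities above. No further machinery is needed.
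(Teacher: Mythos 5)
Your proof is correct, and it takes a genuinely different route from the paper. You establish the lower bound on the optimal value directly via a H\"older-type estimate: for any feasible $f$ with $M=\|f\|_{L^\infty([-R,R])}$, the constraint forces $|r|\le M\int_{-R}^{R}|b|\,db = MR^2$, and $s$ saturates this with value $|r|/R^2$. The paper instead reformulates the problem as an infinite-dimensional linear program, writes down the Lagrangian with multipliers $(\lambda,\alpha,\beta)\in\R\times L^1\times L^1$, derives the dual LP, and exhibits a dual feasible point attaining $|r|/R^2$; weak duality then gives the matching lower bound. Both arguments are valid. Your direct argument is shorter and more elementary --- it is essentially the sharp case of the pairing between $L^\infty$ and $L^1$ applied to the functional $f\mapsto\int f(b)\,b\,db$, and requires no dual machinery. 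The paper's Lagrangian route carries more overhead here, but it is deliberately structured to mirror the companion result Lemma~\ref{lem:optLinear}, where the analogous optimization is posed over $\S^{n-1}$ and the constraint is vector-valued ($\int_{\S^{n-1}} w f(w)\,\mu_{n-1}(dw)=v$); in that setting the ``weight'' is $\R^n$-valued and a clean one-line H\"older saturation is less immediate, so the LP-duality template earns its keep. For the scalar lemma you were asked about, your argument is the cleaner choice.
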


\begin{proof}
For every $r\in\R$, feasibility of $s$ follows from direct calculation. Note that the value achieved is $|r|/R^2$ To prove that $s$ is optimal, we construct the Lagrange dual and show that the dual also achieves a value of $|r|/R^2$.

The optimization problem is equivalent to the following linear program over $(t,f)\in \R\times L^{\infty}([-R,R])$:
\begin{align*}
& \min_{t,f} && t \\
&\textrm{subject to} && -t \le f(b) \le t \textrm{ for almost all } b\in [-R,R]\\
&&& \int_{-R}^Rf(b)b db = r.
\end{align*}

The Lagrangian is given by:
\begin{align*}
L(t,f,\lambda,\alpha,\beta)&=t+\lambda r+\int_{-R}^{R}\left(-\alpha(b)(t+f(b))+\beta(b)(f(b)-t)-\lambda f(b)b\right)db\\
&=t\left(1-\int_{-R}^{R}(\alpha(b)+\beta(b))db\right)+\int_{-R}^{R}f(b)\left(\beta(b)-\alpha(b)-\lambda b \right)db,
\end{align*}
where $(\lambda,\alpha,\beta)\in \R\times L^1([-R,R])\times L^1([-R,R]).$

The corresponding dual problem is 
\begin{align*}
& \max_{\lambda,\alpha,\beta} && \lambda r \\
&\textrm{subject to} && \alpha(b)\ge 0, \beta(b)\ge 0 \textrm{ for almost all } b\in [-R,R] \\
&&& \int_{-R}^R(\alpha(b)+\beta(b))db = 1 \\
&&& \beta(b)-\alpha(b)=\lambda b \textrm{ for almost all } b\in [-R,R].
\end{align*}

For $r=0$, the only possible dual value is $0$, which matches the corresponding primal value. When $r\ne 0$, we can set
\begin{align*}
\lambda &= \sign(r)/R^2 \\
\alpha(b) &= \begin{cases}
-\lambda b & \lambda b < 0 \\
0 & \lambda b \ge 0 
\end{cases} \\
\beta(b) &= \begin{cases}
\lambda b & \lambda b > 0 \\
0 & \lambda b \le 0.
\end{cases}
\end{align*}
The, by construction, $\alpha(b)+\beta(b)=|\lambda b|$, $(\lambda,\alpha,\beta)$ is dual feasible, and achieves the value of $|r|/R^2$, which matches the primal value. Thus, $s(b)=r\sign(b)/R^2$ is optimal, by  weak duality. 
\end{proof}

\begin{corollary}
\label{cor:constRep}
    For $r\in \R$ and $R>0$ let $s(b)=\frac{r}{R^2
    q}\sign(b)$. The function $\zeta:[-R,R]\to \R$ defined by 
    $\zeta(b)=\frac{2}{A_{n-1}}s(b)$ satisfies $\|\zeta\|_{L^{\infty}([-R,R])}=\frac{2|r|}{A_{n-1}R^2}$ and
    $$
    \int_{-R}^R\int_{\S^{n-1}}\zeta(b)\sigma(w^\top x+b)\mu_{n-1}(dw)db = r. 
    $$
\end{corollary}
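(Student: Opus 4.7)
The norm bound is immediate: the preceding lemma gives $\|s\|_{L^{\infty}([-R,R])} = |r|/R^2$, so scaling by $2/A_{n-1}$ yields $\|\zeta\|_{L^\infty([-R,R])} = 2|r|/(A_{n-1}R^2)$. The substance of the corollary is therefore the integral identity, and my plan is to apply Fubini to separate the $b$- and $w$-integrations, and then reduce matters to two symmetries: the oddness of $\zeta$ in $b$, and the antipodal symmetry of $\mu_{n-1}$ on $\mathbb{S}^{n-1}$.

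After the Fubini step, define the one-dimensional function $G(u) := \int_{-R}^R \zeta(b)\sigma(u+b)\,db$, so that the target integral becomes $\int_{\mathbb{S}^{n-1}} G(w^\top x)\,\mu_{n-1}(dw)$. The key ingredient will be the ReLU identity $\sigma(t) - \sigma(-t) = t$ applied with $t = u+b$, which after integration against $\zeta$ gives
$$G(u) - \int_{-R}^R \zeta(b)\sigma(-(u+b))\,db \;=\; u\int_{-R}^R \zeta(b)\,db \;+\; \int_{-R}^R b\,\zeta(b)\,db.$$
The first right-hand term vanishes because $\zeta$ is odd, while the second equals $2r/A_{n-1}$ from the feasibility condition on $s$ established in the preceding lemma. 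A substitution $b \mapsto -b$ in the integral on the left, combined with $\zeta(-b) = -\zeta(b)$, identifies it with $-G(-u)$. Rearranging then yields the symmetric-sum identity
$$G(u) + G(-u) \;=\; \frac{2r}{A_{n-1}} \qquad \text{for all } u\in\mathbb{R}.$$

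To conclude, I invoke Lemma~\ref{lem:rotationalInvariance} with the orthogonal matrix $U=-I$, which shows the sphere integral is invariant under the antipodal map $w\mapsto -w$, and average:
$$\int_{\mathbb{S}^{n-1}} G(w^\top x)\,\mu_{n-1}(dw) \;=\; \tfrac{1}{2}\int_{\mathbb{S}^{n-1}} \bigl[G(w^\top x) + G(-w^\top x)\bigr]\,\mu_{n-1}(dw) \;=\; \frac{r}{A_{n-1}}\cdot A_{n-1} \;=\; r.$$
The naive obstacle here is the case split one would be tempted to perform when evaluating $G(u)$ directly, according to whether $|u| < R$, $|u| > R$, or $u = \pm R$, since the ReLU changes behavior at $u+b = 0$; the identity $\sigma(t) - \sigma(-t) = t$ sidesteps this entirely by replacing piecewise analysis with a single algebraic manipulation, making the antipodal symmetrization on the sphere the only nontrivial step.
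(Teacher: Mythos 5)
Your proof is correct and takes essentially the same approach as the paper: both rely on the ReLU identity $\sigma(t)-\sigma(-t)=t$, the oddness of $s$ (hence $\zeta$), the antipodal invariance of $\mu_{n-1}$ on $\S^{n-1}$ via Lemma~\ref{lem:rotationalInvariance}, and the feasibility constraint $\int_{-R}^R s(b)\,b\,db=r$ from the preceding lemma. The only difference is organizational: you split the symmetrization into a one-dimensional functional identity $G(u)+G(-u)=2r/A_{n-1}$ followed by spherical averaging, whereas the paper performs a single joint change of variables $(\hat w,\hat b)=(-w,-b)$; the two are interchangeable.
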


\begin{proof}
The value of $\|\zeta\|_{L^{\infty}([-R,R])}$ follows by construction. 

Using the identity $t=\sigma(t)-\sigma(-t)$ gives
\begin{align*}
r&=\frac{1}{A_{n-1}}\int_{-R}^R\int_{\S^{n-1}}s(b)(w^\top x+b)\mu_{n-1}(dw)db \\
&=\frac{1}{A_{n-1}}\int_{-R}^R\int_{\S^{n-1}}s(b)\left(\sigma(w^\top x+b)-\sigma(-w^\top x -b)\right)\mu_{n-1}(dw)db.
\end{align*}

Using the change of coordinates $\hat w=-w$ and $\hat b=-b$, along with Lemma~\ref{lem:rotationalInvariance} gives:
\begin{equation*}
\int_{-R}^R\int_{\S^{n-1}}s(b)\sigma(-w^\top x -b)\mu_{n-1}(dw)db
=\int_{-R}^R\int_{\S^{n-1}}s(-\hat b)\sigma(\hat w^\top x +\hat b)\mu_{n-1}(d\hat w)d\hat b.
\end{equation*}
Plugging this equality result into the previous equality gives:
\begin{equation*}
r=\frac{1}{A_{n-1}}\int_{-R}^R\int_{\S^{n-1}}\left(s(b)-s(-b)\right)\sigma(w^\top x+b)\mu_{n-1}(dw)db.
\end{equation*}
The result now follows after noting that $\sign(b)-\sign(-b)=2\sign(b)$ for all $b\in\R$.
\end{proof}

\begin{lemma}
\label{lem:optLinear}
For $n\ge 1$ and $v\in \R^n$,
the function $q:\S^{n-1}\to \R$ defined by $q(w)=\frac{\|v\|_2}{H_{n-1}}\sign(v^\top w)$, where $H_{n-1}=\frac{2\pi^{\frac{n-1}{2}}}{\Gamma\left(\frac{n+1}{2}\right)}$, is an optimal solution to the following functional optimization problem:
\begin{align*}
&\min_f && \|f\|_{L^{\infty}(\S^{n-1})}\\
&\textrm{subject to} && \int_{\S^{n-1}}w f(w)\mu_{n-1}(dw)=v.
\end{align*}
\end{lemma}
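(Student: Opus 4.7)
The plan is to mirror the Lagrangian duality argument used for the preceding lemma on $s(b) = \frac{r}{R^2}\sign(b)$: verify that $q$ is primal feasible, set up the dual, and produce a dual feasible point whose value matches $\|q\|_{L^\infty(\S^{n-1})}$ so that weak duality forces optimality. Assume $v \ne 0$ (the case $v = 0$ gives $q \equiv 0$ trivially). Pick an orthogonal matrix $U$ with $U^\top v = \|v\|_2 e_1$.

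For feasibility, I would use Lemma~\ref{lem:rotationalInvariance} to write
$$\int_{\S^{n-1}} w\, \sign(v^\top w)\, \mu_{n-1}(dw) = U \int_{\S^{n-1}} z\, \sign(\|v\|_2 z_1)\, \mu_{n-1}(dz).$$
The components $z_i$ for $i \ge 2$ integrate to zero under the reflection $z_i \mapsto -z_i$ (another application of Lemma~\ref{lem:rotationalInvariance}), so only the first coordinate survives and equals $\int_{\S^{n-1}} |z_1|\, \mu_{n-1}(dz)$. For $n \ge 3$, apply Lemma~\ref{eq:1dIntegral} with $g(t) = |t|$ and $v = e_1$, use $\int_0^\pi |\cos\phi_1|\sin^{n-2}\phi_1\, d\phi_1 = 2/(n-1)$, and simplify with $\Gamma((n+1)/2) = \tfrac{n-1}{2}\Gamma((n-1)/2)$ to identify this integral as $H_{n-1}$; the cases $n \in \{1, 2\}$ are checked by direct computation. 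This yields $\int w\, q(w)\, d\mu = \frac{\|v\|_2}{H_{n-1}} \cdot H_{n-1} \cdot \frac{v}{\|v\|_2} = v$ and $\|q\|_{L^\infty(\S^{n-1})} = \|v\|_2 / H_{n-1}$.

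For the dual, recast the problem as a linear program in $(t, f) \in \R \times L^\infty(\S^{n-1})$: minimize $t$ subject to $-t \le f(w) \le t$ a.e.\ and the vector constraint $\int w f(w)\, d\mu = v$. With nonnegative $L^1$ multipliers $\alpha, \beta$ for the inequality constraints and $\lambda \in \R^n$ for the equality constraint, the Lagrangian is
$$L = t\Bigl(1 - \int(\alpha + \beta)\, d\mu\Bigr) + \int f(w)\bigl(\beta(w) - \alpha(w) - \lambda^\top w\bigr)\, d\mu + \lambda^\top v.$$
Requiring finiteness over $(t, f)$ forces $\int(\alpha+\beta)\,d\mu = 1$ and $\beta(w) - \alpha(w) = \lambda^\top w$ a.e. The minimal nonnegative splitting satisfying the difference condition has $\alpha(w)+\beta(w) = |\lambda^\top w|$, so the dual reduces to $\max \lambda^\top v$ subject to $\int |\lambda^\top w|\, d\mu \le 1$. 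Reapplying rotational invariance gives $\int |\lambda^\top w|\, d\mu = \|\lambda\|_2 H_{n-1}$, so the dual becomes $\max \lambda^\top v$ subject to $\|\lambda\|_2 \le 1/H_{n-1}$, attained at $\lambda^* = v/(\|v\|_2 H_{n-1})$ with value $\|v\|_2/H_{n-1}$. This matches the primal value achieved by $q$, and weak duality closes the argument.

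The main obstacle is the clean evaluation of the surface integral $H_{n-1} = \int_{\S^{n-1}} |w_1|\, \mu_{n-1}(dw)$ via the gamma-function identity, plus handling $n \in \{1, 2\}$ separately since Lemma~\ref{eq:1dIntegral} is stated only for $n \ge 3$; the remainder is a direct parallel of the LP duality computation used in the lemma just preceding this one.
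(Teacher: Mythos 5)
Your proposal is correct and follows essentially the same Lagrangian-duality route as the paper: verify feasibility of $q$ via rotational invariance and the evaluation of $H_{n-1}=\int_{\S^{n-1}}|z_1|\,\mu_{n-1}(dz)$ (with $n\in\{1,2\}$ handled by hand), reduce the dual to $\max\,\lambda^\top v$ over $\int|\lambda^\top w|\,d\mu\le 1$, exhibit $\lambda^\star=v/(\|v\|_2 H_{n-1})$, and close with weak duality. The only cosmetic deviations are that you observe the dual constraint is exactly a Euclidean ball $\|\lambda\|_2\le 1/H_{n-1}$ by rotational invariance (the paper just checks its chosen $\lambda$ is feasible) and you argue $\int z_i\,\sign(z_1)\,d\mu=0$ by a reflection rather than a spherical-coordinate computation; neither changes the substance.
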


\begin{proof}
For $v=0$, the function becomes $q(w)=0$ for all $w\in \S^{n-1}$, which is feasible and achieves the smallest possible norm. Thus the lemma holds in this case. The rest of the proof will focus on the case that $v\ne 0$.

The proof proceeds as follows. We show that $q(w)=\frac{\|v\|_2}{H_{n-1}}\sign(v^\top w)$ is feasible. Note  here that the value obtained is $\|v\|_2/H_{n-1}.$ Then we will construct the Lagrange dual to the optimization problem and find a dual solution also obtaining value $\|v\|_2/H_{n-1}.$ It then will follow from weak duality that $q$ is optimal. 

To show that $q$ is feasible, we must show that the constraint holds. For this calculation, it is more convenient to work in coordinates in which $v$ is aligned with the first unit vector. To this end, set $u_1=v/\|v\|_2$, and let $U=\begin{bmatrix}u_1 & \ldots & u_n\end{bmatrix}$ be an orthogonal matrix. Let $z=U^\top w$. Then, using Lemma~\ref{lem:rotationalInvariance} on rotational invariance of the sphere, $q$ satisfies the constraint if and only if:
\begin{align*}
\int_{\S^{n-1}}U^\top w q(w)\mu_{n-1}(dw)&=
\frac{\|v\|_2}{H_{n-1}}\int_{\S^{n-1}}z\sign(z_1)\mu_{n-1}(dz)\\
&=\frac{\|v\|_2}{H_{n-1}}\int_{\S^{n-1}}
\begin{bmatrix}
|z_1|\\
z_2 \sign(z_1) \\
\vdots \\
z_n\sign(z_1)
\end{bmatrix}
\mu_{n-1}(dz) \\
&=U^\top v = \begin{bmatrix}
\|v\|_2 \\
0 \\
\vdots \\
0
\end{bmatrix}.
\end{align*}

Thus, it suffices to show that 
\begin{align}
\label{eq:coordinateIntegral}
H_{n-1}&=\int_{\S^{n-1}}|z_1|\mu_{n-1}(dz) = \frac{2\pi^{n-1}}{\Gamma\left(\frac{n+1
}{2}\right)} \\
\label{eq:signOrthogonal}
0&=\int_{\S^{n-1}}z_i\sign(z_1)\mu_{n-1}(dz) \textrm{ for } i=2,\ldots,n.
\end{align}

For $n=1$, only \eqref{eq:coordinateIntegral}  must be shown, and in this case, both sides evaluate to $2$. For $n=2$, both sides of \eqref{eq:coordinateIntegral} evaluate to $4$, and \eqref{eq:signOrthogonal} holds by direct calculation. 

For $n\ge 3$, Lemma~\ref{eq:1dIntegral}, followed by  some manipulations gives:
\begin{align*}
\int_{\S^{n-1}}|z_1|\mu_{n-1}(dz)&=A_{n-2}\int_0^\pi |\cos(\phi_1)|\sin^{n-2}(\phi_1)d\phi \\
&=2A_{n-2}\int_0^{\pi/2} \cos(\phi_1)\sin^{n-2}(\phi_1)d\phi \\
&=\frac{2A_{n-2}}{n-1} \\
&=\frac{4\pi^{\frac{n-1}{2}}}{(n-1)\Gamma\left(\frac{n-1}{2}\right)}\\
&=\frac{2\pi^{\frac{n-1}{2}}}{\Gamma\left(\frac{n+1}{2}\right)}.
\end{align*}
Thus, \eqref{eq:coordinateIntegral} holds for all $n\ge 1$.

Now we evaluate the integrals from \eqref{eq:signOrthogonal} via Lemma~\ref{lem:coordinateIntegral}. 
%
For $i=2,\ldots,n$, integrating over $\phi_2, \ldots, \phi_{n-1}$ gives:
\begin{align*}
\int_{\S^{n-1}}z_i\sign(z_1)\mu_{n-1}(dz)
&\propto \int_{0}^\pi \sign(\cos(\phi_1))\sin^{n-1}(\phi_1)d\phi_1 \\
&=\int_0^{\pi/2}\sin^{n-1}(\phi_1)d\phi_1 - \int_{\pi/2}^{\pi}\sin^{n-1}(\phi_1)d\phi_1 = 0.
\end{align*}
The final equality follows from substitution $\psi=\pi-\phi_1$ in the second integral:
\begin{align*}
\int_{\pi/2}^{\pi}\sin^{n-1}(\phi_1)d\phi_1 &=\int_0^{\pi/2}\sin^{n-1}(\pi-\psi)d\psi \\
&=\int_0^{\pi/2}\sin^{n-1}(\psi)d\psi. 
\end{align*}
Thus, \eqref{eq:signOrthogonal} holds for all $n\ge 1$ and all $2\le i \le n$.

Since \eqref{eq:coordinateIntegral} and \eqref{eq:signOrthogonal} hold, the function $q$ is feasible, giving objective value $\|v\|_2/H_{n-1}.$

Now we will derive the Lagrange dual, and find a dual solution obtaining value $\|v\|_2/H_{n-1}$.

The optimization problem from the lemma statement can be posed equivalently as:
\begin{align*}
&\min_{t,f} && t\\
&\textrm{subject to} && \int_{\S^{n-1}}w f(w)\mu_{n-1}(dw)=v \\
&&& -t\le f(w)\le t \textrm{ for almost all } w\in\S^{n-1}.
\end{align*}
This is an infinite dimensional linear program over variables $(t,f)\in \R\times L^{\infty}(\S^{n-1})$.

The Lagrangian of the reformulated problem is given by
\begin{align*}
L(t,f,\lambda,\alpha,\beta)
&=t+\lambda^\top \left(v- \int_{\S^{n-1}}w f(w)\mu_{n-1}(dw)\right)
\\ &+\int_{\S^{n-1}}\left(-\alpha(w)(t+f(w))+\beta(w)(f(w)-t) \right)\mu_{n-1}(dw) \\
&=\lambda^\top v+
t\left(1-\int_{\S^{n-1}}(\alpha(w)+\beta(w))\mu_{n-1}(dw) \right)\\
&+\int_{\S^{n-1}}f(w)\left(\beta(w)-\alpha(w)-\lambda^\top w \right)\mu_{n-1}(dw),
\end{align*}
with dual variables $(\lambda,\alpha,\beta)\in \R^n\times L^1(\S^{n-1})\times L^1(\S^{n-1}).$

The associated dual problem is given by:
\begin{align*}
&\max_{\lambda,\alpha,\beta} && \lambda^\top v \\
&\textrm{subject to} && \alpha(w)\ge 0, \beta(w)\ge 0, \textrm{ for almost all } w\in \S^{n-1} \\
&&& \int_{\S^{n-1}}(\alpha(w)+\beta(w))\mu_{n-1}(dw)=1\\
&&& \beta(w)-\alpha(w)=\lambda^\top w, \textrm{ for almost all } w\in \S^{n-1}.
\end{align*}

We claim that the dual problem is equivalent to:
\begin{subequations}
\label{eq:vDual}
\begin{align}
&\max_{\lambda,\alpha,\beta} && \lambda^\top v \\
&\textrm{subject to} && \int_{\S^{n-1}}|\lambda^\top w|\mu_{n-1}(dw)\le 1.
\end{align}
\end{subequations}

Indeed, for a dual feasible $(\lambda,\alpha,\beta)$, we must have $|\lambda^\top w|\le \alpha(w)+\beta(w)$, and so $\lambda$ is feasible for \eqref{eq:vDual}.

Conversely, given any $\lambda$ feasible for \eqref{eq:vDual}, let 
$$
s=\int_{\S^{n-1}}|\lambda^\top w|\mu_{n-1}(dw)\le 1.
$$
Then, we can construct corresponding dual feasible $\alpha$ and $\beta$ by setting
\begin{align*}
\alpha(w)&=\begin{cases}
-\lambda^\top w +\frac{1-s}{2A_{n-1}}& \lambda^\top w <0 \\
\frac{1-s}{A_{n-1}} & \lambda^\top w \ge 0
\end{cases}\\
\beta(w)&=\begin{cases}
\lambda^\top w + \frac{1-s}{2A_{n-1}} &\lambda^\top w > 0 \\
\frac{1-s}{2A_{n-1}}& \lambda^\top w \le 0.
\end{cases}
\end{align*}

Recall that we are examining the case that $v\ne 0.$
Let 
$$
\lambda = \frac{1}{\|v\|_2 H_{n-1}}v.
$$
Note that $\lambda^\top v = \|v\|_2/H_{n-1}$, which was the value obtained by $q$ on the primal problem. As discussed above, $\lambda$ will correspond to a dual feasible solution as long as it is feasible for \eqref{eq:vDual}.

Recall the change of coordinates from above, $z=U^\top w$, where $z_1=v^\top w/\|v\|_2$. Using rotational invariance of the sphere, Lemma~\ref{lem:rotationalInvariance},
gives:
\begin{align*}
\int_{\S^{n-1}}|\lambda^\top w|\mu_{n-1}(dw)&=\frac{1}{\|v\|_2 H_{n-1}}\int_{\S^{n-1}}|v^\top w|\mu_{n-1}(dw)\\
&=\frac{1}{H_{n-1}}\int_{\S^{n-1}}|z_1|\mu_{n-1}(dz) \\
&=1,
\end{align*}
where the final equality is from \eqref{eq:coordinateIntegral}.

Thus, $\lambda$ is feasible for \eqref{eq:vDual}. By weak duality, the value achieved, $\lambda^\top v=\|v\|_2/H_{n-1}$,
is a lower bound on the achievable value for the optimization problem from the lemma statement. Thus, $q$ must be  optimal. 
\end{proof}

\begin{corollary}
\label{cor:linRep}
For $n\ge 1$ and $v\in \R^n$, let $q(w)=\frac{\|v\|_2}{H_{n-1}}\sign(v^\top w)$, where $H_{n-1}=\frac{2\pi^{n-1}}{\Gamma\left(\frac{n+1}{2}\right)}$. The function $p(w)=\frac{1}{R}q(w)$ has $\|p\|_{L^{\infty}(\S^{n-1})}=\frac{\|v\|_2}{H_{n-1} R}$ and satisfies
$$
\int_{-R}^R\int_{\S^{n-1}}p(w)\sigma(w^\top x+b)\mu_{n-1}(dw)db=v^\top x
$$
for all $x\in\R^n.$
\end{corollary}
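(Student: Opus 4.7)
The norm identity $\|p\|_{L^\infty(\S^{n-1})}=\|v\|_2/(H_{n-1}R)$ is immediate from the definition of $p$, since $|\mathrm{sign}(\cdot)|\le 1$ with equality attained on a set of full measure. The substantive content is the integral representation, and the plan is to reduce it to the moment identity for $q$ that is already established inside the proof of Lemma~\ref{lem:optLinear}.

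The key observation is that $p$ is odd in $w$: $p(-w)=-p(w)$, because $\mathrm{sign}(v^\top(-w))=-\mathrm{sign}(v^\top w)$. Combined with the evenness of the domain $[-R,R]$, I perform the simultaneous change of coordinates $\hat w=-w$, $\hat b=-b$ and invoke Lemma~\ref{lem:rotationalInvariance} (rotational invariance of $\mu_{n-1}$ under the orthogonal map $w\mapsto -w$) to obtain
$$
\int_{-R}^R\!\int_{\S^{n-1}} p(w)\,\sigma(-w^\top x-b)\,\mu_{n-1}(dw)\,db
=-\int_{-R}^R\!\int_{\S^{n-1}} p(w)\,\sigma(w^\top x+b)\,\mu_{n-1}(dw)\,db.
$$
Averaging the original integral with its image under this change of variables and applying the pointwise identity $\sigma(t)-\sigma(-t)=t$ with $t=w^\top x+b$ then collapses the ReLU to a linear expression:
$$
\int_{-R}^R\!\int_{\S^{n-1}} p(w)\,\sigma(w^\top x+b)\,\mu_{n-1}(dw)\,db
=\tfrac{1}{2}\int_{-R}^R\!\int_{\S^{n-1}} p(w)\,(w^\top x+b)\,\mu_{n-1}(dw)\,db.
$$

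Splitting the right-hand side by Fubini gives a term $\int_{\S^{n-1}} p(w)\,w^\top x\,\mu_{n-1}(dw)\cdot\int_{-R}^R db$ and a term $\int_{\S^{n-1}} p(w)\,\mu_{n-1}(dw)\cdot\int_{-R}^R b\,db$. The second term vanishes because $\int_{-R}^R b\,db=0$. For the first, I substitute $p=q/R$ and invoke the moment identity $\int_{\S^{n-1}} w\,q(w)\,\mu_{n-1}(dw)=v$, which was established as the feasibility check for $q$ inside the proof of Lemma~\ref{lem:optLinear} (via the rotation aligning $v$ with the first coordinate axis and the evaluations of \eqref{eq:coordinateIntegral} and \eqref{eq:signOrthogonal}). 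The prefactor $\tfrac{1}{2}\cdot 2R\cdot(1/R)=1$ works out cleanly, and the integral equals $v^\top x$ for every $x\in\R^n$.

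No step is a real obstacle: the odd-symmetry trick mirrors precisely the one used in Corollary~\ref{cor:constRep}, and the required moment identity for $q$ is already proved. The only mild point of care is that oddness must be exploited in $w$ and $b$ \emph{simultaneously} so that the ReLU identity $\sigma(t)-\sigma(-t)=t$ applies to the combined argument $t=w^\top x+b$; flipping only one variable would leave the ReLUs with mismatched arguments.
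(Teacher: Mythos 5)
Your proof is correct and takes essentially the same route as the paper's: both use the moment identity $\int_{\S^{n-1}}wq(w)\,\mu_{n-1}(dw)=v$ from Lemma~\ref{lem:optLinear}, the ReLU identity $\sigma(t)-\sigma(-t)=t$, and the simultaneous change of variables $\hat w=-w$, $\hat b=-b$ together with the oddness of $\sign$ to collapse $q(w)-q(-w)=2q(w)$. The only cosmetic difference is direction — the paper starts from $v^\top x$ and expands forward into the ReLU integral, while you start from the integral and symmetrize it down to the linear moment — but the steps are identical.
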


\begin{proof}
The value of $\|p\|_{L^{\infty}(\S^{n-1})}$ is a direct calculation. 

Using Lemma~\ref{lem:optLinear}, followed by the identity $t=\sigma(t)-\sigma(-t)$ gives:
\begin{align*}
v^\top x &= \frac{1}{2R}\int_{-R}^R\int_{\S^{n-1}}q(w)\left(w^\top x+b\right)\mu_{n-1}(dw)db \\
&=\frac{1}{2R}\int_{-R}^R\int_{\S^{n-1}}q(w)\left(\sigma\left(w^\top x+b\right)
-\sigma\left(-w^\top x-b\right)
\right)
\mu_{n-1}(dw)db.
\end{align*}

Then using the change of coordinates $\hat w = -w$ and $\hat b=-b$ (and rotational invariance of $\S^{n-1}$) gives
$$
\int_{-R}^R\int_{\S^{n-1}}q(w)\sigma(-w^\top x-b)\mu_{n-1}(dw)db=\int_{-R}^R\int_{\S^{n-1}}q(-\hat w)\sigma(\hat w^\top x +\hat b)\mu_{n-1}(d\hat w)d\hat b.
$$
Plugging this equality in the expression above gives that:
$$
v^\top x = \frac{1}{2R}\int_{-R}^R\int_{\S^{n-1}}\left(q(w)-q(-w)\right)\sigma(w^\top x + b)\mu_{n-1}(dw)db.
$$
The result follows after noting that $\sign(t)-\sign(-t)=2\sign(t)$ for all $t\in\R$.
\end{proof}

\begin{lemma}
\label{lem:integralNoAffine}
Say that $n\ge 1$ and $R>0$
If $g:\R^n\to\R$ has $\|g\|_{F^{n+3}}<\infty$, then there is a function $\ell:\S^{n-1}\times [-R,R]$ such that 
\begin{equation}
\label{eq:integralNoAffine}
g(x)=\int_{\S^{n-1}}\int_{-R}^R \ell(w,b) \sigma(w^\top x + b)db \mu_{n-1}(dw),
\end{equation}
for almost all $\|x\|_2\le R$. Furthermore,
$$
\|\ell\|_{L^{\infty}(\S^{n-1}\times [-R,R])}\le 
\left(1+2\frac{1+R}{R^2}+\frac{1}{R}\sqrt{\frac{n \pi}{2}}\right)
\frac{2}{(2\pi)^n}
\|g\|_{F^{n+3}}.
$$
\end{lemma}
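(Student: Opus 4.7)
The strategy is to take the integral representation from Lemma~\ref{lem:integral}, which expresses $g$ as a ReLU integral plus an affine remainder $v^\top x + r$, and then absorb the affine remainder into the ReLU integral using Corollaries~\ref{cor:constRep} and \ref{cor:linRep}. Concretely, first I would apply Lemma~\ref{lem:integral} to get
\[
g(x)=\int_{-R}^{R}\int_{\S^{n-1}}\xi(w,b)\sigma(w^\top x+b)\,\mu_{n-1}(dw)\,db+v^\top x+r
\]
for almost all $\|x\|_2\le R$, together with the three $L^\infty$ and norm bounds stated there. Then I would use Corollary~\ref{cor:linRep} to replace $v^\top x$ by $\int\int p(w)\sigma(w^\top x+b)\,db\,\mu_{n-1}(dw)$ with $\|p\|_{L^\infty(\S^{n-1})}=\|v\|_2/(H_{n-1}R)$, and Corollary~\ref{cor:constRep} to replace $r$ by $\int\int \zeta(b)\sigma(w^\top x+b)\,\mu_{n-1}(dw)\,db$ with $\|\zeta\|_{L^\infty([-R,R])}=2|r|/(A_{n-1}R^2)$. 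Defining $\ell(w,b):=\xi(w,b)+p(w)+\zeta(b)$ immediately gives the representation \eqref{eq:integralNoAffine}.

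The only remaining work is a triangle-inequality bound on $\|\ell\|_{L^\infty(\S^{n-1}\times[-R,R])}$. Plugging in the bounds from Lemma~\ref{lem:integral}:
\[
\|\xi\|_{L^\infty}\le \frac{2}{(2\pi)^n}\|g\|_{F^{n+3}},\quad \|\zeta\|_{L^\infty}\le \frac{2(R+1)}{R^2}\cdot\frac{2}{(2\pi)^n}\|g\|_{F^{n+3}},
\]
\[
\|p\|_{L^\infty}\le \frac{1}{H_{n-1}R}\cdot\frac{2A_{n-1}}{(2\pi)^n}\|g\|_{F^{n+3}}.
\]
The three contributions factor out $\frac{2}{(2\pi)^n}\|g\|_{F^{n+3}}$, giving coefficients $1$, $2(1+R)/R^2$, and $A_{n-1}/(H_{n-1}R)$ respectively.

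The only nontrivial bit is to show $A_{n-1}/H_{n-1}\le \sqrt{n\pi/2}$, which is what produces the $\frac{1}{R}\sqrt{n\pi/2}$ term in the statement. Using $A_{n-1}=2\pi^{n/2}/\Gamma(n/2)$ and $H_{n-1}=2\pi^{(n-1)/2}/\Gamma((n+1)/2)$, the ratio simplifies to
\[
\frac{A_{n-1}}{H_{n-1}}=\sqrt{\pi}\cdot\frac{\Gamma((n+1)/2)}{\Gamma(n/2)}.
\]
Here I would invoke a standard Gamma-ratio inequality (Wendel's inequality, or equivalently log-convexity of $\Gamma$), which gives $\Gamma(x+1/2)/\Gamma(x)\le \sqrt{x}$; applied at $x=n/2$ this yields $\Gamma((n+1)/2)/\Gamma(n/2)\le\sqrt{n/2}$, hence $A_{n-1}/H_{n-1}\le\sqrt{n\pi/2}$.

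Summing the three coefficients produces exactly the bound
\[
\|\ell\|_{L^\infty(\S^{n-1}\times[-R,R])}\le\Bigl(1+\tfrac{2(1+R)}{R^2}+\tfrac{1}{R}\sqrt{\tfrac{n\pi}{2}}\Bigr)\tfrac{2}{(2\pi)^n}\|g\|_{F^{n+3}},
\]
completing the proof. The substantive obstacle is the Gamma-ratio estimate; everything else is bookkeeping built on the preceding lemmas. I should also note that because $p$ depends only on $w$ and $\zeta$ only on $b$, they are automatically measurable and bounded on the product space, so the sum $\ell$ is well-defined in $L^\infty(\S^{n-1}\times[-R,R])$.
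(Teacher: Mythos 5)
Your proposal is correct and follows essentially the same route as the paper: combine Lemma~\ref{lem:integral} with Corollaries~\ref{cor:constRep} and \ref{cor:linRep} to define $\ell=\xi+\zeta+p$, then bound $\|\ell\|_{L^\infty}$ by the triangle inequality and control the Gamma ratio $A_{n-1}/H_{n-1}\le\sqrt{n\pi/2}$. The only cosmetic difference is the citation for the Gamma-ratio bound: you use Wendel's inequality $\Gamma(x+1/2)/\Gamma(x)\le\sqrt{x}$ uniformly in $n\ge1$, whereas the paper invokes Gautschi's inequality for $n\ge2$ and checks $n=1$ by hand.
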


\begin{proof}
Let $\xi$, $r$, and $v$ be the function, number and vector from Lemma~\ref{lem:integral}. Let $\zeta$ be the function from Corollary~\ref{cor:constRep} corresponding to $r$ and let $p$ be the function from Corollary~\ref{cor:linRep} corresponding to $v$. Then, by construction
$$
\ell(w,b)=\xi(w,b)+\zeta(b)+p(w)
$$
satisfies the integral representation from \eqref{eq:integralNoAffine} for almost all $\|x\|_2\le R$. 

We bound $\|\ell\|_{L^{\infty}(\S^{n-1}\times [-R,R])}$ via the triangle inequality, followed by the bounds on $\|g\|_{L^{\infty}(\S^{n-1}\times [-R,R])}$, $|r|$, and $\|v\|_2$:
\begin{align*}
\|\ell\|_{L^{\infty}(\S^{n-1}\times [-R,R])}&\le \|\xi\|_{L^{\infty}(\S^{n-1}\times[-R,R])}
+\|\zeta\|_{L^{\infty}([-R,R])}+\|p\|_{L^{\infty}(\S^{n-1})} \\
&\le \frac{2}{(2\pi)^n}\|g\|_{F^{n+3}}+\frac{2|r|}{A_{n-1}R^2}+\frac{\|v\|_2}{H_{n-1}R} \\
&\le \frac{2}{(2\pi)^n}\|g\|_{F^{n+3}}+\frac{\frac{4A_{n-1}}{(2\pi)^n}\left( 1+ R\right) \|g\|_{F^{n+3}}}{A_{n-1}R^2}
+\frac{\frac{ 2 A_{n-1}}{(2\pi)^n} \|g\|_{F^{n+3}} }{H_{n-1}R}\\
&=\left(1+2\frac{1+R}{R^2}+\frac{1}{R}\frac{2\pi^{n/2}}{\Gamma(n/2)}\frac{\Gamma\left(\frac{n+1}{2}\right)}{2 \pi^{\frac{n-1}{2}}}\right)
\frac{2}{(2\pi)^n}
\|g\|_{F^{n+3}}
\\
&\le 
\left(1+2\frac{1+ R}{R^2}+\frac{1}{R}\sqrt{\frac{n\pi}{2}}\right)
\frac{2}{(2\pi)^n}
\|g\|_{F^{n+3}}
\end{align*}
The final inequality uses that for $n\ge 2$
$$
\frac{\Gamma\left(\frac{n+1}{2}\right)}{\Gamma\left(\frac{n}{2}\right)}< \sqrt{\frac{n-1}{2}}
<\sqrt{\frac{n}{2}},
$$
by Gautschi's inequality. For $n=1$, direct calculation gives
$$
\frac{\Gamma\left(\frac{n+1}{2}\right)}{\Gamma\left(\frac{n}{2}\right)}=\frac{1}{\sqrt{\pi}}<
\sqrt{\frac{n}{2}}.
$$
\end{proof}

\subsection{Proof of Proposition~\ref{prop:generalApproximation}}
\label{apss:approximationPf}

We now complete the proof of the approximation result. We refine the argument from \cite{lamperski2024functiongradientapproximationrandom}. The differences are as follows:
\begin{itemize}
\item The approximation here is based on the integral representation from Lemma~\ref{lem:integralNoAffine}, which removes the affine terms.
\item Logarithmic dependence  on $m$, the number of neurons, is removed via the Dudley entropy bound from Lemma~\ref{lem:dudley}.
\item  The approximation is proved to hold up to a set of measure zero, since here, we only assume the inverse Fourier transform relation (and thus the integral representation) hold almost everywhere.
\end{itemize}

Let $\bm{w}_i$ be uniform over $\S^{n-1}$ and $\bm{b}_i$ be uniform over $[-R,R]$ with all random variables independent. Note that $(\bm{w}_i,\bm{b}_i)$ has density $\frac{1}{2RA_{n-1}}$ over $\S^{n-1}\times [-R,R]$. Set 
\begin{align*}
    \bm{c}_i &=\frac{2R A_{n-1} \ell(\bm{w}_i,\bm{b}_i)}{m} \\
    \bm{\zeta}_i(x)&=2 R A_{n-1}\ell(\bm{w}_i,\bm{b}_i)\sigma(\bm{w}_i^\top x+\bm{b}_i)-g(x) \\
    \bm{\gamma}(x)&=\frac{1}{m}\sum_{i=1}^m\bm{\zeta}_i(x).
\end{align*}

The bound on $|\bm{c}_i|$ follows from:
\begin{align*}
2R\|\ell\|_{L^{\infty}(\S^{n-1}\times [-R,R])}&\le 2R
\left(1+2\frac{1+ R}{R^2}+\frac{1}{R}\sqrt{\frac{n \pi}{2}}\right)\frac{2}{(2\pi)^n}\|g\|_{F^{n+3}} \\
&\le \left(2R + 4+
3\sqrt{n}  + 4R^{-1}\right)\frac{2}{(2\pi)^n}\|g\|_{F^{n+3}}.
\end{align*}

The rest of the proof focuses on proving that the approximation error, $\bm{\gamma}(x)$, concentrates around $0$.

Equation~\ref{eq:L1Bounds} from the proof of Lemma~\ref{lem:integral} shows that for almost all $x\in \R^n$, the inverse Fourier transform relation holds, \eqref{eq:ift}, and the following bounds hold:
\begin{align*}
|g(x)| &\le \frac{2A_{n-1}}{(2\pi)^{n}}\|g\|_{F^{n+3}}.
\end{align*}
Furthermore, the derivative rule for Fourier transforms gives
$$
\nabla g(x)=\int_{\R^n}e^{j2\pi \omega^\top x}\hat g(\omega)2\pi \omega d\omega, 
$$
for almost all $x$. So, \eqref{eq:L1Bounds} then implies that 
$$
\|\nabla g(x)\|_2\le \frac{2A_{n-1}}{(2\pi)^n}\|g\|_{F^{n+3}}
$$
for almost all $x$.

The bounds above, combined with Lemma~\ref{lem:integral} imply that 
there is a set $\cS\subset B_R$ such that $B_{R}\setminus \cS$ has Lebesgue measure zero, the inverse Fourier transform relation, \eqref{eq:ift} holds on $\cS$, $\bm{\zeta}_i(x)$ have mean zero on $\cS$, and $g$ is bounded by $\frac{2A_{n-1}}{(2\pi)^n}$ on $\cS$, and $g$ is $\frac{2A_{n-1}}{(2\pi)^n}\|g\|_{F^n+3}$-Lipschitz on $\cS$.

If $x\in B_R$, then, $|\bm{w}_i^\top x + \bm{b}_i|\le 2R$. It follows that: 
$$
\|\bm{\zeta}_i\|_{L^{\infty}(B_R)}\le 4R^2 A_{n-1}\|\ell\|_{L^{\infty}(\S^{n-1}\times [-R,R])} + \frac{2A_{n-1}}{(2\pi)^n}\|g\|_{F^{n+3}}:=\beta.
$$
It follows that for all $x\in \cS$, $\bm{\zeta}_i(x)$ is $\beta$-sub-Gaussian.

Let
$$
\bm{z}=\sup_{x\in\cS}|\bm{\gamma}(x)|=\sup_{(x,s)\in\cS\times\{-1,1\}}s\bm{\gamma}(x)=\sup_{(x,s)\in \cS\times \{-1,1\}}\frac{1}{m}\sum_{i=1}^ms\bm{\zeta}_i(x).
$$

The functional Hoeffding theorem (Theorem 3.2.6 of \cite{wainwright2019high}) implies that for all $\epsilon >0$, 
$$
\Prob\left(\bm{z}\ge \E[\bm{z}]+\epsilon\right)\le e^{-\frac{m\epsilon^2}{16\beta^2}}.
$$
Setting the right side equal to $\delta\in (0,1)$ gives
\begin{equation}
\label{eq:functionalHoeffding}
\Prob\left(\bm{z}\ge \E[\bm{z}]+\frac{4\beta}{\sqrt{m}}\sqrt{\log(\delta^{-1})}\right)\le \delta.
\end{equation}

Now, we bound $\E[\bm{z}]$ using Lemma~\ref{lem:dudley}. Since $\bm{\zeta}_i(x)$ are zero mean and $\beta$-sub-Gaussian for all $x\in\cS$, independence implies that $s\bm{\gamma}(x)$ are $(\beta/\sqrt{m})$-sub-Gaussian for all $(x,s)\in\cS\times\{-1,1\}$.

Now we must examine the continuity properties of $\gamma$.
Since $g$ is $\frac{ 2A_{n-1}}{(2\pi)^n}$-Lipschitz and $\sigma$ is $1$-Lipschitz, $\bm{\zeta}_i$ is $L$-Lipschitz, where
$$
L = 2RA_{n-1}\|\ell\|_{L^\infty(\S^{n-1}\times[-R,R])}+\frac{2A_{n-1}}{(2\pi)^n}\|g\|_{F^{n+3}}.
$$
Thus, for all $x,y\in\cS$, $\bm{\zeta}_i(x)-\bm{\zeta}_i(y)$ must then be $L\|x-y\|_2$-sub-Gaussian. It follows that $\bm{\gamma}(x)-\bm{\gamma}(y)$ is $\frac{L\|x-y\|_2}{\sqrt{m}}$-sub-Gaussian. 

For $(x,a),(y,b)\in \cS\times \{-1,1\}$ we bound $\E\left[\exp\left(\lambda \left(a\bm{\gamma}(x)-b\bm{\gamma}(y)\right)\right)\right]$. If $a=b$, then 
\begin{subequations}
\label{eq:gammaSubGauss}
\begin{equation}
\label{eq:gammaSubGaussSame}
\E\left[\exp\left(\lambda \left(a\bm{\gamma}(x)-b\bm{\gamma}(y)\right)\right)\right]\le 
\exp\left(\frac{\lambda^2 L^2\|x-y\|_2^2}{2m}\right).
\end{equation}

If $a\ne b$, then
$$
| a\bm{\zeta}_i(x)-b\bm{\zeta}_i(y)|\le 2\beta. 
$$
It follows that $a\bm{\gamma}(x)-b\bm{\gamma}(y) $ is $\frac{2\beta}{\sqrt{m}}$-sub-Gaussian, in this case. Thus, here we have
\begin{equation}
\label{eq:gammaSubGaussDiff}
\E\left[\exp\left(\lambda \left(a\bm{\gamma}(x)-b\bm{\gamma}(y)\right)\right)\right]\le 
\exp\left(\frac{\lambda^2 4\beta^2}{2m}\right).
\end{equation}
\end{subequations}

Define the metric $d$ over $\cS\times \{-1,1\}$ by
$$
d((x,a),(y,b))=\indic(a=b)\frac{L\|x-y\|_2}{\sqrt{m}}+\indic(a\ne b)\frac{\max\{2\beta,2LR\}}{\sqrt{m}},
$$
where 
$$
\indic(\mathcal{C})=\begin{cases}
1 & \textrm{if condition } \mathcal{C} \textrm{ holds}\\
0 & \textrm{otherwise}.
\end{cases}
$$
The $\max\{2\beta,2LR\}$ term is used instead of just $2\beta$ to ensure that the triangle inequality holds, and so $d$ is a metric. Note that if $d((x,a),(y,b))< 2LR/\sqrt{m}$, then $a=b$ must hold.

The inequalities from \eqref{eq:gammaSubGauss} imply that $
a\bm{\gamma}(x)-b\bm{\gamma}(y)
$ is $d((x,a),(y,b))$-sub-Gaussian. Furthermore, we have that
$$
|a\bm{\gamma}(x)-b\bm{\gamma}(y)|\le \sqrt{m}d((x,a),(y,b)).
$$
In other words, $s\bm{\gamma}(x)$ is $\sqrt{m}$-Lipschitz in $(x,s)$. 

For compact notation, set $\cX=\cS\times \{-1,1\}$. Recall that $N(t,\cX,d)$ denotes the $t$-covering number of $\cX$ with respect to the metric $d$. Lemma~\ref{lem:dudley} now gives that for all $\epsilon >0$
\begin{equation}
\label{eq:dudleyBound1}
\E[\bm{z}]\le \frac{\beta}{\sqrt{m}}\sqrt{2\log(N(\epsilon/2,\cX,d))}
+4\int_{0}^{\epsilon}\sqrt{2\log(N(t,\cX,d))}dt.
\end{equation}

If $\cU$ is an $\rho$-covering of $B_1$ with respect to $\|\cdot\|_2$, then the scaled set 
$$
R\cU=\{Ru|u\in\cU\}
$$
is an $(R\rho)$-covering of $B_R$ with respect to $\|\cdot\|_2$, and thus an $\left(\frac{\rho RL}{\sqrt{m}}\right)$-covering of $B_R$ with respect to $\frac{L}{\sqrt{m}}\|\cdot\|_2$. In particular, if $\rho < 2$, then $(R\cU)\times \{-1,1\}$ is a $\left(\frac{\rho RL}{\sqrt{m}}\right)$-covering of $\cS\times \{-1,1\}=\cX$.

Set $\epsilon=\frac{\rho RL}{\sqrt{m}}$ for some $\rho \in (0,2)$. For $t\in (0,\epsilon]$, set $t=\frac{u RL}{\sqrt{m}}$. The argument above shows that
$$
N(t,\cX,d)=N\left(\frac{u RL}{\sqrt{m}},\cX,d \right)\le 2N(u,B_1,\|\cdot\|_2)\le 2\left(1+\frac{2}{u}\right)^n.
$$
The bound on $N(u,B_1,\|\cdot\|_2)$ was given in Example 5.8 of \cite{wainwright2019high}.

Using the substitutions $\epsilon=\frac{\rho RL}{\sqrt{m}}$ and $t=\frac{u RL}{\sqrt{m}}$ in \eqref{eq:dudleyBound1} gives
\begin{equation}
\label{eq:dudleySub}
\E[\bm{z}]\le \frac{\beta}{\sqrt{m}}\sqrt{2\log\left(2\left(1+\frac{4}{\rho}\right)^n\right)}
+\frac{4RL}{\sqrt{m}}\int_0^{\rho}\sqrt{2\log\left(2\left(1+\frac{2}{u}\right)^n\right)}du.
\end{equation}

In principle, $\rho$ could be tuned to optimize the bound. For simplicity, we set $\rho = 0.1$, which leads to:
\begin{align*}
\sqrt{2\log\left(2\left(1+\frac{4}{0.1}\right)^n\right)}&\le \sqrt{n}\sqrt{2\log(2\cdot 41)}
\le 3\sqrt{n}
\end{align*}
and
\begin{align*}
4 \int_0^{0.1}\sqrt{2\log\left(2\left(1+\frac{2}{u}\right)^n\right)}du &\le 
4\sqrt{n}\int_0^{0.1}\sqrt{2\log\left(2\left(1+\frac{2}{u}\right)\right)}du\\
&\le 1.5\sqrt{n}.
\end{align*}

Thus, we have
\begin{equation*}
\E[\bm{z}]\le \left(3\beta
+\frac{3RL}{2}\right)\sqrt{\frac{n}{m}}.
\end{equation*}

Plugging in the definitions of $\beta$ and $L$, followed by the upper bound on $\|\ell\|_{L^{\infty}(\S^{n-1}\times [-R,R])}$ gives
\begin{align*}
\MoveEqLeft[0]
\E[\bm{z}]
\\
&\le 3 \left(
4R^2 A_{n-1}\|\ell\|_{L^{\infty}(\S^{n-1}\times [-R,R])} + \frac{2A_{n-1}}{(2\pi)^n}\|g\|_{F^{n+3}}
\right)\sqrt{\frac{n}{m}}
\\&
+\frac{3}{2} R\left(
2RA_{n-1}\|\ell\|_{L^\infty(\S^{n-1}\times[-R,R])}+\frac{2A_{n-1}}{(2\pi)^n}\|g\|_{F^{n+3}}
\right)\sqrt{\frac{n}{m}}\\
&=3\left(5R^2 \|\ell\|_{L^{\infty}(\S^{n-1}\times[-R,R])}+\left(1+\frac{R}{2}\right)\frac{2}{(2\pi)^n}\|g\|_{F^{n+3}} \right) A_{n-1}\sqrt{\frac{n}{m}}
\\
&\le 
3\left(5R^2 
\left(1+2\frac{1+R}{R^2}+\frac{1}{R}\sqrt{\frac{n \pi}{2}}\right)
\frac{2}{(2\pi)^n}
\|g\|_{F^{n+3}}
+\left(1+\frac{R}{2}\right)\frac{2}{(2\pi)^n}\|g\|_{F^{n+3}} \right) A_{n-1}\sqrt{\frac{n}{m}}
\\
&\le 
\left(15 R^2+32R +19 \sqrt{n}R+33\right) \frac{2A_{n-1}}{(2\pi)^n}\|g\|_{F^{n+3}}\sqrt{\frac{n}{m}}.
\end{align*}

A similar argument gives that
\begin{align*}
4\beta &\le 4 \left(
4R^2 \|\ell\|_{L^{\infty}(\S^{n-1}\times [-R,R])} + \frac{2}{(2\pi)^n}\|g\|_{F^{n+3}}
\right)A_{n-1} \\
&\le
4 \left(
4R^2
\left(1+2\frac{1+R}{R^2}+\frac{1}{R}\sqrt{\frac{n \pi}{2}}\right)
\frac{2}{(2\pi)^n}
\|g\|_{F^{n+3}}
+ \frac{2}{(2\pi)^n}\|g\|_{F^{n+3}}
\right)A_{n-1} \\
&\le \left(
16R^2+32R+21\sqrt{n}R+36
\right)\frac{2A_{n-1}}{(2\pi)^n}\|g\|_{F^{n+3}}.
\end{align*}

Plugging the bounds on $\E[\bm{z}]$ and $4\beta$ into \eqref{eq:functionalHoeffding} gives the result.
\hfill$\blacksquare$

\section{Proof of the  Main Result}
\label{app:proof}

As we will see, the KL estimate has two sources of error: sub-optimality of the parameters found by the algorithm and approximation error due to using our specific random feature expansion. The sub-optimality is quantified in Subsections~\ref{appss:constPf}, ~\ref{appss:normalization} and \ref{appss:cvx}. The approximation error is quantified in Subsection~\ref{appss:klApprox}. These bounds are combined to complete the proof in Subsection~\ref{appss:pf}.

\subsection{Quantities for Optimization Error}
\label{appss:constPf}

The optimization error depends on a variety of quantities based on the geometry of  the domain, the smoothness of the functions, and the variance of the estimates of $z^\star$. These quantities are collected in the lemma below. 

\begin{lemma}
\label{lem:constants}
Let Assumptions~\ref{as:support} and \ref{as:smooth} hold.
Define $\Theta$ by \eqref{eq:constraints}.
\begin{enumerate}
\item  \label{it:DTheta}
If $\theta\in \Theta$, then $\|\theta\|_2\le C_{\Theta}/\sqrt{m}$. Furthermore, the diameter of $\Theta$ is $D_{\Theta}:=2C_{\Theta}/\sqrt{m}.$
\item \label{it:DZ}
Let $\cZ=[e^{-2RC_{\Theta}},e^{2RC_{\Theta}}]$. Then:
\begin{enumerate}
\item  $\cZ$ has  diameter  $e^{2RC_{\Theta}}-e^{-2RC_{\Theta}}\le e^{2RC_{\Theta}}:=D_{\cZ}.$
\item  If $\bm{z}_0\in \cZ$, then $\bm{z}_k\in \cZ$ for all $k\ge 0$.
\end{enumerate}
\item \label{it:zStarLipschitz}
Let $\bm{z}^\star(\theta)=\E_{\Q}[e^{\bm{\phi}(\bm{y})^\top \theta}]$. The function $\bm{z}^\star$ is $L_z$-Lipschitz, where $L_z=2R\sqrt{m}e^{2RC_{\Theta}}.$
\item \label{it:FLipschitz}
For $\zeta=(x,y)$, let $\bm{F}(\theta,z,\zeta)=\bm{\phi}(x)-\frac{1}{z}e^{\bm{\phi}(y)^\top \theta}\bm{\phi}(y)$. 
\begin{enumerate}
\item   For $(\theta,z,\zeta)\in \Theta\times\cZ\times \Omega^2$,  $\|\bm{F}(\theta,z,\zeta)\|_2\le 2R\sqrt{m}(1+e^{4RC_{\Theta}})=:G$
    \item For fixed $(\theta,\zeta)\in \Theta\times \Omega^2$ function $\bm{F}(\theta,\cdot,\zeta)$ is $L_F$-Lipschitz with respect to $z$, where $L_F=2R\sqrt{m}e^{6RC_{\Theta}}$.
\end{enumerate}
\item  \label{it:variance} 
$\E_{\Q}\left[\left(e^{\bm{\phi}(\bm{y})^\top \theta} - \bm{z}^\star(\theta)\right)^2 \right]\le e^{4R C_{\Theta}}=:\nu^2$
\end{enumerate}
\end{lemma}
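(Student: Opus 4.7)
The plan is to derive all five parts from a common set of elementary estimates on $\bm{\phi}(y)$, $\bm{\phi}(y)^\top \theta$, and $e^{\bm{\phi}(y)^\top \theta}$, which are then plugged into each claim. The main obstacle is simply careful bookkeeping of the constants; there is no deep argument here.

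First, I would assemble the building blocks. Since $\bm{w}_i \in \S^{n-1}$, $|\bm{b}_i| \le R$, and $\|y\|_2 \le R$ by Assumption~\ref{as:support}, each ReLU feature satisfies
\[
|\sigma(\bm{w}_i^\top y + \bm{b}_i)| \le |\bm{w}_i^\top y + \bm{b}_i| \le 2R,
\]
so $\|\bm{\phi}(y)\|_2 \le 2R\sqrt{m}$ for all $y \in \Omega$. For Part~\ref{it:DTheta}, the box constraint gives $\|\theta\|_2^2 \le m(C_\Theta/m)^2 = C_\Theta^2/m$, and the diameter $D_\Theta = 2C_\Theta/\sqrt{m}$ follows from $\theta,\theta' \in \Theta \Rightarrow \|\theta - \theta'\|_2 \le \|\theta\|_2 + \|\theta'\|_2$. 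Combining this with Cauchy--Schwarz yields the key uniform bound
\[
|\bm{\phi}(y)^\top \theta| \le 2RC_\Theta \qquad \text{for all } y\in\Omega,\ \theta \in \Theta,
\]
and therefore $e^{\bm{\phi}(y)^\top \theta} \in [e^{-2RC_\Theta}, e^{2RC_\Theta}] = \cZ$.

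For Part~\ref{it:DZ}(a), the diameter is immediate from $e^{2RC_\Theta} - e^{-2RC_\Theta} \le e^{2RC_\Theta}$. For Part~\ref{it:DZ}(b), rewrite the $\bm{z}_k$ update from \eqref{eq:algorithm} as $\bm{z}_{k+1} = (1-\alpha)\bm{z}_k + \alpha\, e^{\bm{\phi}(\bm{y}_k)^\top \bm{\theta}_k}$. Assuming the natural restriction $\alpha \in (0,1)$, this is a convex combination of two elements of $\cZ$ (since $\bm{\theta}_k \in \Theta$ by the projection step and $\bm{y}_k \in \Omega$), and $\cZ$ is an interval, so $\bm{z}_{k+1} \in \cZ$. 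The claim follows by induction from $\bm{z}_0 \in \cZ$. For Part~\ref{it:zStarLipschitz}, differentiate under the integral: $\nabla_\theta \bm{z}^\star(\theta) = \E_\Q[e^{\bm{\phi}(\bm{y})^\top \theta}\bm{\phi}(\bm{y})]$, and bound by $e^{2RC_\Theta} \cdot 2R\sqrt{m} = L_z$ using the uniform bounds on $e^{\bm{\phi}(\bm{y})^\top \theta}$ and $\|\bm{\phi}(\bm{y})\|_2$.

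For Part~\ref{it:FLipschitz}(a), apply the triangle inequality to $\bm{F}(\theta,z,\zeta)$ and use $1/z \le e^{2RC_\Theta}$ (from $z \in \cZ$) together with the two building-block bounds to get
\[
\|\bm{F}(\theta,z,\zeta)\|_2 \le 2R\sqrt{m} + e^{2RC_\Theta}\cdot e^{2RC_\Theta}\cdot 2R\sqrt{m} = 2R\sqrt{m}(1 + e^{4RC_\Theta}) = G.
\]
For Part~\ref{it:FLipschitz}(b), differentiate in $z$: $\partial_z \bm{F}(\theta,z,\zeta) = z^{-2} e^{\bm{\phi}(y)^\top\theta} \bm{\phi}(y)$, whose norm is at most $e^{4RC_\Theta} \cdot e^{2RC_\Theta} \cdot 2R\sqrt{m} = L_F$; integrating along a segment in $\cZ$ gives the Lipschitz bound. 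Finally, Part~\ref{it:variance} follows from the variance-is-bounded-by-second-moment inequality: letting $\bm{X} = e^{\bm{\phi}(\bm{y})^\top \theta}$, one has $\bm{X}^2 \le e^{4RC_\Theta}$, hence $\E_\Q[(\bm{X} - \E_\Q \bm{X})^2] \le \E_\Q[\bm{X}^2] \le e^{4RC_\Theta} = \nu^2$.
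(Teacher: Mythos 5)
Your proof is correct and follows essentially the same route as the paper: the same building-block bounds $\|\bm{\phi}(y)\|_2\le 2R\sqrt{m}$ and $|\bm{\phi}(y)^\top\theta|\le 2RC_\Theta$, the same convex-combination argument for Part~2(b), and equivalent derivations for the remaining parts (the paper bounds the difference $|1/z_1-1/z_2|$ directly where you differentiate in $z$, and phrases Part~5 via the MMSE optimality of the conditional mean rather than the variance-second-moment inequality, but these are the same estimates). You are also right to flag that Part~2(b) implicitly requires $\alpha\in(0,1]$, a condition the paper leaves unstated in the lemma but assumes later.
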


\begin{proof}

(\ref{it:DTheta}): If $\theta\in\Theta$, then 
\begin{align*}
\|\theta\|_2^2&=\sum_{i=1}^m\theta_i^2 \\
&\le m\frac{C_{\Theta}^2}{m^2} = \frac{C_{\Theta}^2}{m},
\end{align*}
with equality achieved by choosing $\theta_i=C_{\Theta}$ for all $i$. The diameter calculation is similar. 

(\ref{it:DZ}): 
For $y\in \Omega\subset B_R$, we have
\begin{equation}
\label{eq:featureSize}
\|\bm{\phi}(y)\|_2^2 = \sum_{i=1}^m \sigma(\bm{w}_i^\top x + \bm{b}_i)^2 \le m (2R)^2.
\end{equation}
Thus, if $\theta\in\Theta$, the Cauchy-Schwarz inequality gives:
\begin{equation}
\label{eq:nnSize}
|\bm{\phi}(y)^\top \theta|\le 2R \sqrt{m} C_{\Theta}/\sqrt{m}= 2R C_{\Theta}.
\end{equation}

Set $\cZ=[e^{-2RC_{\Theta}},e^{2RC_{\Theta}}]$. The diameter calculation for $\cZ$ is immediate. Furthermore, $e^{\bm{\phi}(\bm{y}_{k})^\top \bm{\theta}_k}\in\cZ$ for all $k\ge 0.$

Note that the update rule for $\bm{z}_k$ can be expressed as:
$$
\bm{z}_{k+1}=(1-\alpha)\bm{z}_k+\alpha e^{\bm{\phi}(\bm{y}_k)^\top \bm{\theta}_k},
$$
so that $\bm{z}_{k+1}$ is a convex combination of $\bm{z}_k$ and $e^{\bm{\phi}(\bm{y}_k)^\top \bm{\theta}_k}$. Thus, if $\bm{z}_k$ and $e^{\bm{\phi}(\bm{y}_k)^\top \bm{\theta}_k}$ are both in $\cZ$, we must have that $\bm{z}_{k+1}\in \cZ$. 

(\ref{it:zStarLipschitz}): $z^\star$ is differentiable, with
$$
\nabla \bm{z}^\star(\theta)=\E_{\Q}\left[\bm{\phi}(\bm{y})e^{\bm{\phi}(\bm{y})^\top\theta}\right].
$$
Then using \eqref{eq:featureSize} and \eqref{eq:nnSize} gives 
$$
\|\nabla \bm{z}^\star(\theta)\|_{2}\le 2R\sqrt{m}e^{2R C_{\Theta}}
$$

(\ref{it:FLipschitz}):
\begin{align*}
\|\bm{F}(\theta),z,\zeta)\|_2 &=\left\|
\bm{\phi}(x)-\frac{e^{\bm{\phi}(y)^{\top}\theta}}{z}\bm{\phi}(y)
\right\|_2 \\
&\le \|\bm{\phi}(x)\|_2 + \frac{e^{\bm{\phi}(y)^{\top}\theta}}{z}\|\bm{\phi}(y)\|_2 \\
&\le 2R\sqrt{m}+e^{2RC_{\Theta}}\cdot e^{2RC_{\Theta}} 2R\sqrt{m}\\
&=2R\sqrt{m}\left(1+e^{4RC_{\Theta}}\right).
\end{align*}
\begin{align*}
\|\bm{F}(\theta,z_1,\zeta)-\bm{F}(\theta,z_2,\zeta)\|_2 &=
\left|
\frac{1}{z_1}-\frac{1}{z_2}
\right| \cdot \left\|\bm{\phi}(y)e^{\bm{\phi}(y)^\top \theta} \right\|_2 \\
&\le 2R \sqrt{m}e^{2RC_{\Theta}} \frac{|z_1-z_2|}{z_1 z_2} \\
&\le 2R \sqrt{m}e^{6RC_{\Theta}}.
\end{align*}

(\ref{it:variance}):
Since $\bm{z}^\star(\theta)=\E_{\Q}[e^{\bm{\phi}(\bm{y})^\top\theta}]$ gives the minimum mean-squared error estimate of $e^{\bm{\phi}(\bm{y})^\top \theta}$, conditioned on $(\bm{w},\bm{b})$, we have:
\begin{align*}
\E_{\Q}\left[\left(e^{\bm{\phi}(\bm{y})^\top\theta} -z^\star(\theta)\right)^2\right] &\le 
\E_{\Q}\left[\left(e^{\bm{\phi}(\bm{y})^\top\theta}\right)^2\right] \\
&\le e^{4RC_{\Theta}}  .
\end{align*}
\end{proof}

\subsection{Normalization Constant Estimation Error}
\label{appss:normalization}
For compact notation, iterative updates from \eqref{eq:algorithm} can be expressed as:
\begin{align*}
    \mb{z}_{k+1} &= \mb{z}_k + \alpha(\bm{g}(\bm{\theta}_k, \bm{\zeta}_k) - \mb{z}_k), \\
    \bm{\theta}_{k+1} &= \Pi_{\Theta}\left(\bm{\theta}_k + \alpha r \bm{F}(\bm{\theta}_k, \mb{z}_k, \bm{\zeta}_k)\right),
\end{align*}
where
\begin{align*}
\bm{g}(\theta,\zeta)&=e^{\bm{\phi}(y)^\top\theta} \\
\bm{F}(\theta,z,\zeta)&=\bm{\phi}(x)-\frac{1}{z}e^{\bm{\phi}(y)^\top \theta} \bm{\phi}(y).
\end{align*}

Recall from Lemma~\ref{lem:constants} that $\E[(\bm{g}(\theta,\bm{\zeta})-\bm{z}^\star(\theta))^2|\bm{w},\bm{b}]\le \nu^2$ for all $\theta\in\Theta$.

Let $G\ge \sup_{(\theta,z,\zeta)\in (\Theta,\cZ,\Omega^2)}\bm{F}(\theta,z,\zeta)$ be the upper bound from Lemma~\ref{lem:constants}, where $\cZ$ and $\bm{z}^\star$ were defined in Lemma~\ref{lem:constants}, and $\Theta$ was defined by \eqref{eq:constraints}.

Also recall from Lemma~\ref{lem:constants} that $\bm{z}^\star$ is $L_z$-Lipschitz and $\cZ$ has diameter less than $D_{\cZ}$.

\begin{lemma}
\label{lem:zError}
If $\alpha< 1$, then for all $k\ge 0$
$$
\E[|\bm{z}_k-\bm{z}^\star(\bm{\theta}_k)||\bm{w},\bm{b}]\le 
(1-\alpha)^{k}D_{\cZ} + \alpha r L_z G + \sqrt{\alpha} \nu. 
$$
\end{lemma}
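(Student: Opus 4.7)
The plan is to define $\bm{e}_k:=\bm{z}_k-\bm{z}^\star(\bm{\theta}_k)$, derive a scalar linear recursion for it, unroll, and bound each resulting piece using Lemma~\ref{lem:constants}. Writing the update as the convex combination $\bm{z}_{k+1}=(1-\alpha)\bm{z}_k+\alpha\bm{g}(\bm{\theta}_k,\bm{\zeta}_k)$ with $\bm{g}(\theta,\zeta)=e^{\bm{\phi}(y)^\top\theta}$, subtracting $\bm{z}^\star(\bm{\theta}_{k+1})$, and adding and subtracting $\bm{z}^\star(\bm{\theta}_k)$ yields
\[
\bm{e}_{k+1}=(1-\alpha)\bm{e}_k+\alpha\bm{\xi}_k-\bm{\delta}_k,
\]
where $\bm{\xi}_k:=\bm{g}(\bm{\theta}_k,\bm{\zeta}_k)-\bm{z}^\star(\bm{\theta}_k)$ is the noise in the normalization-constant estimate and $\bm{\delta}_k:=\bm{z}^\star(\bm{\theta}_{k+1})-\bm{z}^\star(\bm{\theta}_k)$ is the drift in $\bm{z}^\star$ across one $\bm{\theta}$-update. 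Unrolling gives
\[
\bm{e}_k=(1-\alpha)^k\bm{e}_0+\alpha\sum_{j=0}^{k-1}(1-\alpha)^{k-1-j}\bm{\xi}_j-\sum_{j=0}^{k-1}(1-\alpha)^{k-1-j}\bm{\delta}_j,
\]
separating the total error into initial, noise, and drift pieces.

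Next I would bound each piece. The initial piece is at most $(1-\alpha)^k D_\cZ$ almost surely, since $\bm{z}_0,\bm{z}^\star(\bm{\theta}_0)\in\cZ$ by Lemma~\ref{lem:constants}(\ref{it:DZ}). For the noise piece, with $\mathcal{F}_j:=\sigma(\bm{w},\bm{b},\bm{\zeta}_0,\ldots,\bm{\zeta}_{j-1})$, the sequence $\{\bm{\xi}_j\}$ is a martingale difference with $\E[\bm{\xi}_j^2\mid \mathcal{F}_j]\le\nu^2$ (Lemma~\ref{lem:constants}(\ref{it:variance})), so orthogonality together with a geometric sum gives the conditional second-moment bound
\[
\E\!\left[\Bigl(\alpha\sum_{j=0}^{k-1}(1-\alpha)^{k-1-j}\bm{\xi}_j\Bigr)^{\!2}\,\Big|\,\bm{w},\bm{b}\right]\le \alpha^2\nu^2\sum_{j=0}^{k-1}(1-\alpha)^{2(k-1-j)}\le \alpha\nu^2,
\]
which Jensen's inequality upgrades to the first-moment bound $\sqrt{\alpha}\,\nu$. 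For the drift piece, non-expansiveness of $\Pi_\Theta$ together with $\|\bm{F}\|_2\le G$ (Lemma~\ref{lem:constants}(\ref{it:FLipschitz})) implies $\|\bm{\theta}_{j+1}-\bm{\theta}_j\|_2\le\alpha r G$, and Lipschitz continuity of $\bm{z}^\star$ (Lemma~\ref{lem:constants}(\ref{it:zStarLipschitz})) then gives the almost-sure bound $|\bm{\delta}_j|\le L_z\alpha r G$; combining this pointwise bound with the geometric weights controls the drift contribution. A final application of the triangle inequality and conditional expectation given $(\bm{w},\bm{b})$ assembles the three pieces into the stated inequality.

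The main obstacle is the drift term: because $\bm{\delta}_j$ depends on $\bm{\zeta}_j$ through the $\bm{\theta}$-update, it is not a martingale difference with respect to $\mathcal{F}_j$, so the second-moment cancellation that yields the $\sqrt{\alpha}$ dependence for the noise is unavailable. One must instead rely on the deterministic pointwise bound on $|\bm{\delta}_j|$, and the bookkeeping that combines this bound with the geometric decay factor $(1-\alpha)^{k-1-j}$ into an expression compatible with the $\alpha r m$-style term of Theorem~\ref{thm:main} is the main technical step.
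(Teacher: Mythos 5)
Your decomposition, recursion, unrolling, and bounds on the initial and noise pieces all reproduce the paper's proof. Your unrolled form, with $\alpha$ weighting only the noise summands and the drift sum $\sum_{j=0}^{k-1}(1-\alpha)^{k-1-j}\bm{\delta}_j$ carrying no extra $\alpha$, is the one that actually follows from the one-step recursion
\[
\bm{e}_{k+1}=(1-\alpha)\bm{e}_k+\alpha\bigl(\bm{g}(\bm{\theta}_k,\bm{\zeta}_k)-\bm{z}^\star(\bm{\theta}_k)\bigr)+\bigl(\bm{z}^\star(\bm{\theta}_k)-\bm{z}^\star(\bm{\theta}_{k+1})\bigr).
\]
Be aware that the paper's displayed unrolled identity pulls the factor $\alpha$ outside the entire sum, applying it to the drift summands as well; that is an algebraic slip, and your version is the correct one.

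That slip matters, and it is exactly where you (rightly) locate the difficulty. Using the correct unrolling, the pointwise bound $|\bm{\delta}_j|\le L_z\,\alpha r G$ together with $\sum_{j=0}^{k-1}(1-\alpha)^{k-1-j}\le 1/\alpha$ gives a drift contribution of $r L_z G$, not the $\alpha r L_z G$ that the lemma asserts; the extra $\alpha$ in the lemma's middle term appears in the paper only because of the spurious $\alpha$ in its unrolled display. So the obstacle you flag as ``the main technical step'' is not a missing trick on your side: the decomposition at hand does not deliver the stated $\alpha r L_z G$ bound, and the honest conclusion of your (and the paper's) argument is
\[
\E\bigl[|\bm{z}_k-\bm{z}^\star(\bm{\theta}_k)|\,\big|\,\bm{w},\bm{b}\bigr]\le (1-\alpha)^k D_{\cZ}+r L_z G+\sqrt{\alpha}\,\nu.
\]
This propagates downstream: the $b_3\,\alpha r m$ term and the joint optimization over $\alpha$ and $r$ in Theorem~\ref{thm:main} rely on the drift term scaling as $\alpha r$, so they would need to be reworked if the correct drift scaling is $r$ alone.
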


\begin{proof}
For all $k\ge 0$:
\begin{align*}
\bm{z}_{k+1}-\bm{z}^\star(\bm{\theta}_{k+1})&=
\bm{z}_{k+1}-\bm{z}^\star(\bm{\theta}_k)+\bm{z}^\star(\bm{\theta}_k)-\bm{z}^\star(\bm{\theta}_{k+1}) \\
&=\left(\bm{z}_k+\alpha (g(\bm{\theta}_k,\bm{\zeta}_k)-\bm{z}_k)-\bm{z}^\star(\bm{\theta}_k) \right)+
\bm{z}^\star(\bm{\theta}_k)-\bm{z}^\star(\bm{\theta}_{k+1})\\
&=(1-\alpha)(\bm{z}_k-\bm{z}^\star(\bm{\theta}_k))+
\alpha (g(\bm{\theta}_k,\bm{\zeta}_k)-\bm{z}^\star(\bm{\theta}_k))+\bm{z}^\star(\bm{\theta}_k)-\bm{z}^\star(\bm{\theta}_{k+1}).
\end{align*}

Iterating this equality gives that
\begin{multline*}
\bm{z}_{k}-\bm{z}^\star(\bm{\theta}_k)=(1-\alpha)^{k}(\bm{z}_0-\bm{z}^\star(\bm{\theta}_0))
+\alpha \sum_{i=0}^{k-1}(1-\alpha)^{k-1-i}\left(
\left(
g(\bm{\theta}_i,\bm{\zeta}_i)-\bm{z}^\star(\bm{\theta}_i)\right)
+\left(
\bm{z}^\star(\bm{\theta}_i)-\bm{z}^\star(\bm{\theta}_{i+1})
\right)
\right).
\end{multline*}

Thus.
\begin{multline}
\label{eq:zErrorAbs}
|\bm{z}_{k}-\bm{z}^\star(\bm{\theta}_k)|=(1-\alpha)^{k}|\bm{z}_0-\bm{z}^\star(\bm{\theta}_0)|\\
+\left|\alpha \sum_{i=0}^{k-1}(1-\alpha)^{k-1-i}\left(
\bm{g}(\bm{\theta}_i,\bm{\zeta}_i)-\bm{z}^\star(\bm{\theta}_i)\right)
\right|
+\alpha \sum_{i=0}^{k-1}
(1-\alpha)^{k-1-i}|
\bm{z}^\star(\bm{\theta}_i)-\bm{z}^\star(\bm{\theta}_{i+1})|.
\end{multline}

We bound the terms on the right individually. Using that $\cZ$ is bounded with diameter less than $D_{\cZ}$ gives
$$
(1-\alpha)^{k}|\bm{z}_0-\bm{z}^\star(\bm{\theta}_0)|\le 
(1-\alpha)^{k}D_{\cZ}.
$$

For the third term, we use that $\bm{z}^\star$ is $L_z$-Lipschitz and that  $\bm{F}$ is bounded to give
\begin{align*}
\alpha \sum_{i=0}^{k-1}
(1-\alpha)^{k-1-i}|
\bm{z}^\star(\bm{\theta}_i)-\bm{z}^\star(\bm{\theta}_{i+1})|&\le 
\alpha^2 r L_z G \sum_{i=0}^{k-1}
(1-\alpha)^{k-1-i}\\
&\le \alpha r L_z G.
\end{align*}

The second term on the right of \eqref{eq:zErrorAbs}, we only bound in expectation.

Using that $\bm{\zeta}_i$ are IID, with $\E\bm{g}[(\bm{\theta}_i,\bm{\zeta}_i)|\bm{\theta}_i,\bm{w},\bm{b}]=\bm{z}^\star(\bm{\theta}_i)$ gives
\begin{align*}
\MoveEqLeft
\E\left[\left|\alpha \sum_{i=0}^{k-1}(1-\alpha)^{k-1-i}\left(
g(\bm{\theta}_i,\bm{\zeta}_i)-z^\star(\bm{\theta}_i)\right)
\right|
\middle|\bm{w},\bm{b}
\right]\\
&\le 
\sqrt{
\E\left[\left|\alpha \sum_{i=0}^{k-1}(1-\alpha)^{k-1-i}\left(
g(\bm{\theta}_i,\bm{\zeta}_i)-z^\star(\bm{\theta}_i)\right)
\right|^2\middle|\bm{w},\bm{b}\right]
} \\
&=\sqrt{
\E\left[
\alpha^2 \sum_{i=0}^{k-1}(1-\alpha)^{2(k-1-i)}
\left(
g(\bm{\theta}_i,\bm{\zeta}_i)-z^\star(\bm{\theta}_i)\right)^2
\middle|\bm{w},\bm{b}\right]
}
\\
&\le\sqrt{
\alpha^2 \nu^2 \sum_{i=0}^{k-1}(1-\alpha)^{2(k-1-i)}
}\\
&\le \sqrt{
\frac{
\alpha^2\nu^2
}{2\alpha-\alpha^2}
}\le \sqrt{\alpha}\nu.
\end{align*}
In the final inequality, we used that $\alpha \le 1$.

The result follows by plugging the various bounds into \eqref{eq:zErrorAbs}.
\end{proof}

\subsection{Convex Optimization Analysis}
\label{appss:cvx}

Recall the constants $D_{\Theta}$, $D_{\cZ}$, $L_F$, $L_z$, $G$, and $\nu$ from Lemma~\ref{lem:constants}.

\begin{lemma}
\label{lem:cvx}
Let $\overline{\bm{\theta}}_T=\frac{1}{T}\sum_{k=0}^{T-1}\bm{\theta}_k$. For all choices of the weights and biases $(\bm{w},\bm{b})$, all $T\ge 1$ and all choices of $\alpha \in (0,1)$ and $r>0$, we have:
\begin{equation*}
\E[\bm{f}(\overline{\bm{\theta}}_T)|\bm{w},\bm{b}]-\min_{\theta\in\Theta}\bm{f}(\theta)\le 
 \frac{L_F D_{\Theta} D_{\cZ}}{\alpha T}+\frac{D_{\Theta}^2}{2\alpha r T}+
\alpha r\left(
L_FL_z D_{\Theta}G+\frac{G^2}{2}
\right)
+\sqrt{\alpha}\nu L_F D_{\Theta}.
\end{equation*}
\end{lemma}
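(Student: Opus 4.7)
The plan is to run a standard projected SGD convergence argument while carefully tracking the bias introduced by the fact that $\bm{z}_k$ only approximates $\bm{z}^\star(\bm{\theta}_k)$. Let $\theta^\star$ be a minimizer of $\bm{f}$ over $\Theta$. First I would use nonexpansiveness of $\Pi_\Theta$ together with the update $\bm{\theta}_{k+1} = \Pi_\Theta(\bm{\theta}_k + \alpha r \bm{F}(\bm{\theta}_k,\bm{z}_k,\bm{\zeta}_k))$ to obtain the one-step inequality
\begin{equation*}
\|\bm{\theta}_{k+1}-\theta^\star\|_2^2 \le \|\bm{\theta}_k - \theta^\star\|_2^2 + 2\alpha r (\bm{\theta}_k-\theta^\star)^\top \bm{F}(\bm{\theta}_k,\bm{z}_k,\bm{\zeta}_k) + (\alpha r)^2 G^2,
\end{equation*}
where the bound on $\|\bm{F}\|_2$ comes from Lemma~\ref{lem:constants}. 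Rearranging isolates the inner product $(\bm{\theta}_k - \theta^\star)^\top (-\bm{F}(\bm{\theta}_k,\bm{z}_k,\bm{\zeta}_k))$.

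Next I would decompose
\begin{equation*}
-\bm{F}(\bm{\theta}_k,\bm{z}_k,\bm{\zeta}_k) = -\bm{F}(\bm{\theta}_k,\bm{z}^\star(\bm{\theta}_k),\bm{\zeta}_k) + \big(\bm{F}(\bm{\theta}_k,\bm{z}^\star(\bm{\theta}_k),\bm{\zeta}_k) - \bm{F}(\bm{\theta}_k,\bm{z}_k,\bm{\zeta}_k)\big).
\end{equation*}
Taking conditional expectation given $(\bm{w},\bm{b})$ and the history through step $k$, the first piece has mean $\nabla \bm{f}(\bm{\theta}_k)$, since plugging the exact normalizer into $\bm{F}$ recovers the unbiased gradient estimator of \eqref{eq:exactGradient}. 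Convexity of $\bm{f}$ (from \eqref{eq:hessian}) then gives $(\bm{\theta}_k-\theta^\star)^\top \nabla \bm{f}(\bm{\theta}_k) \ge \bm{f}(\bm{\theta}_k) - \bm{f}(\theta^\star)$. For the bias piece, the $L_F$-Lipschitz property of $\bm{F}(\theta,\cdot,\zeta)$ in $z$ from Lemma~\ref{lem:constants}, combined with Cauchy--Schwarz and $\|\bm{\theta}_k-\theta^\star\|_2 \le D_\Theta$, yields the bound $L_F D_\Theta\, |\bm{z}_k - \bm{z}^\star(\bm{\theta}_k)|$.

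Combining these pieces and taking full (conditional) expectation produces, for each $k$,
\begin{equation*}
\E[\bm{f}(\bm{\theta}_k)-\bm{f}(\theta^\star)\,|\,\bm{w},\bm{b}] \le \frac{\E[\|\bm{\theta}_k-\theta^\star\|_2^2 - \|\bm{\theta}_{k+1}-\theta^\star\|_2^2\,|\,\bm{w},\bm{b}]}{2\alpha r} + \frac{\alpha r G^2}{2} + L_F D_\Theta\, \E[|\bm{z}_k - \bm{z}^\star(\bm{\theta}_k)|\,|\,\bm{w},\bm{b}].
\end{equation*}
Summing $k = 0,\dots,T-1$ telescopes the quadratic difference into $\tfrac{D_\Theta^2}{2\alpha r}$, and Lemma~\ref{lem:zError} bounds the cumulative $z$-error via $\sum_{k=0}^{T-1}(1-\alpha)^k \le 1/\alpha$, giving $\frac{D_\cZ}{\alpha} + T\alpha r L_z G + T\sqrt{\alpha}\,\nu$. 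Finally, dividing by $T$ and invoking convexity of $\bm{f}$ (Jensen's inequality applied to $\overline{\bm{\theta}}_T = \tfrac{1}{T}\sum_k \bm{\theta}_k$) delivers the stated bound after collecting the $\tfrac{L_F D_\Theta D_\cZ}{\alpha T}$, $\tfrac{D_\Theta^2}{2\alpha r T}$, $\alpha r (L_F L_z D_\Theta G + G^2/2)$, and $\sqrt{\alpha}\,\nu L_F D_\Theta$ contributions.

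The main obstacle is the bias analysis in the second step: unlike standard SGD, the stochastic direction is biased because $\bm{z}_k \ne \bm{z}^\star(\bm{\theta}_k)$ and the $1/z$ nonlinearity prevents a direct unbiased reformulation. The Lipschitz-in-$z$ bound on $\bm{F}$ from Lemma~\ref{lem:constants}~(\ref{it:FLipschitz}) and the tracking bound from Lemma~\ref{lem:zError} are exactly what allow the bias to be quantified cleanly, and their two-timescale interplay is what produces the $\alpha r L_F L_z D_\Theta G$ and $\sqrt{\alpha}\,\nu L_F D_\Theta$ terms that are not present in vanilla analyses.
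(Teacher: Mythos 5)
Your proposal is correct and follows essentially the same route as the paper: non-expansiveness of the projection, the three-way decomposition of the stochastic direction into the exact gradient, the martingale-difference noise at $z^\star(\bm{\theta}_k)$, and the Lipschitz-in-$z$ bias term, followed by Lemma~\ref{lem:zError}, telescoping, and Jensen. The only difference is cosmetic ordering (you derive a per-step bound and then sum/apply Jensen last, while the paper applies Jensen and convexity up front and then bounds the summed inner product), so the arguments are equivalent.
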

\begin{proof}

Let $\bm{\theta}^\star$ be a minimizer of $\bm{f}$ over $\Theta$. (Note that $\bm{\theta}^\star$ is a random variable, since the objective function, $\bm{f}$, depends on the random neural network weights and biases.)

Using convexity of $\bm{f}$ twice gives
\begin{align*}
\bm{f}(\overline{\bm{\theta}}_T)&\le \frac{1}{T}\sum_{k=0}^{T-1}\bm{f}(\bm{\theta}_k) \\
&\le \bm{f}(\bm{\theta}^\star)+\frac{1}{T}\sum_{k=0}^{T-1}
\nabla \bm{f}(\bm{\theta}_k)^\top (\bm{\theta}_k-\bm{\theta}^\star).
\end{align*}

So, it suffices to bound:
\begin{multline}
\label{eq:convexSumExpand}
\sum_{k=0}^{T-1}
\nabla \bm{f}(\bm{\theta}_k)^\top (\bm{\theta}_k-\bm{\theta}^\star)
=-\sum_{k=0}^{T-1}\bm{F}(\bm{\theta}_k,\bm{z}_k,\bm{\zeta}_k)^\top (\bm{\theta}_k-\bm{\theta}^\star)\\
+\sum_{k=0}^{T-1}\left(\bm{F}(\bm{\theta}_k,\bm{z}_k,\bm{\zeta}_k)-
\bm{F}(\bm{\theta}_k,\bm{z}^\star(\bm{\theta}_k),\bm{\zeta}_k)
\right)^\top (\bm{\theta}_k-\bm{\theta}^\star)
\\
+\sum_{k=0}^{T-1}
\left(\bm{F}(\bm{\theta}_k,\bm{z}^\star(\bm{\theta}_k),\bm{\zeta}_k)+
\nabla \bm{f}(\bm{\theta}_k)
\right)^\top (\bm{\theta}_k-\bm{\theta}^\star).
\end{multline}

The third term on the right vanishes in expectation.
We bound the first two terms on the right of \eqref{eq:convexSumExpand} individually.

For the first term, using non-expansiveness of convex projections gives
\begin{align*}
\|\bm{\theta}_{k+1}-\bm{\theta}^\star\|_2^2
&\le \|\bm{\theta}_k-\bm{\theta}^\star\|_2^2+2\alpha r \bm{F}(\bm{\theta}_k,\bm{z}_k,\bm{\zeta}_k)^\top (\bm{\theta}_k-\bm{\theta}^\star) 
+(\alpha r)^2 G^2.
\end{align*}
Here, we used the algorithm definition and the bound on $\bm{F}$.

Recall that $\Theta$ has diameter $D_{\Theta}$. 
Thus,
\begin{align*}
-\sum_{k=0}^{T-1}F(\bm{\theta}_k,\bm{z}_k,\bm{\zeta}_k)^\top (\bm{\theta}_k-\bm{\theta}^\star) &\le \sum_{k=0}^{T-1} \left(\frac{1}{2\alpha r}\left(
\|\bm{\theta}_k-\bm{\theta}^\star\|_2^2
-\|\bm{\theta}_{k+1}-\bm{\theta}^\star\|_2^2
\right) \right)+ \frac{\alpha rT G^2}{2} \\
&\le\frac{D_{\Theta}^2}{2\alpha r}+\frac{\alpha r TG^2}{2}.
\end{align*}

For the second term, we use that $F$ is $L_F$-Lipschitz in $z$ to give:
\begin{align*}
\left|\sum_{k=0}^{T-1}\left(F(\bm{\theta}_k,\bm{z}_k,\bm{\zeta}_k)-
F(\bm{\theta}_k,\bm{z}^\star(\bm{\theta}_k),\bm{\zeta}_k)
\right)^\top (\bm{\theta}_k-\bm{\theta}^\star)
\right|&\le L_F D_{\Theta}\sum_{k=0}^{T-1}|\bm{z}_k-\bm{z}^\star(\bm{\theta}_k)|
\end{align*}

Taking expectations and using Lemma~\ref{lem:zError} gives
\begin{align*}
\E\left[
L_F D_{\Theta}\sum_{k=0}^{T-1}|\bm{z}_k-\bm{z}^\star(\bm{\theta}_k)|
\middle|\bm{w},\bm{b}
\right] &\le 
L_F D_{\Theta}\left(
\alpha r T L_z G + \sqrt{\alpha}\sigma T
+
D_{\cZ}
\sum_{k=0}^{T-1}(1-\alpha)^k
\right) \\
&\le 
L_F D_{\Theta}\left(
\alpha r T L_z G + \sqrt{\alpha}\nu T
+
\frac{D_{\cZ}}{\alpha}
\right).
\end{align*}

Plugging the bounds into \eqref{eq:convexSumExpand} gives
\begin{multline*}
\E\left[
\sum_{k=0}^{T-1}\nabla f(\bm{\theta}_k)^\top(\bm{\theta}_k-\bm{\theta}^\star)
\middle| \bm{w},\bm{b}
\right]\\\le \frac{L_F D_{\Theta} D_{\cZ}}{\alpha}+\frac{D_{\Theta}^2}{2\alpha r}+
\alpha r T\left(
L_FL_z D_{\Theta}G+\frac{G^2}{2}
\right)
+\sqrt{\alpha}T \nu L_F D_{\Theta}.
\end{multline*}
Dividing by $T$ now gives the result.
\end{proof}

\subsection{The Optimal KL Approximation}
\label{appss:klApprox}

Recall that $\kappa$ was the approximation error bound from \eqref{eq:estErrorBound}. Recall that $\bm{f}$ defined in \eqref{eq:objective} is a random function depending on the randomly generated weights and biases, $(\bm{w},\bm{b})$.

\begin{lemma}
\label{lem:approx}
If Assumptions~\ref{as:support} and \ref{as:smooth} hold, then for any $\delta \in (0,1)$ with probability (over the weights and biases) at least $1-
\delta$, the following bound holds:
$$
0\le \left(\min_{\theta\in\Theta}\bm{f}(\theta)\right)+D_{KL}(\Prob||\Q)\le 
\frac{2\kappa}{\sqrt{m}}\left(\sqrt{n}+\sqrt{\log(\delta^{-1})}\right).
$$
\end{lemma}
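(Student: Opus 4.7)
The plan is to bound the gap $\min_{\theta\in\Theta}\bm{f}(\theta)+D_{KL}(\mathbb{P}\|\mathbb{Q})$ from below by zero (using the Donsker--Varadhan variational principle) and from above by exhibiting an explicit near-optimal $\theta^\star\in\Theta$ coming from Proposition~\ref{prop:generalApproximation}, and then tracking how its uniform $L^\infty$ approximation error propagates through the two terms defining $\bm{f}$.

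\textbf{Lower bound.} For any $\theta\in\Theta$, the function $x\mapsto\bm{\phi}(x)^\top\theta$ is a particular choice of $T$ in the Donsker--Varadhan supremum, so $-\bm{f}(\theta)=\mathbb{E}_{\mathbb{P}}[\bm{\phi}(\bm{x})^\top\theta]-\log\mathbb{E}_{\mathbb{Q}}[e^{\bm{\phi}(\bm{y})^\top\theta}]\le D_{KL}(\mathbb{P}\|\mathbb{Q})$. Rearranging and taking the minimum over $\Theta$ gives $\min_{\theta\in\Theta}\bm{f}(\theta)+D_{KL}(\mathbb{P}\|\mathbb{Q})\ge 0$ deterministically.

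\textbf{Upper bound via a constructed near-optimizer.} Assumption~\ref{as:smooth} yields a constant $\xi$ and an extension $g$ of $\log(d\mathbb{P}/d\mathbb{Q})+\xi$ with $\|g\|_{F^{n+3}}\le\rho$. Apply Proposition~\ref{prop:generalApproximation} to $g$: with probability at least $1-\delta$ over $(\bm{w},\bm{b})$, there exist coefficients $\bm{c}_1,\ldots,\bm{c}_m$ with $|\bm{c}_i|\le C_\Theta/m$ such that $\bm{g}_N(x)=\sum_i\bm{c}_i\sigma(\bm{w}_i^\top x+\bm{b}_i)$ satisfies
\[
\|\bm{g}_N-g\|_{L^\infty(B_R)}\le \varepsilon:=\frac{\kappa}{\sqrt{m}}\bigl(\sqrt{n}+\sqrt{\log(\delta^{-1})}\bigr).
\]
By the definition \eqref{eq:constraints} of $\Theta$, the vector $\bm{\theta}^\star=(\bm{c}_1,\ldots,\bm{c}_m)^\top$ lies in $\Theta$, and $\bm{\phi}(x)^\top\bm{\theta}^\star=\bm{g}_N(x)$. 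Since $\Omega\subset B_R$ by Assumption~\ref{as:support}, for every $x\in\Omega$ we have $|\bm{\phi}(x)^\top\bm{\theta}^\star-\log(d\mathbb{P}/d\mathbb{Q}(x))-\xi|\le\varepsilon$.

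\textbf{Propagating the error through $\bm{f}(\bm{\theta}^\star)$.} The linear term gives
\[
\mathbb{E}_{\mathbb{P}}[\bm{\phi}(\bm{x})^\top\bm{\theta}^\star]\ge \mathbb{E}_{\mathbb{P}}[\log(d\mathbb{P}/d\mathbb{Q}(\bm{x}))]+\xi-\varepsilon=D_{KL}(\mathbb{P}\|\mathbb{Q})+\xi-\varepsilon.
\]
For the log-moment-generating term, the $L^\infty$ bound yields $e^{\bm{\phi}(y)^\top\bm{\theta}^\star}\le e^{\varepsilon+\xi}(d\mathbb{P}/d\mathbb{Q})(y)$ for $y\in\Omega$, and using $\mathbb{E}_{\mathbb{Q}}[d\mathbb{P}/d\mathbb{Q}]=1$ gives $\log\mathbb{E}_{\mathbb{Q}}[e^{\bm{\phi}(\bm{y})^\top\bm{\theta}^\star}]\le\varepsilon+\xi$. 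Subtracting,
\[
\bm{f}(\bm{\theta}^\star)\le -D_{KL}(\mathbb{P}\|\mathbb{Q})+2\varepsilon,
\]
and the constant $\xi$ cancels. Since $\min_{\theta\in\Theta}\bm{f}(\theta)\le\bm{f}(\bm{\theta}^\star)$, the stated upper bound $2\varepsilon=\tfrac{2\kappa}{\sqrt m}(\sqrt{n}+\sqrt{\log(\delta^{-1})})$ follows on the same high-probability event.

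\textbf{Main obstacle.} The only nontrivial step is turning a uniform $L^\infty$ approximation of $g$ into a two-sided bound on $-\mathbb{E}_{\mathbb{P}}[\cdot]+\log\mathbb{E}_{\mathbb{Q}}[e^{\cdot}]$; the key observation that makes this clean is that $\xi$ is an arbitrary additive constant, so after exponentiating and using $\mathbb{E}_{\mathbb{Q}}[d\mathbb{P}/d\mathbb{Q}]=1$, the $\xi$ contributions on the linear side and inside the log cancel exactly, leaving only the $\varepsilon$-deviation doubled.
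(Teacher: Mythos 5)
Your proof is correct and follows essentially the same route as the paper: lower bound from the Donsker--Varadhan characterization, upper bound by applying Proposition~\ref{prop:generalApproximation} to the extension $g$ from Assumption~\ref{as:smooth} to produce a $\tilde{\theta}\in\Theta$, then propagating the uniform error through both terms of $\bm{f}$ to get the factor of $2\epsilon$. You merely make explicit the steps the paper compresses into its final line, namely that $\log\E_{\Q}[e^{g(\bm{y})+\epsilon}] = \epsilon + \xi + \log\E_{\Q}[d\Prob/d\Q] = \epsilon + \xi$ and that the additive $\xi$ cancels against the same $\xi$ in $\E_{\Prob}[g(\bm{x})] = D_{KL}(\Prob\|\Q)+\xi$.
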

\begin{proof}
The lower bound follows from the Donsker-Varadhan variational characterization:
\begin{align*}
D_{KL}(\Prob||\Q)&= \sup_{T: \Omega \to \R} \left(\E[T(\bm{x})] - \log(\E[e^{T(\bm{y})}]))\right)\\
&\ge \max_{\theta\in\Theta}\left(\E[\bm{\phi}(\bm{x})^\top \theta|\bm{w},\bm{b}]-\log\left(
\E\left[e^{\bm{\phi}(\bm{y})^\top \theta}\middle| \bm{w},\bm{b}\right]
\right)
\right)\\
&=-\min_{\theta\in\Theta}\bm{f}(\theta).
\end{align*}

The upper bound requires a bit more work. By Assumption~\ref{as:smooth}, there is a function $g:\R^n\to \R$ and a constant $\xi$ such that $g(x)=\log\left(\frac{d\Prob}{d\Q}(x)\right)+\xi$ for all $x\in\Omega$ and $\|g\|_{F^{n+3}}\le \rho$. Furthermore, Assumption~\ref{as:support} implies that Proposition~\ref{prop:generalApproximation} can be used to bound the approximation error of $g(x)$ using a random feature expansion from \eqref{eq:randomFeature}. Namely, there must be a parameter vector $\tilde{\bm{\theta}}\in \Theta$ such that for all $\delta \in (0,1) $
$$
\|\bm{\phi}(\cdot)^\top\tilde{\bm{\theta}}-g \|_{L^{\infty}(B_R)}\le \frac{\kappa}{\sqrt{m}}\left(\sqrt{n}+\sqrt{\log(\delta^{-1})}\right)=:\epsilon.
$$

Then
\begin{align*}
\min_{\theta\in\Theta}\bm{f}(\theta)&\le \bm{f}(\tilde{\bm{\theta}}) \\
&=-\E[\bm{\phi}(\bm{x})^\top \tilde{\bm{\theta}}|\bm{w},\bm{b}]+\log\left(
\E\left[e^{\bm{\phi}(\bm{y})^\top \tilde{\bm{\theta}}}\middle| \bm{w},\bm{b}\right]\right) \\
&\le -\E[g(\bm{x})]+\epsilon+\log\left(\E\left[e^{g(\bm{y})+\epsilon}\right]\right) \\
&=-
D_{KL}(\Prob||\Q)+2\epsilon. 
\end{align*}
\end{proof}

\subsection{Proof of  Theorem~\ref{thm:main}}
\label{appss:pf}
Let $\epsilon_{\mathrm{opt}}$ be the optimization error from Lemma~\ref{lem:cvx} and let $\epsilon_{\mathrm{approx}}$ be the approximation error from Lemma~\ref{lem:approx}:
\begin{align*}
    \epsilon_{\mathrm{opt}}&=\frac{L_F D_{\Theta} D_{\cZ}}{\alpha T}+\frac{D_{\Theta}^2}{2\alpha r T}+
\alpha r\left(
L_FL_z D_{\Theta}G+\frac{G^2}{2}
\right)
+\sqrt{\alpha}\nu L_F D_{\Theta}\\
\epsilon_{\mathrm{approx}}&=\frac{2\kappa}{\sqrt{m}}\left(\sqrt{n}+\sqrt{\log(\delta^{-1})}\right).
\end{align*}

Let $\bm{\theta}^\star$ be a minimizer of $\bm{f}$ over $\Theta$. 
Using the Donsker-Vardhan variational characterization, followed by Lemmas~\ref{lem:cvx} and \ref{lem:approx} gives, with probability at least $1-\delta$, with respect to the choice of $\bm{w}$ and $\bm{b}$:
\begin{align*}
0&\le \E[\bm{f}(\overline{\bm{\theta}}_T)|\bm{w},\bm{b}]+D_{KL}(\Prob||\Q) \\
&\le \left(
\E[\bm{f}(\overline{\bm{\theta}}_T)|\bm{w},\bm{b}]-\bm{f}(\bm{\theta}^\star)
\right) 
+\left(
\bm{f}(\bm{\theta}^\star)+
D_{KL}(\Prob||\Q)
\right)\\
&\le \epsilon_{\mathrm{opt}}+\epsilon_{\mathrm{approx}}.
\end{align*}
The first statement now follows by plugging in the definitions of $\epsilon_{\mathrm{opt}}$ and $\epsilon_{\mathrm{approx}}$, and then further separating the dependence of $\epsilon_{\mathrm{opt}}$ on $m$ via the expressions from Lemma~\ref{lem:constants}.

For the second statement, we optimize the parameters defining $\epsilon_{\mathrm{opt}}$. 
In particular, we can write $\epsilon_{\mathrm{opt}}$  in the form:
\begin{equation*}
\epsilon_{\mathrm{opt}}=a_1 \alpha^{-1}+a_2 (\alpha r)^{-1}+a_3 (\alpha r)+a_4 \alpha^{1/2}.
\end{equation*}

In terms of the quantities from the lemma statement:
\begin{align}
\label{eq:constRelation}
a_1 = \frac{b_1}{T}, \quad a_2 = \frac{b_2}{Tm}, \quad a_3 = b_3 m, \quad a_4 = b_4.
\end{align}

Optimizing first over $r>0$ gives 
$$
r=\alpha^{-1}\sqrt{\frac{a_2}{a_3}},
$$
leading to 
$$
\epsilon_{\mathrm{opt}}=a_1 \alpha^{-1}+2\sqrt{a_2 a_3}+a_4 \alpha^{1/2}.
$$

Now, optimizing over $\alpha$ gives:
\begin{equation*}
\alpha = \left(\frac{2 a_1}{a_4}\right)^{2/3},
\end{equation*}
leading to 
$$
\epsilon_{\mathrm{opt}}=\left(2^{-2/3}+2^{1/3}\right)a_1^{1/3}a_4^{2/3}+2\sqrt{a_2a_3}.
$$

So, to compute more explicit values of $\alpha$, $r$, and $\epsilon_{\mathrm{opt}}$,
we  plug in various definitions given in Lemma~\ref{lem:constants} and Equation~\ref{eq:constRelation}:
\begin{align*}
\alpha &=\left(\frac{2\left(\frac{L_FD_{\Theta}D_{\cZ}}{T}\right)}{\nu L_F D_{\Theta}} \right)^{2/3} \\
&=\left(\frac{2D_{\cZ}}{\nu T}\right)^{2/3}\\
&=
\left(\frac{2e^{2RC_{\Theta}}}{Te^{2RC_{\Theta}}}\right)^{2/3}\\
&=\left(\frac{2}{T}\right)^{2/3},
\end{align*}
\begin{align*}
r&=\left(\frac{T}{2}\right)^{2/3}
\left(\frac{b_2}{b_3 Tm^2}\right)^{1/2}\\
&= \frac{T^{1/6}}{m } 2^{-2/3} \sqrt{\frac{b_2}{b_3}},
\end{align*}
and
\begin{align*}
\epsilon_{\mathrm{opt}}&=
\left(2^{-2/3}+2^{1/3}\right)\left(\frac{b_1}{T}\right)^{1/3}\left(b_4\right)^{2/3}+2\sqrt{\frac{b_2 b_3}{T}}.
\end{align*}

\hfill$\blacksquare$

\end{document}